\documentclass[11pt,a4paper]{article}

\usepackage[T1]{fontenc}
\usepackage[utf8]{inputenc}
\usepackage{lmodern}
\usepackage{geometry}
\usepackage{microtype}
\usepackage{bm}
\usepackage{indentfirst}
\usepackage{graphicx}
\usepackage{multirow}
\usepackage{makecell}
\usepackage{array}
\usepackage{booktabs}
\usepackage{algorithm}
\usepackage{algorithmic}
\usepackage{csquotes}
\usepackage{subcaption}
\usepackage{amsmath,amssymb,amsfonts,amsthm,mathtools}
\usepackage[numbers,sort&compress]{natbib}
\usepackage{xcolor}
\usepackage{hyperref}
\hypersetup{colorlinks=true,citecolor=blue,linkcolor=blue,urlcolor=blue}
\graphicspath{{figures/}}

\theoremstyle{plain}
\newtheorem{theorem}{Theorem}[section]
\newtheorem{lemma}[theorem]{Lemma}
\newtheorem{corollary}[theorem]{Corollary}
\newtheorem{proposition}[theorem]{Proposition}

\newtheorem{assumption}[theorem]{Assumption}
\theoremstyle{definition}

\theoremstyle{remark}

\begin{document}

\title{Meta-LinEXP3: Online-within-Online Learning for Adversarial Linear Contextual Bandits}
\author{Hao Li, Jie Xu, Zheng Xie\thanks{Corresponding author: \texttt{xiezheng81@nudt.edu.cn}.} \\[0.5em]
\small College of Science, National University of Defense Technology, Changsha 410073, China}
\date{}
\maketitle

\begin{abstract}
Meta-learning has emerged as an effective paradigm for transferring knowledge across sequential bandit tasks. While substantial progress has been made for stochastic bandits and non-contextual adversarial bandits, meta-learning for adversarial linear contextual bandits (ALCBs) with random action sets remains largely unexplored. To address this problem, we propose Meta-LinEXP3, an online-within-online algorithm that constructs a predictable task-level prior from completed tasks to guide the inner LinEXP3 learner. For known context distributions, we develop a policy-centered estimator that achieves an intrinsic-dimension $\mathcal{O}(\sqrt{n})$ per-task regret bound. For unknown distributions, we introduce a past-only regularized moment estimator with an $\mathcal{O}(n^{2/3})$ leading regret term and explicit finite-sample error. We further establish a direct connection between prior accuracy and transfer regret, showing that increasingly accurate priors yield sublinear transfer-dependent regret across tasks. Experiments demonstrate the effectiveness of Meta-LinEXP3, including its application to structured hyperspectral tensor sampling.
\end{abstract}

\noindent\textbf{Keywords:} Meta-learning, adversarial linear contextual bandits, online-within-online learning, LinEXP3, tensor sampling

\section{Introduction}
Contextual bandits model sequential decision problems in which the learner observes context-dependent actions and receives feedback only for the selected action \cite{CMAB1,CMAB2,CMAB3}. They are widely used for applications such as treatment selection \cite{app1}, personalized recommendation \cite{app2}, and online advertising \cite{CMAB2}. In the stochastic setting, losses are generated from a stationary model. In the adversarial setting, the loss sequence may vary arbitrarily over time.

We focus on adversarial linear contextual bandits (ALCBs). Each action is represented by a context vector, and its loss is the inner product between that context and a loss vector chosen by the environment. Apart from the geometric and non-anticipation conditions stated below, we impose no stationary model on the loss sequence.

A central limitation of the single-task formulation is that many applications naturally generate a sequence of related decision problems. Recommendation systems serve different user cohorts, pricing policies are deployed over successive sales cycles, and clinical studies may involve multiple patient groups. Restarting an ALCB learner on every task discards information that could be useful when task horizons are short. Existing ALCB methods \cite{ALCB,Liu} analyze one task at a time and do not address this form of cross-task transfer.

Meta-learning \cite{LTL} provides a natural framework for exploiting repeated structure across tasks. In online-within-online meta-learning \cite{OWO,OWO2,O_22}, an outer learner updates shared information from completed tasks, while an inner learner makes the round-by-round decisions within each task.

\textbf{Contributions.} Building on \cite{ALCB,K,O}, we study an online-within-online ALCB problem with $m$ tasks of $n$ rounds under a shared context distribution. The loss vector at a round may depend on the entire preceding history, including earlier tasks. The only freshness requirement is on the current context set. Our main contributions are:
\begin{enumerate}
    \item We introduce Meta-LinEXP3, which couples an outer task-level transfer mechanism with an inner adversarial LinEXP3 learner.
    \item We propose positive-cosine retrieval weighting (PCRW) and use a uniform baseline for comparison. Both form a predictable prior from completed task summaries and keep it fixed throughout the next task.
    \item For known context distributions, PC-KDE gives an intrinsic-dimension $\mathcal O(\sqrt n)$ per-task bound. For unknown distributions, PRME gives an $\mathcal O(n^{2/3})$ leading term with an explicit moment estimation error. For structured sequences of oblivious tasks, the stated margin and prior-accuracy conditions imply sublinear transfer-dependent terms in the number of tasks.
    \item We evaluate the method on bounded synthetic tasks, a controlled PC-KDE/PRME/LPE comparison, MovieLens recommendation, and structured KSC tensor sampling, with experiments designed to separate transfer effects from estimator behavior.
\end{enumerate}
These guarantees rely on the fixed-within-task prior and on the specific estimator properties used in the analysis. A different inner estimator requires a separate verification of selection correction, score range, variance, and bias.

Section 2 reviews related work, Section 3 formalizes the problem, Sections 4--5 present the algorithm and regret analysis, and Section 6 reports the experiments. Section 7 concludes, and complete proofs are given in Appendix~\ref{app:proofs}.

\section{Related Work}
\textbf{Adversarial Linear Contextual Bandits.} Early computationally efficient ALCB methods, including ROBUST LinEXP3 and REAL LinEXP3, assume a known context distribution \cite{ALCB}. Subsequent work obtained refined data-dependent guarantees \cite{FS}. When the context distribution is unknown, Liu et al. \cite{Liu} achieve $\widetilde{\mathcal O}(d^2\sqrt n)$ regret without a simulator using lifted log-determinant FTRL, while van Erven et al. \cite{vanErven2025} obtain $\widetilde{\mathcal O}(\min\{d^2\sqrt n,\sqrt{d^3n\log k}\})$ through a reduction to misspecification-robust adversarial linear bandits. These single-task rates are sharper than the general $n^{2/3}$ rate obtained here for PRME. Our objective is different: PRME is based on a policy-independent streaming second moment, which makes cross-task context reuse and the dependence on the meta-prior explicit in the regret bound. Neu et al. \cite{NeuKernel2024} further extend adversarial contextual-bandit methods to reproducing-kernel Hilbert spaces.

\textbf{Meta-Learning for Bandit Problems.}
Most work on meta-learning for bandits assumes stochastic rewards. Representative examples include stochastic linear-bandit meta-learning \cite{O_11}, meta-learned exploration--exploitation strategies \cite{O_21}, Meta-Thompson Sampling \cite{O_19}, prior-update methods \cite{O_25}, transfer through shared affine subspaces \cite{Bilaj2024}, classification-based meta-learning \cite{Mutti2025}, and representation-based linear-bandit methods \cite{O_28,O_6,O_24}. Their guarantees do not extend directly to adversarial within-task loss sequences. In the adversarial setting, Khodak et al. \cite{K} develop online-within-online meta-algorithms for multi-armed bandits and bandit linear optimization, while Osadchiy et al. \cite{O} exploit non-uniform optimal-action distributions across tasks. Neither analysis covers the random-action-set ALCB model considered here.

Existing adversarial online-within-online guarantees \cite{K,O} focus on fixed-arm multi-armed bandits or bandit linear optimization and do not cover the repeated random-action-set ALCB model formalized in Section~\ref{sec2_1}. This setting introduces two additional difficulties: both the comparator action and the sampling design depend on the current random context set, and loss vector estimation must account for the resulting selection bias. Margin assumptions have also been used in contextual-bandit theory to weaken uniform gap conditions. For example, Kato and Ito \cite{KatoIto2025} obtain intermediate best-of-both-worlds rates under a margin condition. We use a margin condition for a more specific purpose: it controls the Laplace transform of the random action gap and converts prior-prediction accuracy into transfer complexity. The cross-task analysis allows arbitrary loss sequences fixed before each task. Without predictable task structure, however, total regret can still grow linearly in the number of tasks.

\begin{figure}[t]
    \centering
    \includegraphics[width=0.95\linewidth]{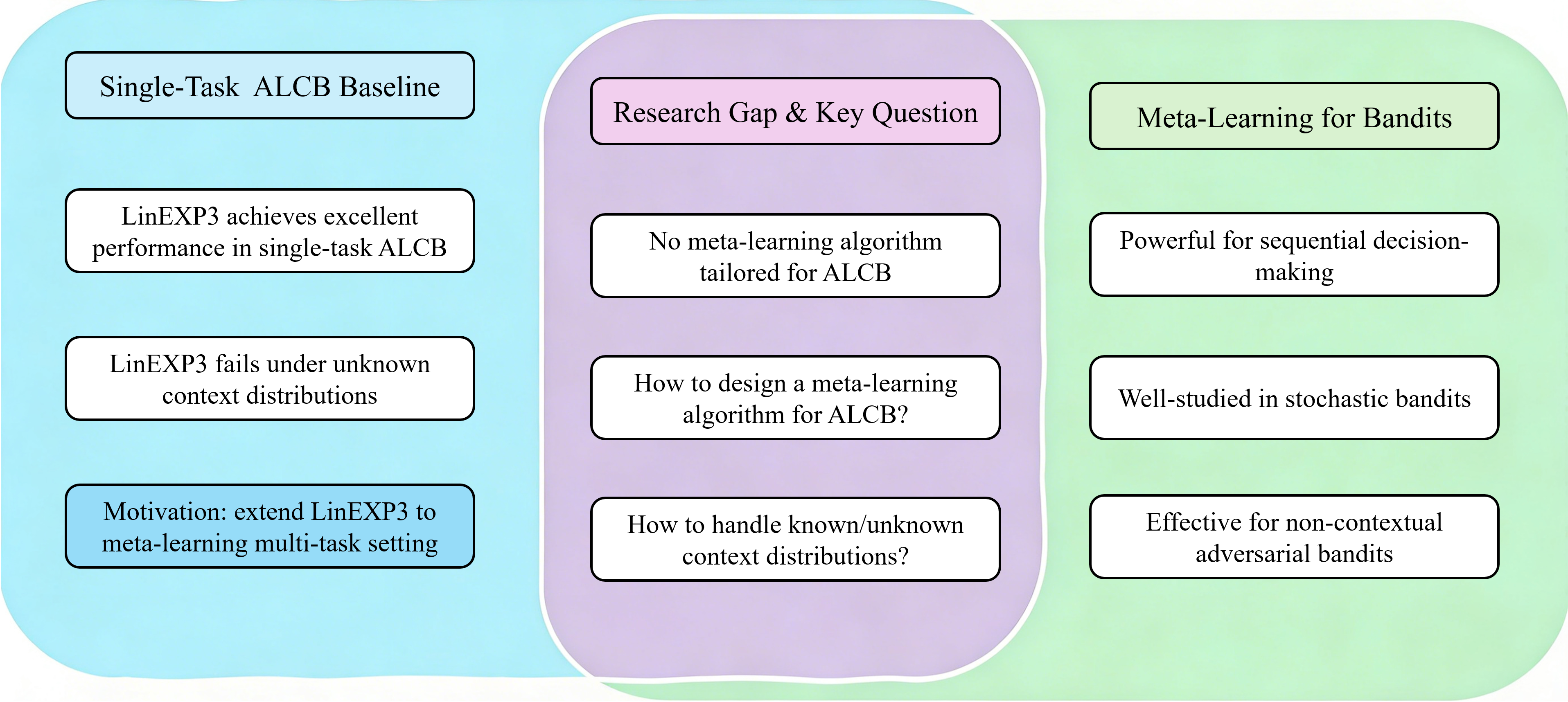}
    \caption{Relationship between Meta-LinEXP3 and closely related bandit settings. The left column summarizes the single-task LinEXP3 starting point, the middle column gives the ALCB meta-learning problem studied here, and the right column lists related meta-bandit settings.}
    \label{fig:research-gap}
\end{figure}

\section{Preliminaries}

\subsection{Notation}\label{sec:notation}
For any positive integer $q$, let $[q]=\{1,\ldots,q\}$. The global problem dimensions are the number of tasks $m\ge1$, rounds per task $n\ge1$, actions per round $k\ge2$, and context dimension $d\ge1$. Here, $T=mn$ is the total number of task--round pairs. We use $s\in[m]$, $t\in[n]$, and $a\in[k]$ for task, round, and action indices, respectively. Indices $i$ and $j$ refer to earlier tasks or rounds when their ranges are displayed. Bold lowercase and uppercase letters denote vectors and matrices, calligraphic letters denote sets, distributions, sigma-fields, or tensors as specified locally, and ordinary letters denote scalars unless stated otherwise. Superscript $T$ is transpose and $\dagger$ is the Moore--Penrose pseudoinverse. We write $\boldsymbol0_d$, $\boldsymbol1_d$, $\boldsymbol I_d$, and $\boldsymbol e_a$ for the $d$-dimensional zero vector, the $d$-dimensional all-ones vector, the $d\times d$ identity matrix, and the relevant standard basis vector, with the dimension omitted only when it is fixed by context.

The notation $\langle\cdot,\cdot\rangle$ denotes the Euclidean inner product. The symbol $\|\cdot\|$ is the Euclidean norm for vectors and the operator norm for matrices, $\|\cdot\|_F$ is the Frobenius norm, and $\|\boldsymbol x\|_{\boldsymbol A}=\sqrt{\boldsymbol x^T\boldsymbol A\boldsymbol x}$ for a positive semidefinite matrix $\boldsymbol A$. For symmetric matrices, $\boldsymbol A\succeq\boldsymbol B$ means that $\boldsymbol A-\boldsymbol B$ is positive semidefinite. We use $\lambda_{\min}(\boldsymbol A)$, $\operatorname{tr}(\boldsymbol A)$, $\operatorname{range}(\boldsymbol A)$, and $\operatorname{rank}(\boldsymbol A)$ for the minimum eigenvalue, trace, column space, and rank. The operators $\operatorname{Cov}$, $\operatorname{diag}$, and $\operatorname{chol}$ denote covariance, diagonalization of a vector, and a Cholesky factor, respectively. For a set $S$, $\operatorname{span}(S)$ is its linear span, and $\operatorname{supp}(\mathcal D)$ is the topological support of distribution $\mathcal D$.

For a subspace $\mathcal V\subseteq\mathbb R^d$, $\boldsymbol P_{\mathcal V}$ is the orthogonal projector and $\boldsymbol A|_{\mathcal V}$ is the restriction of $\boldsymbol A$ to $\mathcal V$. For $\boldsymbol b\in\mathbb R^d$, set $\boldsymbol P_{\boldsymbol b}=\boldsymbol b\boldsymbol b^T/\|\boldsymbol b\|^2$ when $\boldsymbol b\ne\boldsymbol0_d$ and $\boldsymbol P_{\boldsymbol0_d}=\boldsymbol0_{d\times d}$. We write $x_+=\max\{x,0\}$ and $\mathbf1\{E\}$ for the positive part of $x$ and the indicator of event $E$. A subscripted symbol such as $\mathbf1_L$ denotes the incidence vector of a finite set $L$ in the stated ambient space. All $\operatorname*{arg\,min}$ expressions use a fixed measurable tie-breaking rule.

For an event $E$, $E^c$, $\mathbb P(E)$, and $\mathbb E$ denote its complement, probability, and expectation. A subscript on $\mathbb E$ specifies the variable being integrated out. The notation $\sigma(\cdot)$ denotes a generated sigma-field and $\mathcal G\vee\mathcal H$ its join. Finally, $\mathcal O(\cdot)$, $\widetilde{\mathcal O}(\cdot)$, and $o(\cdot)$ have their standard asymptotic meanings, with $\widetilde{\mathcal O}$ suppressing logarithmic factors.

\subsection{Online-within-Online ALCB Problem}\label{sec2_1}
We consider $m$ sequential ALCB tasks. In task $s\in[m]$, the learner and an adversarial environment interact for $n$ rounds. At round $t\in[n]$:
\begin{enumerate}
\item The environment selects a loss vector $\boldsymbol{\theta}_{s,t} \in \mathbb{R}^d$.
\item Independently of $\boldsymbol{\theta}_{s,t}$, the environment samples $k$ context vectors from the context distribution $\mathcal{D}$ to form  $\mathcal{B}_{s,t} = \left\{ \boldsymbol{b}_{s,t,a} \right\}_{a=1}^{k} \subset \mathbb{R}^d$, and fully reveals $\mathcal{B}_{s,t}$ to the learner.
\item Conditioned on the observed set $\mathcal{B}_{s,t}$, the learner selects an action $A_{s,t} \in [k]$, allowing randomized decision rules.
\item The learner incurs a loss corresponding to the selected action, which is given by $\ell_{s,t,A_{s,t}} = \langle \boldsymbol{b}_{s,t,A_{s,t}}, \boldsymbol{\theta}_{s,t} \rangle$.
\end{enumerate}

Here $\mathcal D$ is the common context distribution. We write
$\boldsymbol b\sim\mathcal D$ for a generic context and
$\mathcal B=\{\boldsymbol b_a\}_{a=1}^k\sim\mathcal D^k$ for an independent ordered context set.

We measure performance by expected pseudo-regret relative to a policy fixed independently of the realized context trajectory. Let $\Pi$ be a prescribed class of measurable mappings from a context set $\mathcal{B}$ to an action in $[k]$. For $\boldsymbol w\in\mathbb R^d$, let $\pi_{\boldsymbol w}(\mathcal B)$ be the tie-broken minimizer of $\langle\boldsymbol b_a,\boldsymbol w\rangle$ over $a\in[k]$. For every fixed $\pi\in\Pi$, define
\begin{equation}\label{eq:policy-regret}
 \mathrm{Reg}_s(\pi)=
 \mathbb{E}\!\left[\sum_{t=1}^n
 \left\langle
 \boldsymbol b_{s,t,A_{s,t}}-
 \boldsymbol b_{s,t,\pi(\mathcal B_{s,t})},
 \boldsymbol\theta_{s,t}
 \right\rangle\right],
 \qquad
 \mathrm{Reg}_s=\sup_{\pi\in\Pi}\mathrm{Reg}_s(\pi).
\end{equation}
The supremum is taken outside the expectation. This distinction prevents the comparator from being chosen after observing the realized context trajectory: an unrestricted post-hoc mapping could otherwise memorize the observed sets and behave as a dynamic oracle. We also write $\mathrm{Reg}_{1:m}=\sum_{s=1}^m\mathrm{Reg}_s$.

To state the non-anticipation condition precisely, let the pre-context history be
\begin{equation}\label{eq:pre-filtration}
\mathcal H_{s,t}=\sigma\!\left(
\mathcal F_{1:s-1},
\mathcal B_{s,j},A_{s,j},\ell_{s,j,A_{s,j}}:j<t
\right),
\end{equation}
where $\mathcal F_{1:s-1}$ is generated by all observations from tasks $1,\ldots,s-1$, with $\mathcal F_{1:0}$ denoting the trivial sigma-field. The adversary may adapt to $\mathcal H_{s,t}$, and $\boldsymbol\theta_{s,t}$ is $\mathcal H_{s,t}$-measurable. Conditional on $\mathcal H_{s,t}$, the set $\mathcal B_{s,t}\sim\mathcal D^k$ is fresh and independent of $\boldsymbol\theta_{s,t}$. Before this set arrives, the learner fixes an $\mathcal H_{s,t}$-measurable random Markov kernel $\mathcal B\mapsto p_{s,t}(\cdot\mid\mathcal B)$ from $(\mathbb R^d)^k$ to the probability simplex on $[k]$. After the set is revealed, the information is $\mathcal F_{s,t}=\mathcal H_{s,t}\vee\sigma(\mathcal B_{s,t})$, and $A_{s,t}$ is sampled from $p_{s,t}(\cdot\mid\mathcal B_{s,t})$. We write $\mathbb E^{\mathrm{pre}}_{s,t}[\cdot]=\mathbb E[\cdot\mid\mathcal H_{s,t}]$ and $\mathbb E_{s,t}[\cdot]=\mathbb E[\cdot\mid\mathcal F_{s,t}]$. All unbiasedness statements below are made at the pre-context conditioning time.

\subsection{The LinEXP3 Algorithm}
We use LinEXP3 \cite{ALCB} as the inner learner. At each round, LinEXP3 forms exponential weights from the current context set and the loss estimates accumulated so far, then mixes the resulting distribution with uniform exploration of total mass $\gamma$. After an action is sampled and its loss is observed, the corresponding loss vector estimate is incorporated into subsequent updates.

\section{The Meta-LinEXP3 Algorithm}
Meta-LinEXP3 augments the inner LinEXP3 learner with a task-level prior. Before task $s$, the outer learner maps completed-task summaries to a predictable vector $\boldsymbol h_s$ and keeps the induced prior fixed throughout the task. This design removes within-task prior-variation terms from the regret analysis. The analysis extends to another loss estimator only after verifying the corresponding conditions in Section~\ref{sec:theory}.

For the shared context distribution, define
\begin{equation}\label{eq:context-geometry}
 \bar{\boldsymbol b}=\mathbb E[\boldsymbol b],\qquad
 \boldsymbol C=\operatorname{Cov}(\boldsymbol b),\qquad
 \mathcal U=\operatorname{span}\{\boldsymbol b-\boldsymbol b':
 \boldsymbol b,\boldsymbol b'\in\operatorname{supp}(\mathcal D)\}
 =\operatorname{range}(\boldsymbol C).
\end{equation}
All feasible context differences lie in $\mathcal U$, so only this subspace can affect pairwise action comparisons.

\subsection{Overall Algorithm Structure}
Before task $s$, the learner constructs $\boldsymbol h_s$ from summaries of tasks $1,\ldots,s-1$. During the task, the softmax prior induced by $\boldsymbol h_s$ is combined with the cumulative within-task loss estimate after each context set is observed. At the end of the task, the mean loss estimate and its regularized direction are stored for future transfer. Algorithm~\ref{alg1} gives the common outer--inner procedure. Estimator-specific inputs are described in Section~\ref{sec:estimators}.

\begin{algorithm}[t]
\caption{Meta-LinEXP3}
\begin{algorithmic}[1]
\REQUIRE $m,n,k$, task-dependent parameters $\eta_s>0$, $\mu_s\ge0$, and $\gamma_s\in(0,1)$, direction ridge $\varepsilon_\Theta>0$, cosine smoothing $\tau>0$, and one estimator with its inputs from Section~\ref{sec:estimators}.
\STATE Set $\boldsymbol h_1=\boldsymbol0_d$ and $\widehat{\boldsymbol\Theta}_0=\widehat{\boldsymbol v}_0=\boldsymbol0_d$.

\FOR{ task $s=1,2,\ldots,m$}
  \STATE If $s\ge2$, compute the fixed vector $\boldsymbol h_s$ using PCRW \eqref{eqw1} or uniform aggregation \eqref{eqw2}. Retain $\boldsymbol h_1=\boldsymbol0_d$. Then set
  \[
  r_s(a\mid\mathcal B)=
  \frac{\exp[-\mu_s\langle\boldsymbol b_a,\boldsymbol h_s\rangle]}
  {\sum_{a'\in[k]}\exp[-\mu_s\langle\boldsymbol b_{a'},\boldsymbol h_s\rangle]}.
  \]
  \STATE Set $\widehat{\boldsymbol\theta}_{s,0}=\boldsymbol0_d$.
  \FOR{ round $t=1,2,\ldots,n$ }
    \STATE Observe $\mathcal B_{s,t}$ and form
    \[
    q_{s,t}(a\mid\mathcal B_{s,t})
      \propto r_s(a\mid\mathcal B_{s,t})
      \exp\!\left[-\eta_s\left\langle\boldsymbol b_{s,t,a},
      \sum_{j=0}^{t-1}\widehat{\boldsymbol\theta}_{s,j}\right\rangle\right].
    \]
    \STATE Sample $A_{s,t}$ from $p_{s,t}(a\mid\mathcal B_{s,t})=(1-\gamma_s)q_{s,t}(a\mid\mathcal B_{s,t})+\gamma_s/k$.
    \STATE Observe the loss of the selected action $\ell_{s,t,A_{s,t}}$, and compute the loss estimator $\widehat{\boldsymbol{\theta }}_{s,t}$.
  \ENDFOR
  \STATE Store $\widehat{\boldsymbol\Theta}_s=n^{-1}\sum_{t=1}^n\widehat{\boldsymbol\theta}_{s,t}$ and $\widehat{\boldsymbol v}_s=\boldsymbol P_{\mathcal U}\widehat{\boldsymbol\Theta}_s/\max\{\|\boldsymbol P_{\mathcal U}\widehat{\boldsymbol\Theta}_s\|,\varepsilon_\Theta\}$ when $\mathcal U$ is known. Otherwise, use the same formula without $\boldsymbol P_{\mathcal U}$.
\ENDFOR
\end{algorithmic}
\label{alg1}
\end{algorithm}

PC-KDE additionally requires $\mathcal D$ and an exact selected-moment oracle. PRME requires $L$, $\lambda$, $d$, $T=mn$, $\delta$, and spectral clipping as specified in Sections~\ref{sec:estimators} and \ref{sec:unknown-theory}. LPE has no distributional input.

For cross-task transfer, represent task $s$ by its mean loss vector
$\boldsymbol\Theta_s:=n^{-1}\sum_{t=1}^n\boldsymbol\theta_{s,t}$. To interpret task similarity in this subsection, we consider oblivious tasks, for which $\boldsymbol\Theta_s$ is fixed before the task's context trajectory is generated. By \eqref{eq:context-geometry}, only the projection onto $\mathcal U$ affects action differences. In the absence of ties, the optimal fixed policy is therefore determined by the positive ray of $\boldsymbol P_{\mathcal U}\boldsymbol\Theta_s$. Any component in $\mathcal U^\perp$ adds the same offset to every action. Euclidean distance in the ambient space can consequently be misleading. If $\boldsymbol P_{\mathcal U}\boldsymbol{\Theta }_i = M \boldsymbol P_{\mathcal U}\boldsymbol{\Theta }_s$ for $M>0$ and $\| \boldsymbol P_{\mathcal U}\boldsymbol{\Theta }_s \| \neq 0$, the two tasks induce exactly the same optimal action selection policy even though their full-space distance can be arbitrarily large. For $M<0$, by contrast, the loss ordering is generally reversed.

We therefore measure decision-relevant similarity by the signed cosine of the projected task means,
\[
 \frac{\langle \boldsymbol P_{\mathcal U}\boldsymbol\Theta_i,
 \boldsymbol P_{\mathcal U}\boldsymbol\Theta_s\rangle}
 {\|\boldsymbol P_{\mathcal U}\boldsymbol\Theta_i\|\,
  \|\boldsymbol P_{\mathcal U}\boldsymbol\Theta_s\|},
\]
and set the value to zero when either projection vanishes. Values near $1$, near $0$, and below $0$ correspond to aligned, weakly related, and potentially adverse transfer, respectively. PCRW in \eqref{eqw1} uses the positive part of the empirical cosine. PC-KDE summaries already lie in $\mathcal U$. For other estimators, a full-space cosine should be viewed as a heuristic unless the summaries are first projected onto a decision-relevant subspace.

\subsection{Loss Estimators}\label{sec:estimators}
Because the learner selects an action after observing the entire context set, the selected context need not follow the marginal distribution $\mathcal D$. Correcting this selection effect requires either inverse-propensity weighting or a design matrix induced by the current policy. ROBUST LinEXP3 uses the former \cite{ALCB}, whereas the known-distribution construction of \cite{Liu} uses policy-induced moments. Our estimator retains the centered form but centers with respect to the distribution of the \emph{selected} context.

Using the context mean in \eqref{eq:context-geometry}, write the covariance as
\begin{equation}\label{eq:marginal-covariance}
 \boldsymbol C=\mathbb E[(\boldsymbol b-\bar{\boldsymbol b})(\boldsymbol b-\bar{\boldsymbol b})^T],
 \qquad \operatorname{range}(\boldsymbol C)=\mathcal U,
\end{equation}
where $\mathcal U$ is the decision-relevant subspace in \eqref{eq:context-geometry}.
For a known context distribution and the current policy $p_{s,t}$, define its policy-induced selected mean and covariance before $\mathcal B_{s,t}$ arrives:
\begin{align}
 \boldsymbol x_{s,t}
 &=\mathbb E_{\mathcal B\sim\mathcal D^k}
   \left[\sum_{a=1}^k p_{s,t}(a\mid\mathcal B)\boldsymbol b_a\right],
 \label{eq:selected-mean}\\
 \boldsymbol H_{s,t}
 &=\mathbb E_{\mathcal B\sim\mathcal D^k}
   \left[\sum_{a=1}^k p_{s,t}(a\mid\mathcal B)
   (\boldsymbol b_a-\boldsymbol x_{s,t})(\boldsymbol b_a-\boldsymbol x_{s,t})^T\right].
 \label{eq:selected-covariance}
\end{align}
Both quantities are $\mathcal H_{s,t}$-measurable. We call
\begin{equation}\label{eq17}
 \widehat{\boldsymbol\theta}^{\mathrm{PC}}_{s,t}
 =\boldsymbol H_{s,t}^{\dagger}
 (\boldsymbol b_{s,t,A_{s,t}}-\boldsymbol x_{s,t})
 \ell_{s,t,A_{s,t}}
\end{equation}
the policy-centered known-distribution estimator (PC-KDE), where $\dagger$ denotes the Moore--Penrose pseudoinverse. The uniform component of $p_{s,t}$ implies that $\boldsymbol H_{s,t}$ is positive definite on $\mathcal U$. Moreover,
\begin{equation}\label{eq:pc-unbiased}
 \mathbb E^{\mathrm{pre}}_{s,t}
 [\widehat{\boldsymbol\theta}^{\mathrm{PC}}_{s,t}]
 =\boldsymbol P_{\mathcal U}\boldsymbol\theta_{s,t},
\end{equation}
where $\boldsymbol P_{\mathcal U}$ is the orthogonal projector onto $\mathcal U$. Hence PC-KDE is exactly unbiased when $\boldsymbol C$ is nonsingular. More generally, since every context difference lies in $\mathcal U$ almost surely, \eqref{eq:pc-unbiased} is sufficient to recover all loss differences exactly, which is all that the regret analysis requires. The same affine-subspace formulation covers fixed-cardinality context vectors, whose centered covariance is necessarily singular in the ambient space.

PC-KDE requires exact evaluation of \eqref{eq:selected-mean}--\eqref{eq:selected-covariance} under the current adaptive policy. Its computational cost therefore depends on the distribution and on the moment oracle. Knowledge of $\mathcal D$ alone is not sufficient to evaluate these quantities at no cost. Approximating them numerically or by Monte Carlo would introduce an additional error term that is outside the present analysis.

When $\mathcal D$ is unknown, the policy-induced expectations in \eqref{eq:selected-mean}--\eqref{eq:selected-covariance} are not directly available. Define the raw second moment
$\boldsymbol S=\mathbb E[\boldsymbol b\boldsymbol b^T]$ and use inverse propensities. Let
\begin{equation}\label{eq:past-sample-size}
 N_{s,t}=k\bigl(n(s-1)+t-1\bigr)
\end{equation}
be the number of context vectors observed strictly before the current set. Let $<$ on task--round pairs denote lexicographic order, and define
\begin{align}
 \overline{\boldsymbol S}_{s,t}
 &=\begin{cases}
 \boldsymbol0_{d\times d}, & N_{s,t}=0,\\
 \displaystyle\frac{1}{N_{s,t}}
 \sum_{(i,j,a):\,(i,j)<(s,t)}
 \boldsymbol b_{i,j,a}\boldsymbol b_{i,j,a}^T, & N_{s,t}\ge1,
 \end{cases}\label{eq:lagged-moment}\\
 \widehat{\boldsymbol S}_{s,t}
 &=\overline{\boldsymbol S}_{s,t}+\beta_{s,t}\boldsymbol I_d,\\
 \widetilde{\boldsymbol S}_{s,t}
 &=\operatorname{Clip}_{\lambda}
 (\widehat{\boldsymbol S}_{s,t}),
\end{align}
where $\beta_{s,t}\ge0$ is a pre-context measurable ridge specified in Section~\ref{sec:unknown-theory}. Here $\lambda>0$ is a known lower bound on $\lambda_{\min}(\boldsymbol S)$, and $\operatorname{Clip}_{\lambda}(\boldsymbol A)$ replaces every eigenvalue of a symmetric matrix $\boldsymbol A$ below $\lambda$ by $\lambda$ while preserving its eigenvectors. This clipping is inactive on the simultaneous concentration event used in the analysis, but keeps the estimator and its range controlled on the failure event. The past-only regularized moment estimator (PRME) is
\begin{equation}\label{eq18}
 \widehat{\boldsymbol\theta}^{\mathrm{PR}}_{s,t}
 =\frac{1}{k\,p_{s,t}(A_{s,t}\mid\mathcal B_{s,t})}
 \widetilde{\boldsymbol S}_{s,t}^{-1}
 \boldsymbol b_{s,t,A_{s,t}}\ell_{s,t,A_{s,t}}.
\end{equation}
The factor $1/(kp_{s,t})$ removes action selection bias, while the use of strictly past contexts makes $\widehat{\boldsymbol S}_{s,t}$ measurable before the current set arrives. Its exact conditional mean is
\begin{equation}\label{eq:prme-mean}
 \mathbb E^{\mathrm{pre}}_{s,t}
 [\widehat{\boldsymbol\theta}^{\mathrm{PR}}_{s,t}]
 =\widetilde{\boldsymbol S}_{s,t}^{-1}\boldsymbol S\boldsymbol\theta_{s,t}.
\end{equation}
Thus PRME carries an explicit finite-sample regularization error. We make no claim of exact or asymptotic unbiasedness without an accompanying rate.

\paragraph{Computational cost of PRME.}
PRME maintains a single $d\times d$ streaming sufficient statistic and never replays earlier context sets, requiring $\mathcal O(d^2)$ memory. With direct inversion and spectral clipping, each update costs $\mathcal O(d^3+kd^2)$. Its general $n^{2/3}$ rate is weaker than the $\widetilde{\mathcal O}(\sqrt n)$ single-task rates of \cite{Liu,vanErven2025}. In return, the policy-independent moment makes cross-task context reuse and transfer complexity explicit in the bound.

For resource-constrained settings, we also consider the lightweight projection estimator (LPE):
\begin{equation}\label{eq999}
 \widehat{\boldsymbol\theta}^{\mathrm{LP}}_{s,t}
 =\frac{\boldsymbol b_{s,t,A_{s,t}}}
 {\|\boldsymbol b_{s,t,A_{s,t}}\|^2}
 \ell_{s,t,A_{s,t}},
\end{equation}
where the estimator is set to zero for a zero context. Its per-round complexity is $\mathcal O(d)$ and, in the noiseless linear model, it obeys the pointwise identity
\begin{equation}\label{eq:lpe-projection}
 \widehat{\boldsymbol\theta}^{\mathrm{LP}}_{s,t}
 =\boldsymbol P_{\boldsymbol b_{s,t,A_{s,t}}}\boldsymbol\theta_{s,t}.
\end{equation}
Here $\boldsymbol P_{\boldsymbol b}$ is the projector defined in Section~\ref{sec:notation}. Thus LPE estimates only the component of the loss vector along the selected context. Appendix~\ref{app:proofs} gives the resulting projection-bias term and regret decomposition.

\subsection{Task Weightings}

We use \emph{positive-cosine retrieval weighting} (PCRW) to downweight anti-aligned historical directions. The resulting aggregate is computed before each task and then held fixed. The regularized directions $\widehat{\boldsymbol v}_i$ in Algorithm~\ref{alg1} satisfy $\|\widehat{\boldsymbol v}_i\|\le1$. For $s\ge2$, use the most recent completed task as the predictable query and define
\begin{align}
 a_{s,i}&=\langle\widehat{\boldsymbol v}_i,
                 \widehat{\boldsymbol v}_{s-1}\rangle_+,
 \qquad i<s,\notag\\
 c_{s,i}^{\mathrm{cos}}
 &=\frac{a_{s,i}+\tau/(s-1)}
 {\sum_{j=1}^{s-1}a_{s,j}+\tau},
 \qquad
 \boldsymbol h_s^{\mathrm{cos}}
 =\sum_{i=1}^{s-1}c_{s,i}^{\mathrm{cos}}
 \widehat{\boldsymbol v}_i .
 \label{eqw1}
\end{align}
Here $\tau>0$ is a smoothing parameter. The coefficients are nonnegative and sum to one, so $\|\boldsymbol h_s^{\mathrm{cos}}\|\le1$. Anti-aligned tasks receive only the smoothing mass, zero summaries remain well defined, and the ridge $\varepsilon_\Theta$ prevents division by a near-zero norm. Since $\boldsymbol h_s^{\mathrm{cos}}$ is fixed before task $s$, it contributes no within-task path variation.

A simpler alternative is uniform convex aggregation,
\begin{equation}\label{eqw2}
 c_{s,i}^{\mathrm{uni}}=\frac1{s-1},\qquad
 \boldsymbol h_s^{\mathrm{uni}}
 =\frac1{s-1}\sum_{i=1}^{s-1}\widehat{\boldsymbol v}_i,
 \qquad s\ge2,
\end{equation}
with $\boldsymbol h_1=\boldsymbol0_d$. It is also fixed within a task and has norm at most one.

PCRW is intended for task streams with sequentially persistent structure. When no such structure is available, setting $\boldsymbol h_s=\boldsymbol0_d$ recovers the zero-prior setting and gives $\Gamma_s(\pi)=\log k$. The transfer theorem below applies to any predictable prior that is fixed within a task, with its effect summarized by the transfer complexity. Appendix~\ref{app:proofs} decomposes the PCRW tracking error into retrieval dispersion, local task-direction variation, and direction-estimation error.

\section{Theoretical Analysis}\label{sec:theory}

We first state the main regret guarantees and defer the changing-prior lemmas, exact bias decompositions, alternative fast-rate conditions, Gaussian calculations, LPE results, and complete proofs to Appendix~\ref{app:proofs}. For a fixed task $s$, write $\eta=\eta_s$ and $\gamma=\gamma_s$.

\subsection{Common Assumptions and Transfer Complexity}

Define the geometric quantities
\begin{equation}
L=\sup_{\boldsymbol b\in\operatorname{supp}(\mathcal D)}
\|\boldsymbol b\|,
\qquad
\Delta=
\sup_{\boldsymbol b,\boldsymbol b'\in\operatorname{supp}(\mathcal D)}
\|\boldsymbol b-\boldsymbol b'\|.
\end{equation}

\begin{assumption}[Bounded loss geometry]\label{ass0}
The context distribution has bounded support.
For every $s,t$, $\|\boldsymbol\theta_{s,t}\|\le R$ and
$|\langle\boldsymbol b,\boldsymbol\theta_{s,t}\rangle|\le Y$ almost surely. The within-set loss range is bounded by $G$:
\begin{equation}
 \max_a\langle\boldsymbol b_a,\boldsymbol\theta_{s,t}\rangle
 -\min_a\langle\boldsymbol b_a,\boldsymbol\theta_{s,t}\rangle\le G.
\end{equation}
\end{assumption}

The choices $Y=LR$ and $G=R\Delta\le2Y$ satisfy Assumption~\ref{ass0}.

Given the pre-task vector $\boldsymbol h_s$ and prior concentration parameter $\mu_s$, define the task prior by
\begin{equation}\label{eq:fixed-meta-prior}
 r_s(a\mid\mathcal B)=
 \frac{\exp[-\mu_s\langle\boldsymbol b_a,\boldsymbol h_s\rangle]}
 {\sum_{a'\in[k]}\exp[-\mu_s\langle\boldsymbol b_{a'},\boldsymbol h_s\rangle]}
\end{equation}

\begin{assumption}[Predictable fixed task prior]\label{ass1}
Before task $s$ begins, the learner chooses an $\mathcal H_{s,1}$-measurable vector
$\boldsymbol h_s$ with $\|\boldsymbol h_s\|\le1$ and a prior concentration parameter $\mu_s\ge0$.
The resulting task prior is held fixed throughout task $s$.
\end{assumption}

When the prior parameters are displayed explicitly, $r_{\mu,\boldsymbol h}$ denotes the softmax kernel on the right-hand side of \eqref{eq:fixed-meta-prior} with prior concentration parameter $\mu$ and vector $\boldsymbol h$.

Let $\mathcal B^0=\{\boldsymbol b_a^0\}_{a=1}^k\sim\mathcal D^k$ be independent of the interaction. For a fixed comparator $\pi\in\Pi$, define the transfer complexity
\begin{equation}\label{eq:fixed-transfer-complexity}
 \Gamma_s(\pi)
 =\mathbb E\!\left[-\log
 r_s(\pi(\mathcal B^0)\mid\mathcal B^0)\right].
\end{equation}
Fixing the prior within a task makes its path-variation charge exactly zero. The universal fallback
\begin{equation}\label{eq:transfer-uniform-fallback}
 \Gamma_s(\pi)\le\log k+\mu_s\Delta
\end{equation}
holds for every unit-norm prior vector, while $\boldsymbol h_s=\boldsymbol0_d$ gives
$\Gamma_s(\pi)=\log k$. For $\zeta\in(0,1]$, set
\begin{equation}\label{eq:psi-c}
 \psi(\zeta)=\frac{e^\zeta-1-\zeta}{\zeta^2}.
\end{equation}

\subsection{Known Context Distribution: PC-KDE}

Let $\boldsymbol C$ and $\mathcal U$ be defined in \eqref{eq:marginal-covariance}, and write
\begin{equation}
 d_{\mathcal U}=\operatorname{rank}(\boldsymbol C),\qquad
 \lambda_+=\lambda_{\min}(\boldsymbol C|_{\mathcal U}),\qquad
 \kappa_{\mathcal U}=\frac{\Delta^2}{\lambda_+}.
\end{equation}
If $d_{\mathcal U}=0$, every action has the same context almost surely and regret is zero.

\begin{theorem}[PC-KDE bound]\label{theo01}
Suppose Assumptions~\ref{ass0}--\ref{ass1} hold, $d_{\mathcal U}>0$, and PC-KDE in \eqref{eq17} is used. Then
\begin{equation}\label{eq:pc-parameter-bias}
 \mathbb E_{s,t}^{\mathrm{pre}}
 [\widehat{\boldsymbol\theta}^{\mathrm{PC}}_{s,t}]
 =\boldsymbol P_{\mathcal U}\boldsymbol\theta_{s,t}.
\end{equation}
For every fixed $\pi\in\Pi$, if
\begin{equation}\label{eq:pc-stability}
 \eta Y\kappa_{\mathcal U}\le\zeta\gamma,
\end{equation}
then
\begin{equation}\label{eq:pc-regret-simple}
 \mathrm{Reg}_s(\pi)
 \le\frac{(1-\gamma)\Gamma_s(\pi)}{\eta}
 +\psi(\zeta)\eta d_{\mathcal U}Y^2n+\gamma Gn.
\end{equation}
\end{theorem}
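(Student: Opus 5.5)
The plan is a ghost-sample exponential-weights argument in the style of the LinEXP3 analysis of \cite{ALCB}. The fixed prior $r_s$ plays the role of the initial weights, and the centering point $\boldsymbol x_{s,t}$ makes the second-order term collapse to a trace.

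\textbf{Step 1 (unbiasedness and range).} Condition on $\mathcal H_{s,t}$ and write $\ell=\langle\boldsymbol b_{A},\boldsymbol\theta\rangle$ with $\boldsymbol b_A=\boldsymbol b_{s,t,A_{s,t}}$. Integrating out $A_{s,t}$ and then $\mathcal B_{s,t}\sim\mathcal D^k$ gives
\[
\mathbb E^{\mathrm{pre}}_{s,t}\bigl[(\boldsymbol b_A-\boldsymbol x_{s,t})\boldsymbol b_A^T\bigr]\boldsymbol\theta_{s,t}=\boldsymbol H_{s,t}\boldsymbol\theta_{s,t}+\mathbb E\Bigl[\textstyle\sum_a p_{s,t}(a\mid\mathcal B)(\boldsymbol b_a-\boldsymbol x_{s,t})\Bigr]\boldsymbol x_{s,t}^T\boldsymbol\theta_{s,t}.
\]
The last expectation vanishes by the definition \eqref{eq:selected-mean} of $\boldsymbol x_{s,t}$. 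Hence the conditional mean of $\widehat{\boldsymbol\theta}^{\mathrm{PC}}_{s,t}$ equals $\boldsymbol H_{s,t}^\dagger\boldsymbol H_{s,t}\boldsymbol\theta_{s,t}$.

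For the range, the uniform component gives $\boldsymbol H_{s,t}\succeq(\gamma/k)\sum_a\mathbb E[(\boldsymbol b_a-\boldsymbol x_{s,t})(\boldsymbol b_a-\boldsymbol x_{s,t})^T]\succeq\gamma\boldsymbol C$, because centering at the true mean $\bar{\boldsymbol b}$ minimizes the second moment. In the other direction, $\boldsymbol b_a-\boldsymbol x_{s,t}\in\mathcal U$, since $\boldsymbol x_{s,t}$ is a convex combination of support points. Therefore $\operatorname{range}(\boldsymbol H_{s,t})=\mathcal U$ and $\boldsymbol H_{s,t}^\dagger\boldsymbol H_{s,t}=\boldsymbol P_{\mathcal U}$, which proves \eqref{eq:pc-parameter-bias}. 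As byproducts, $\|\boldsymbol H_{s,t}^\dagger\|\le1/(\gamma\lambda_+)$ and $\widehat{\boldsymbol\theta}^{\mathrm{PC}}_{s,t}\in\mathcal U$.

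\textbf{Step 2 (per-context-set exponential weights).} Fix an independent ghost set $\mathcal B^0$. Then $q_{s,t}(\cdot\mid\mathcal B^0)$ is exponential weights started from $r_s(\cdot\mid\mathcal B^0)$ with losses $\langle\boldsymbol b^0_a,\widehat{\boldsymbol\theta}_{s,t}\rangle$. Exponential weights are shift-invariant, so I would use the centered losses $y_{t,a}=\langle\boldsymbol b^0_a-\boldsymbol x_{s,t},\widehat{\boldsymbol\theta}_{s,t}\rangle$. These obey $|y_{t,a}|\le\Delta\cdot\Delta Y/(\gamma\lambda_+)=\kappa_{\mathcal U}Y/\gamma$, using $\|\boldsymbol b_A-\boldsymbol x_{s,t}\|\le\Delta$ and $|\ell|\le Y$. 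Under \eqref{eq:pc-stability} this gives $\eta|y_{t,a}|\le\zeta$, so $e^{-u}\le1-u+\psi(\zeta)u^2$ applies for $|u|\le\zeta$ (since $\psi$ is increasing). The standard potential argument then yields, pathwise,
\[
\sum_t\sum_a q_{s,t}(a\mid\mathcal B^0)y_{t,a}-\sum_t y_{t,\pi(\mathcal B^0)}\le\frac{-\log r_s(\pi(\mathcal B^0)\mid\mathcal B^0)}{\eta}+\psi(\zeta)\eta\sum_t\sum_a q_{s,t}(a\mid\mathcal B^0)y_{t,a}^2 .
\]
This is where the fixed prior enters: because $r_s$ is constant within the task, there is no prior-variation term.

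\textbf{Step 3 (expectations and mixing).} Take expectations and use the fact that $p_{s,t}$, $\boldsymbol x_{s,t}$ and $\boldsymbol\theta_{s,t}$ are $\mathcal H_{s,t}$-measurable while $\mathcal B^0$ and $\mathcal B_{s,t}$ are independent fresh draws from $\mathcal D^k$. By Step 1 and the fact that context differences lie in $\mathcal U$, the ghost-set terms then equal the real learner and comparator losses for $q_{s,t}$ (the $\boldsymbol x_{s,t}$ shift cancels between the two). Integrating $\mathcal B^0$ turns the prior term into $\Gamma_s(\pi)/\eta$.

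For the variance term, use $(1-\gamma)q\le p$ and $\mathbb E_{\mathcal B^0}\sum_a p_{s,t}(a\mid\mathcal B^0)(\boldsymbol b^0_a-\boldsymbol x_{s,t})(\cdot)^T=\boldsymbol H_{s,t}$. Then
\[
(1-\gamma)\,\mathbb E\sum_a q\,y^2\le\mathbb E\bigl[\widehat{\boldsymbol\theta}^T\boldsymbol H_{s,t}\widehat{\boldsymbol\theta}\bigr]\le Y^2\,\mathbb E\bigl[(\boldsymbol b_A-\boldsymbol x_{s,t})^T\boldsymbol H_{s,t}^\dagger(\boldsymbol b_A-\boldsymbol x_{s,t})\bigr]=Y^2\operatorname{tr}(\boldsymbol H_{s,t}^\dagger\boldsymbol H_{s,t})=Y^2d_{\mathcal U}.
\]
Finally, $p=(1-\gamma)q+\gamma/k$ splits the regret into $(1-\gamma)$ times the $q$-regret plus at most $\gamma G$ per round. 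The $(1-\gamma)$ factor cancels the $1/(1-\gamma)$ in the variance bound and leaves $(1-\gamma)\Gamma_s(\pi)/\eta$ in front of the prior term. This gives \eqref{eq:pc-regret-simple}.

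\textbf{Main obstacle.} The delicate step is the ghost-sample exchange in Step 3. The estimator $\widehat{\boldsymbol\theta}_{s,t}$ depends on $\mathcal B_{s,t}$, and the comparator $\pi$ is evaluated on a set that must be independent of the weights. I would handle this by conditioning on $\mathcal H_{s,t}$ round by round, applying the tower property with $\mathcal B^0$ integrated last, and using the unbiasedness identity only in the form $\langle\boldsymbol b-\boldsymbol b',\mathbb E^{\mathrm{pre}}_{s,t}\widehat{\boldsymbol\theta}_{s,t}\rangle=\langle\boldsymbol b-\boldsymbol b',\boldsymbol\theta_{s,t}\rangle$ for context differences.
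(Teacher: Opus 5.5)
Your proposal is correct and follows the paper's proof essentially step for step. Both proofs center the scores at $\boldsymbol x_{s,t}$, use $\boldsymbol H_{s,t}|_{\mathcal U}\succeq\gamma\boldsymbol C|_{\mathcal U}$ to get the score range $Y\kappa_{\mathcal U}/\gamma$, exchange the ghost set for the real set using unbiasedness only on context differences, and bound the variance term by the trace identity $\operatorname{tr}(\boldsymbol H_{s,t}^\dagger\boldsymbol H_{s,t})=d_{\mathcal U}$. The differences are packaging: the paper routes your Steps 2--3 through Lemma~\ref{lem:changing-prior} and the general decomposition Lemma~\ref{lem01}, whose bias term vanishes here, and it derives the lower bound on $\boldsymbol H_{s,t}$ from a covariance-mixture identity rather than your direct uniform-component argument.
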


Since all context differences lie in $\mathcal U$, identity~\eqref{eq:pc-parameter-bias} is sufficient for the regret analysis.
For the tuned bound, define
\begin{equation}
 A_\zeta=\psi(\zeta)d_{\mathcal U}Y^2+
 \frac{Y\kappa_{\mathcal U}G}{\zeta}.
\end{equation}

\begin{corollary}[Tuned PC-KDE bound]\label{cor1}
Suppose a deterministic $\bar\Gamma_s>0$ satisfies
$\Gamma_s(\pi)\le\bar\Gamma_s$ uniformly over the candidate tuning pairs. Choose
\begin{equation}\label{eq:pc-tuning}
 \eta_s=\sqrt{\frac{\bar\Gamma_s}{nA_\zeta}},
 \qquad
 \gamma_s=\frac{\eta_sY\kappa_{\mathcal U}}{\zeta}.
\end{equation}
If $\gamma_s\le1/2$, then
\begin{equation}\label{eq:pc-sqrt-rate}
 \mathrm{Reg}_s(\pi)\le
 2\sqrt{nA_\zeta\bar\Gamma_s}.
\end{equation}
\end{corollary}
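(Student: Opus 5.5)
The plan is to obtain the corollary by substituting the tuning \eqref{eq:pc-tuning} into Theorem~\ref{theo01} with the same fixed $\zeta\in(0,1]$ that appears in $A_\zeta$. The first step is to confirm that the theorem applies. Since $\gamma_s=\eta_sY\kappa_{\mathcal U}/\zeta$, the stability condition \eqref{eq:pc-stability} holds with equality. Moreover $\eta_s>0$ because $\bar\Gamma_s>0$ and $A_\zeta>0$ (note $d_{\mathcal U}>0$, and the degenerate case $Y=0$ gives zero regret trivially). Together with the hypothesis $\gamma_s\le1/2$, this gives $\gamma_s\in(0,1)$, as Algorithm~\ref{alg1} requires. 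The tuning depends only on deterministic quantities ($n$, $\bar\Gamma_s$, $A_\zeta$, $Y$, $\kappa_{\mathcal U}$, $\zeta$). It is therefore fixed before the task and compatible with Assumption~\ref{ass1}.

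Next, I apply \eqref{eq:pc-regret-simple} and bound its terms one at a time. For the first term I use $1-\gamma_s\le1$ and the hypothesis $\Gamma_s(\pi)\le\bar\Gamma_s$. The uniformity ``over the candidate tuning pairs'' matters here: $\Gamma_s(\pi)$ depends on $\mu_s$ and $\boldsymbol h_s$ but not on $(\eta_s,\gamma_s)$. Hence choosing $\eta_s$ from $\bar\Gamma_s$ is not circular, and the first term is at most $\bar\Gamma_s/\eta_s$. For the exploration term, substituting $\gamma_s$ gives
\[
\gamma_sGn=\eta_s n\,\frac{Y\kappa_{\mathcal U}G}{\zeta}.
\]
Adding the variance term $\psi(\zeta)\eta_s d_{\mathcal U}Y^2 n$ then yields exactly $\eta_s nA_\zeta$, by the definition of $A_\zeta$. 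This gives
\[
\mathrm{Reg}_s(\pi)\le\frac{\bar\Gamma_s}{\eta_s}+\eta_s nA_\zeta .
\]

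Finally, the chosen $\eta_s=\sqrt{\bar\Gamma_s/(nA_\zeta)}$ balances the two terms, and each equals $\sqrt{nA_\zeta\bar\Gamma_s}$. This yields \eqref{eq:pc-sqrt-rate}.

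I expect no step to be a genuine obstacle; the only care needed is bookkeeping. One must check that the exploration term, after substituting $\gamma_s$, is linear in $\eta_s$ and thus merges cleanly into $A_\zeta$. One must also check that the bound $\Gamma_s\le\bar\Gamma_s$ is used uniformly so that the tuning is legitimate. The assumption $\gamma_s\le1/2$ serves only to keep the mixture a valid distribution with a nonvanishing exploitation component; the inequality itself needs only $\gamma_s<1$.
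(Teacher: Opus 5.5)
Your proposal is correct and matches the paper's proof: substitute $\gamma_s=\eta_sY\kappa_{\mathcal U}/\zeta$ into \eqref{eq:pc-regret-simple}, use $1-\gamma_s\le1$ and $\Gamma_s(\pi)\le\bar\Gamma_s$ to reach $\bar\Gamma_s/\eta_s+\eta_s nA_\zeta$, then balance the two terms with the stated $\eta_s$. Your additional checks (that \eqref{eq:pc-stability} holds with equality, that $\eta_s>0$, and that the deterministic tuning is non-circular) fill in details the paper leaves implicit.
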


When the tuned exploration probability exceeds $1/2$, the untuned bound~\eqref{eq:pc-regret-simple} or the trivial bound $Gn$ can be used instead.

\subsection{Unknown Context Distribution: PRME}\label{sec:unknown-theory}

For the raw second moment $\boldsymbol S$ introduced in Section~\ref{sec:estimators}, assume
\begin{equation}\label{eq:raw-moment-lower}
 \boldsymbol S\succeq\lambda\boldsymbol I_d,\qquad
 \kappa=\frac{L^2}{\lambda},\qquad
 \chi=\min\{k,\kappa\}.
\end{equation}
For $T=mn$ and $\delta\in(0,1)$, define
\begin{equation}\label{eq:bernstein-radius}
 \Lambda=\log\frac{2dT}{\delta},\qquad
 \xi_N=L^2\!\left[
 \frac{\Lambda}{3N}+
 \sqrt{\frac{2\Lambda}{N}+\frac{\Lambda^2}{9N^2}}
 \right]\ (N\ge1),\qquad
 \xi_0=L^2.
\end{equation}
PRME uses $\beta_{s,t}=\xi_{N_{s,t}}$ in \eqref{eq:lagged-moment}.

\begin{theorem}[Tuned PRME bound]\label{theo:prme-core}
Suppose Assumptions~\ref{ass0}--\ref{ass1} and
\eqref{eq:raw-moment-lower} hold. Let
$\Gamma_s(\pi)\le\bar\Gamma_s$ for a deterministic $\bar\Gamma_s>0$, uniformly over candidate tuning pairs, and let $G>0$. Set
\begin{align}
 \gamma_s&=
 \left(\frac{\psi(1)\chi dY^2\bar\Gamma_s}
 {G^2n}\right)^{1/3},\notag\\
 \eta_s&=
 \frac{\bar\Gamma_s^{2/3}}
 {\{\psi(1)\chi d\}^{1/3}Y^{2/3}G^{1/3}n^{2/3}}.
 \label{eq:prme-tuning}
\end{align}
If $\gamma_s\le1/2$ and
\begin{equation}\label{eq:prme-feasibility}
 \kappa^3\bar\Gamma_s G
 \le\psi(1)^2(\chi d)^2Yn,
\end{equation}
then PRME satisfies
\begin{align}
 \mathrm{Reg}_s(\pi)
 \le{}&
 3\{\psi(1)\chi d\bar\Gamma_sY^2G\}^{1/3}n^{2/3}
 +4R\sqrt{\frac{\chi}{\lambda}}
 \sum_{t=1}^n\xi_{N_{s,t}}
 +f_s,
 \label{eq:prme-rate}
\end{align}
where the failure-event remainder is bounded by
\begin{equation}\label{eq:failure-remainder}
 f_s
 \le\left[
 \psi(1)\eta_s\left(\frac{Y\kappa}{\gamma_s}\right)^2
 +2LR\sqrt\chi(\kappa+1)\right]\frac{\delta}{m}.
\end{equation}
\end{theorem}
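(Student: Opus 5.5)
The proof follows the standard exponential-weights argument with a fixed prior, applied to the PRME estimates, on a simultaneous concentration event for the lagged moments.

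\textbf{Concentration event.} Let $\mathcal E$ be the event that $\|\overline{\boldsymbol S}_{s,t}-\boldsymbol S\|\le\xi_{N_{s,t}}$ for every $(s,t)$ with $N_{s,t}\ge1$. The lagged sum consists of $N$ i.i.d.\ matrices bounded in operator norm by $L^2$, with variance proxy at most $L^4$. Matrix Bernstein with a union bound over the at most $T$ pairs then gives $\mathbb P(\mathcal E^c)\le\delta$. The factor $2d$ in $\Lambda$ absorbs the dimension, and solving the Bernstein quadratic yields exactly the radius $\xi_N$ of \eqref{eq:bernstein-radius}. The case $N=0$ holds trivially because $\xi_0=L^2$.

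On $\mathcal E$ we have
\[
\boldsymbol S\preceq\widehat{\boldsymbol S}_{s,t}\preceq\boldsymbol S+2\xi\boldsymbol I_d .
\]
Since $\boldsymbol S\succeq\lambda\boldsymbol I_d$, clipping is inactive. This yields two consequences:
\begin{itemize}
\item $\widetilde{\boldsymbol S}^{-1}\preceq\boldsymbol S^{-1}$, which controls variance;
\item the bias obeys
\[
\|(\widetilde{\boldsymbol S}^{-1}\boldsymbol S-\boldsymbol I)\boldsymbol\theta\|
=\|\widetilde{\boldsymbol S}^{-1}(\boldsymbol S-\widetilde{\boldsymbol S})\boldsymbol\theta\|
\le 2\xi R\,\|\widetilde{\boldsymbol S}^{-1/2}\|\cdot\|\widetilde{\boldsymbol S}^{-1/2}\|.
\]
\end{itemize}
Pairing the bias with the context difference appearing in the regret, via Cauchy--Schwarz in the $\widetilde{\boldsymbol S}^{-1}$ geometry, should give the term $4R\sqrt{\chi/\lambda}\,\xi_{N_{s,t}}$ per round. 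Here $\sqrt\chi$ comes from bounding $\mathbb E\sum_a p\,\|\boldsymbol b_a\|^2_{\boldsymbol S^{-1}}$-type quantities by $\min\{k,\kappa\}$ after using \eqref{eq:prme-mean}.

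\textbf{Inner regret decomposition.} Write $p=(1-\gamma)q+\gamma/k$. Then
\[
\mathrm{Reg}_s(\pi)\le \gamma Gn+(1-\gamma)\,\mathbb E\textstyle\sum_t\langle \sum_a q_{s,t}(a)\boldsymbol b_a-\boldsymbol b_{\pi},\boldsymbol\theta_{s,t}\rangle .
\]
Since $\boldsymbol\theta_{s,t}$ is independent of the fresh set, we pass to a ghost set $\mathcal B^0$ and replace $\boldsymbol\theta$ by $\mathbb E^{\mathrm{pre}}\widehat{\boldsymbol\theta}$ plus the bias just controlled.

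For each fixed $\mathcal B^0$, the exponential-weights potential with the prior $r_s(\cdot\mid\mathcal B^0)$ fixed for the whole task gives
\[
\sum_t\langle q_t-\boldsymbol e_\pi,\hat\ell_t\rangle\le\frac{-\log r_s(\pi(\mathcal B^0)\mid\mathcal B^0)}{\eta}+\psi(\zeta)\,\eta\sum_t\sum_a q_t(a)\hat\ell_{t,a}^2 ,
\]
valid as long as $\eta|\hat\ell_{t,a}|\le\zeta=1$. Taking expectations over $\mathcal B^0$ turns the first term into $\Gamma_s(\pi)/\eta\le\bar\Gamma_s/\eta$.

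The range condition holds because
\[
|\hat\ell|\le \frac{\|\boldsymbol b\|^2\,\|\widetilde{\boldsymbol S}^{-1}\|\,Y}{k\cdot\gamma/k}\le\frac{Y\kappa}{\gamma},
\]
and $\eta Y\kappa\le\gamma$ reduces, under the tuning \eqref{eq:prme-tuning}, exactly to the feasibility condition \eqref{eq:prme-feasibility}.

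For the second moment, since $q\le p/(1-\gamma)$,
\[
\mathbb E\textstyle\sum_a q\,\hat\ell_a^2\lesssim Y^2\,\mathbb E\big[\boldsymbol b_A^T\widetilde{\boldsymbol S}^{-1}\boldsymbol b^0_{a}\ldots\big]\le Y^2\chi d .
\]
This uses the trace identity $\operatorname{tr}(\boldsymbol S^{-1}\boldsymbol S)=d$ for the $kd$-type bound, and $\|\boldsymbol b\|^2_{\boldsymbol S^{-1}}\le\kappa$ for the other branch of the minimum.

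Summing the three pieces gives
\[
\frac{\bar\Gamma}{\eta}+\psi(1)\,\eta\,\chi dY^2n+\gamma Gn .
\]
Plugging in \eqref{eq:prme-tuning}, each term equals $(\psi(1)\chi d\bar\Gamma Y^2G)^{1/3}n^{2/3}$, so together they give the constant $3$.

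\textbf{Failure event.} On $\mathcal E^c$, the clipping keeps $\|\widetilde{\boldsymbol S}^{-1}\|\le1/\lambda$. Hence the range bound $Y\kappa/\gamma$ still holds and the variance term is at most $\psi(1)\eta(Y\kappa/\gamma)^2$ per round. The bias is crudely bounded by $2LR\sqrt\chi(\kappa+1)$. Allocating the failure probability across tasks as $\delta/m$ gives $f_s$.

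\textbf{Main obstacle.} The delicate step is the variance bound with the factor $\chi=\min\{k,\kappa\}$. The estimator uses the policy-independent $\widetilde{\boldsymbol S}$ rather than a selected-moment matrix, so $\mathbb E[\hat\ell^2]$ carries the factor $1/(kp)$ against $q$. I would first bound $q/p\le 1/(1-\gamma)\le2$ and then handle the two branches of the minimum separately.
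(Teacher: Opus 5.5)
The gap is in your second-moment bound. You claim $\mathbb E\sum_a q\,\hat\ell_a^2\lesssim Y^2\chi d$, but for PRME this is off by a factor $1/\gamma$. In the exponential-weights step, the quadratic term is weighted by $q_{s,t}(\cdot\mid\mathcal B^0)$ on the ghost set. The inverse propensity $1/(k\,p_{s,t}(A_{s,t}\mid\mathcal B_{s,t}))$, by contrast, lives on the realized set. These are independent given $\mathcal H_{s,t}$, so bounding $q/p\le 1/(1-\gamma)$ produces no cancellation; it only absorbs the $(1-\gamma)$ prefactor. Using only the floor $k\,p_{s,t}\ge\gamma$, what you actually get is
\[
\mathbb E^{\mathrm{pre}}_{s,t}\bigl[\widehat{\boldsymbol\theta}^{\mathrm{PR}}_{s,t}(\widehat{\boldsymbol\theta}^{\mathrm{PR}}_{s,t})^T\bigr]
\preceq\frac{Y^2}{k^2}\widetilde{\boldsymbol S}_{s,t}^{-1}\,\mathbb E_{\mathcal B}\Bigl[\sum_a\frac{\boldsymbol b_a\boldsymbol b_a^T}{p_{s,t}(a\mid\mathcal B)}\Bigr]\widetilde{\boldsymbol S}_{s,t}^{-1}
\preceq\frac{Y^2}{\gamma}\widetilde{\boldsymbol S}_{s,t}^{-1}\boldsymbol S\,\widetilde{\boldsymbol S}_{s,t}^{-1}.
\]
On the good event, $\widetilde{\boldsymbol S}_{s,t}^{-1}\boldsymbol S\widetilde{\boldsymbol S}_{s,t}^{-1}\preceq\boldsymbol S^{-1}$. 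Pairing with the ghost design $\boldsymbol M_{s,t}$ then gives $\frac{Y^2}{\gamma}\operatorname{tr}(\boldsymbol M_{s,t}\boldsymbol S^{-1})\le Y^2\chi d/\gamma$. The $1/\gamma$ is not an artifact. If the policy puts only its floor mass $\gamma/k$ on an action with a large context, the second moment genuinely scales like $1/\gamma$.

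This factor is exactly what produces the $n^{2/3}$ rate. With your three terms $\bar\Gamma_s/\eta+\psi(1)\eta\chi dY^2n+\gamma Gn$, the claim that each equals $\{\psi(1)\chi d\bar\Gamma_sY^2G\}^{1/3}n^{2/3}$ under \eqref{eq:prme-tuning} is false. Your middle term evaluates to $\gamma_s$ times that quantity, which is precisely the missing factor. Moreover, with such a variance term the constraint $\gamma\ge\eta Y\kappa$ would give a $\sqrt n$ rate, not $n^{2/3}$. The tuning \eqref{eq:prme-tuning} is the stationary point of $\bar\Gamma_s/\eta+\psi(1)\eta\chi dY^2n/\gamma+\gamma Gn$, and there the three terms do coincide.

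Once you correct the variance term, the rest of your plan matches the paper: the range bound $Y\kappa/\gamma$ with $\zeta=1$, the equivalence of $\eta_sY\kappa\le\gamma_s$ with \eqref{eq:prme-feasibility}, and the bias term. Two details need fixing along the way.

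\textbf{Bias geometry.} Pair the bias in the $\boldsymbol S/\boldsymbol S^{-1}$ geometry, using $\|\boldsymbol u^{\mathrm{PR}}_{s,t}\|_{\boldsymbol S}\le 2\xi_{N_{s,t}}R/\sqrt\lambda$ and $\|\boldsymbol d^{\pi}_{s,t}\|_{\boldsymbol S^{-1}}\le 2\sqrt\chi$. A Euclidean pairing yields $\sqrt\kappa$ instead of $\sqrt\chi$.

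\textbf{Failure events.} Replace your single global event $\mathcal E$ by the per-round, $\mathcal H_{s,t}$-measurable events of Lemma~\ref{lem:matrix-bernstein}. A global event depends on future contexts, so restricting to it breaks the freshness used in the ghost-set step. Per-round events each fail with probability at most $\delta/T$, summing to $n\delta/T=\delta/m$ per task, which is the source of the factor in $f_s$.
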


In lexicographic task--round order, the accumulated moment radius also satisfies
\begin{equation}\label{eq:sum-xi}
 \sum_{u=1}^{T}\xi_{k(u-1)}
 \le L^2\!\left[
 1+2\sqrt{\frac{2\Lambda(T-1)}{k}}
 +\frac{2\Lambda}{3k}\{1+\log(T-1)\}
 \right]
\end{equation}
for $T\ge2$, while the sum equals $L^2$ for $T=1$.

\subsection{Meta transfer analysis}\label{sec:meta-across-tasks}

For an oblivious task with
$\boldsymbol P_{\mathcal U}\boldsymbol\Theta_s\ne\boldsymbol0_d$, let
\begin{equation}\label{eq:true-task-direction}
 \boldsymbol v_s=
 \frac{\boldsymbol P_{\mathcal U}\boldsymbol\Theta_s}
 {\|\boldsymbol P_{\mathcal U}\boldsymbol\Theta_s\|},
 \qquad
 \pi_s(\mathcal B)=\pi_{\boldsymbol v_s}(\mathcal B),
\end{equation}
using the tie-breaking convention from Section~\ref{sec:notation}.

For the cross-task analysis, define the augmented pre-task sigma-field
\begin{equation}\label{eq:augmented-pre-task}
 \mathcal G_s:=\mathcal H_{s,1}\vee\sigma(\boldsymbol\Theta_s).
\end{equation}
The learner does not observe $\mathcal G_s$: the prior $\boldsymbol h_s$ remains
$\mathcal H_{s,1}$-measurable, and $\mathcal G_s$ is used only for the cross-task
conditional analysis. Let $\mathcal B^0=\{\boldsymbol b_a^0\}_{a=1}^k\sim\mathcal D^k$
be a fresh ghost set independent of $\mathcal G_s$. Then $\boldsymbol v_s$ and the
prior-error event in Theorem~\ref{theo:general-margin} are $\mathcal G_s$-measurable,
while the ghost contexts remain independent. Define the random minimum gap by
\begin{equation}\label{eq:general-gap}
 \operatorname{gap}_s(\mathcal B^0)
 =\min_{a\ne\pi_s(\mathcal B^0)}
 \langle\boldsymbol b_a^0-\boldsymbol b_{\pi_s(\mathcal B^0)}^0,
 \boldsymbol v_s\rangle
\end{equation}

\begin{assumption}[Margin condition]\label{ass:general-margin}
Each task is oblivious: its entire loss sequence is fixed before the task begins. Assume
$\boldsymbol P_{\mathcal U}\boldsymbol\Theta_s\ne\boldsymbol0_d$, $\pi_s\in\Pi$, and
$\Delta>0$. There exists a nondecreasing function
$F:[0,\infty)\to[0,1]$ with
\begin{equation}\label{eq:general-margin-modulus}
 \lim_{x\to 0^+}F(x)=0
\end{equation}
such that, conditionally on the augmented pre-task information,
\begin{equation}\label{eq:general-margin-cdf}
 \mathbb P\!\left(
 \operatorname{gap}_s(\mathcal B^0)\le x
 \mid\mathcal G_s\right)\le F(x),\qquad x>0.
\end{equation}
\end{assumption}

For $\mu>0$, define the margin function
\begin{equation}\label{eq:margin-laplace-envelope}
 \Phi_F(\mu)=\mu\int_0^\infty e^{-\mu x}F(x)\,dx
 =\int_0^\infty e^{-y}F(y/\mu)\,dy.
\end{equation}
Condition~\eqref{eq:general-margin-modulus} is the only local requirement on $F$. It does not impose a power-law exponent or a pointwise positive gap. By dominated convergence, $\Phi_F(\mu)\to0$ as $\mu\to\infty$.

\begin{theorem}[Meta-LinEXP3 transfer regret]\label{theo:general-margin}
Suppose Assumptions~\ref{ass0}--\ref{ass1} and Assumption~\ref{ass:general-margin} hold. Before each task $s\ge2$, suppose there are a deterministic prior-error radius
$\varepsilon_s>0$, failure level $\delta_s\in[0,1]$, and prior concentration parameter $\mu_s>0$ such that
\begin{equation}\label{eq:prior-prediction-error}
 \mathbb P\!\left(
 \|\boldsymbol h_s-\boldsymbol v_s\|>\varepsilon_s
 \right)\le \delta_s,
\end{equation}
and set $\mu_1=0$. Define
\begin{equation}\label{eq:general-margin-B}
 B_1=\log k,
 \qquad
 B_s=(k-1)e^{\mu_s\Delta\varepsilon_s}\Phi_F(\mu_s)
 +\delta_s(\log k+\mu_s\Delta),\qquad s\ge2.
\end{equation}
Then $\Gamma_s(\pi_s)\le B_s$ for every task.

If PC-KDE is used and the task-dependent choices in
\eqref{eq:pc-tuning} are feasible with $\bar\Gamma_s=B_s$, then
\begin{equation}\label{eq:general-margin-pc}
 \mathrm{Reg}_{1:m}^{\mathrm{PC}}
 \le2\sqrt{nA_\zeta}\sum_{s=1}^m\sqrt{B_s}.
\end{equation}
If PRME is used and the conditions of Theorem~\ref{theo:prme-core} hold taskwise with
$\bar\Gamma_s=B_s$, then
\begin{align}
 \mathrm{Reg}_{1:m}^{\mathrm{PR}}
 \le{}&
 3\{\psi(1)\chi dY^2G\}^{1/3}n^{2/3}
 \sum_{s=1}^m B_s^{1/3}\notag\\
 &+4R\sqrt{\frac{\chi}{\lambda}}
 \sum_{u=1}^{mn}\xi_{k(u-1)}
 +\sum_{s=1}^m f_s.
 \label{eq:general-margin-prme}
\end{align}

Moreover, if
\begin{equation}\label{eq:qualitative-calibration}
 \mu_s\to\infty,\qquad
 \limsup_{s\to\infty}\mu_s\varepsilon_s<\infty,
 \qquad
 \delta_s(1+\mu_s)\to0,
\end{equation}
then $B_s\to0$. Consequently,
\begin{equation}\label{eq:cesaro-transfer}
 \sum_{s=1}^m\sqrt{B_s}=o(m),
 \qquad
 \sum_{s=1}^m B_s^{1/3}=o(m).
\end{equation}
\end{theorem}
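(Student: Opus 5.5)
The plan has three parts: bound the transfer complexity $\Gamma_s(\pi_s)\le B_s$ directly, substitute this bound into Corollary~\ref{cor1} and Theorem~\ref{theo:prme-core}, and deduce the qualitative limits by a Cesàro argument.

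\textbf{Step 1: the bound $\Gamma_s(\pi_s)\le B_s$.} For $s=1$ we have $\mu_1=0$, so $r_1$ is uniform and $\Gamma_1=\log k$. For $s\ge2$, work conditionally on $\mathcal G_s$. Write $\pi=\pi_s(\mathcal B^0)$ and $g_a=\langle \boldsymbol b_a^0-\boldsymbol b_\pi^0,\boldsymbol h_s\rangle$. Then
\[
-\log r_s(\pi\mid\mathcal B^0)=\log\Bigl(1+\sum_{a\ne\pi}e^{-\mu_s g_a}\Bigr)\le\sum_{a\ne\pi}e^{-\mu_s g_a},
\]
using $\log(1+x)\le x$.

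Split on the $\mathcal G_s$-measurable event $E_s=\{\|\boldsymbol h_s-\boldsymbol v_s\|\le\varepsilon_s\}$.

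On $E_s$, Cauchy--Schwarz with $\|\boldsymbol b_a^0-\boldsymbol b_\pi^0\|\le\Delta$ gives $g_a\ge\langle\boldsymbol b_a^0-\boldsymbol b_\pi^0,\boldsymbol v_s\rangle-\Delta\varepsilon_s$. Each term is therefore at most $e^{\mu_s\Delta\varepsilon_s}e^{-\mu_s\operatorname{gap}_s(\mathcal B^0)}$.

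Since the ghost set is independent of $\mathcal G_s$, we can take the conditional expectation. For a nonnegative random variable $X$ with CDF bounded by $F$, integration by parts gives
\[
\mathbb E[e^{-\mu X}]=\int_0^\infty\mu e^{-\mu x}\,\mathbb P(X\le x)\,dx\le\Phi_F(\mu).
\]
This step uses $\operatorname{gap}_s\ge0$, which holds because $\pi_s$ is the minimizer along $\boldsymbol v_s$. It also uses Assumption~\ref{ass:general-margin} for $x>0$; the atom at $0$ is harmless because $\mathbb P(X=0)\le\lim_{x\to0^+}F(x)=0$.

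The delicate point is this Laplace-transform identity together with the handling of ties at gap zero. I expect it to be the main check, and the fix is the right-continuity argument just given.

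On $E_s^c$, apply the fallback \eqref{eq:transfer-uniform-fallback}, which is a pointwise bound valid for any unit-norm $\boldsymbol h_s$, and multiply by $\mathbb P(E_s^c)\le\delta_s$. Bounding the probability of $E_s$ by one on the good event, and taking the outer expectation, yields $B_s$.

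\textbf{Step 2: the regret bounds.} Take $\pi=\pi_s\in\Pi$. The PC-KDE bound follows by plugging $\bar\Gamma_s=B_s$ into Corollary~\ref{cor1} and summing over $s$. For PRME, sum \eqref{eq:prme-rate} over tasks. The inner sums $\sum_t\xi_{N_{s,t}}$ concatenate in lexicographic order into $\sum_{u=1}^{mn}\xi_{k(u-1)}$, because $N_{s,t}=k(u-1)$ with $u=n(s-1)+t$.

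A caveat concerns the tuning: $\bar\Gamma_s$ must be deterministic. This is why $B_s$ is built from deterministic $\varepsilon_s,\delta_s,\mu_s$.

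\textbf{Step 3: the limits.} Assume \eqref{eq:qualitative-calibration}. Then $e^{\mu_s\Delta\varepsilon_s}$ stays bounded, while $\Phi_F(\mu_s)\to0$ by dominated convergence. Also $\delta_s(\log k+\mu_s\Delta)\le\max\{\log k,\Delta\}\,\delta_s(1+\mu_s)\to0$. Hence $B_s\to0$.

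Continuous maps preserve this, so $\sqrt{B_s}\to0$ and $B_s^{1/3}\to0$. Cesàro means of a null sequence vanish, which gives the $o(m)$ statements.
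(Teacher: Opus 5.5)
Your Steps 1 and 3 match the paper's argument. The paper also splits on the $\mathcal G_s$-measurable event $E_s$, uses $\log(1+x)\le x$ to get $(k-1)e^{\mu_s\Delta\varepsilon_s}e^{-\mu_s\operatorname{gap}_s}$, integrates by parts against $F$, applies the fallback on $E_s^c$, and finishes with dominated convergence and Ces\`aro. The tie issue you flag is not actually delicate. For $X\ge0$ the identity $e^{-\mu X}=\int_0^\infty\mu e^{-\mu x}\mathbf 1\{X\le x\}\,dx$ holds pointwise, and the CDF bound is only needed for $x>0$. The summation over tasks and the lexicographic concatenation of the moment terms in Step 2 are also as in the paper.

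The gap is in Step 2. The left-hand sides of \eqref{eq:general-margin-pc} and \eqref{eq:general-margin-prme} are $\mathrm{Reg}_{1:m}=\sum_s\sup_{\pi\in\Pi}\mathrm{Reg}_s(\pi)$. However, Corollary~\ref{cor1} and Theorem~\ref{theo:prme-core} bound $\mathrm{Reg}_s(\pi)$ only for a comparator with $\Gamma_s(\pi)\le\bar\Gamma_s$. Your Step 1 certifies this only for $\pi=\pi_s$; for a generic $\pi\in\Pi$ only the fallback \eqref{eq:transfer-uniform-fallback} is available. Plugging in $\pi_s$ therefore bounds $\sum_s\mathrm{Reg}_s(\pi_s)$, not $\mathrm{Reg}_{1:m}$.

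You need the additional fact that $\pi_s$ attains the supremum. This is where the obliviousness clause of Assumption~\ref{ass:general-margin}, which your proposal never uses, enters. The task's losses are fixed before its contexts are drawn, and each $\mathcal B_{s,t}$ is fresh. Hence the comparator's expected cumulative loss is $n$ times the expectation of $\langle\boldsymbol b_{\pi(\mathcal B)},\boldsymbol\Theta_s\rangle$ over a fresh set. Since $\boldsymbol b_a-\bar{\boldsymbol b}\in\mathcal U$ almost surely, $\langle\boldsymbol b_a,\boldsymbol\Theta_s\rangle=\|\boldsymbol P_{\mathcal U}\boldsymbol\Theta_s\|\langle\boldsymbol b_a,\boldsymbol v_s\rangle+\langle\bar{\boldsymbol b},\boldsymbol P_{\mathcal U^\perp}\boldsymbol\Theta_s\rangle$, and the second term does not depend on the action. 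So $\pi_s=\pi_{\boldsymbol v_s}$ minimizes the comparator loss set by set. It follows that $\mathrm{Reg}_s(\pi)\le\mathrm{Reg}_s(\pi_s)$ for every $\pi\in\Pi$, and hence $\mathrm{Reg}_{1:m}=\sum_s\mathrm{Reg}_s(\pi_s)$. With this step added, your proof coincides with the paper's.
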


\begin{corollary}[Eventual feasibility and aggregate rate]
\label{cor:general-margin-feasibility}
Under the conditions of Theorem~\ref{theo:general-margin}, including
\eqref{eq:qualitative-calibration}, the tuned PC-KDE and PRME choices satisfy their
feasibility conditions for all sufficiently large $s$, and any finite infeasible prefix
can be bounded by the trivial $Gn$ term. The PC-KDE and PRME learning terms scale as
$\sqrt n\,o(m)$ and $n^{2/3}o(m)$, respectively, the PRME moment term is
$\widetilde{\mathcal O}(\sqrt{mn/k})$, and the aggregate PRME failure remainder is
$\mathcal O(\delta)$ for fixed problem constants. Thus, for fixed $n$, $k$, and the
remaining problem constants, both bounds are sublinear in the number of tasks.
\end{corollary}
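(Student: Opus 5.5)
The plan is to derive the corollary from Theorem~\ref{theo:general-margin}, using the fact that $B_s\to0$ under \eqref{eq:qualitative-calibration}.

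\textbf{Feasibility for large $s$.} For PC-KDE, the tuning \eqref{eq:pc-tuning} with $\bar\Gamma_s=B_s$ gives
\[
\gamma_s=\frac{Y\kappa_{\mathcal U}}{\zeta}\sqrt{\frac{B_s}{nA_\zeta}},
\]
which tends to $0$. Hence $\gamma_s\le1/2$ for all $s\ge s_0$. The only subtlety is that Corollary~\ref{cor1} requires $\bar\Gamma_s>0$. If some $B_s=0$, I would replace it by $\max\{B_s,\epsilon_s\}$ with $\epsilon_s\downarrow0$; this preserves $o(m)$ sums.

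For PRME, \eqref{eq:prme-tuning} gives $\gamma_s\propto(B_s/n)^{1/3}\to0$. The constraint \eqref{eq:prme-feasibility} has left side $\kappa^3B_sG\to0$ and a fixed positive right side, so it also holds eventually. Let $s_0$ be the first index after which both conditions hold.

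\textbf{Aggregate bound.} I would split the sum over tasks. Tasks $s<s_0$ contribute at most $(s_0-1)Gn$, because each per-round instantaneous regret is bounded by $G$ (Assumption~\ref{ass0}); this is $O(1)$ in $m$. For tasks $s\ge s_0$, I apply the per-task bounds of Corollary~\ref{cor1} and Theorem~\ref{theo:prme-core} with $\bar\Gamma_s=B_s$, which is valid since $\Gamma_s(\pi_s)\le B_s$. Summing gives \eqref{eq:general-margin-pc} and \eqref{eq:general-margin-prme} restricted to $s\ge s_0$, and \eqref{eq:cesaro-transfer} then gives learning terms $\sqrt n\,o(m)$ and $n^{2/3}o(m)$.

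\textbf{Moment term.} I would use \eqref{eq:sum-xi}. The dominant piece is
\[
4R\sqrt{\chi/\lambda}\,L^2\cdot 2\sqrt{2\Lambda(T-1)/k}=\widetilde{\mathcal O}(\sqrt{mn/k}),
\]
where $\Lambda=\log(2dmn/\delta)$ is logarithmic and the remaining pieces are logarithmic in $T$. On the prefix tasks the PRME moment contributions are nonnegative, so the full bound \eqref{eq:sum-xi} still dominates them.

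\textbf{Failure remainder (expected main obstacle).} Substituting the tuning into \eqref{eq:failure-remainder} gives
\[
\eta_s/\gamma_s^2\propto \frac{B_s^{2/3}n^{-2/3}}{B_s^{2/3}n^{-2/3}}\cdot\text{const},
\]
since $\eta_s\propto B_s^{2/3}n^{-2/3}$ and $\gamma_s^2\propto B_s^{2/3}n^{-2/3}$. The precise constant is
\[
\frac{\eta_s}{\gamma_s^2}=\frac{G^{1/3}}{(\psi(1)\chi d)^{1/3}Y^{2/3}}\cdot\frac{G^{4/3}}{(\psi(1)\chi d)^{2/3}Y^{4/3}}=\frac{G^{5/3}\cdots}{\cdots},
\]
and the key point is that it is independent of $s$, $n$ and $B_s$. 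Therefore each $f_s\le c\,\delta/m$ with $c$ depending only on fixed constants, and $\sum_s f_s\le c\delta=\mathcal O(\delta)$.

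Verifying this cancellation exactly, including that no $B_s^{-1}$ blow-up appears as $B_s\to0$, is the step I would check most carefully. Combining the pieces, both totals are $o(m)$ for fixed $n$, $k$ and the remaining constants.
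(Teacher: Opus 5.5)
Your proposal is correct and follows essentially the paper's own argument: feasibility holds eventually because the tuned exploration rates scale as $\sqrt{B_s}$ and $B_s^{1/3}$, while the left side of \eqref{eq:prme-feasibility} scales as $B_s\to0$; a finite prefix is charged at $Gn$; Ces\`aro sums handle the learning terms; \eqref{eq:sum-xi} gives the moment term; and the failure remainder has an $s$-independent constant. The only slip is arithmetic ($\eta_s$ carries $G^{1/3}$ in its denominator, so $\eta_s/\gamma_s^2=G/(\psi(1)\chi dY^2)$ exactly and the first bracket of \eqref{eq:failure-remainder} equals $G\kappa^2/(\chi d)$, which confirms your cancellation), and your $\bar\Gamma_s>0$ worry is moot, since $\operatorname{gap}_s\le\Delta$ forces $F=1$ on $[\Delta,\infty)$ and hence $\Phi_F(\mu)\ge e^{-\mu\Delta}>0$.
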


Condition~\eqref{eq:qualitative-calibration} is qualitative and requires only a weak rate of prior improvement. For instance, with $\mu_s=c\log(s+1)$ it is enough to have
$\varepsilon_s=\mathcal O(1/\log s)$ and
$\delta_s=o(1/\log s)$ for $B_s\to0$ under any admissible $F$ satisfying
\eqref{eq:general-margin-modulus}. No polynomial margin or polynomial tracking rate is required.

Theorem~\ref{theo:general-margin} is conditional on the prior-accuracy assumption \eqref{eq:prior-prediction-error}. It does not prove that the PCRW or Uniform priors used in the experiments satisfy this condition. Appendix~\ref{app:proofs} gives a power-law specialization, and Appendix~\ref{app:recurrent-realization} gives one sufficient construction based on an exogenous descriptor, PC-KDE, and finitely many recurrent task types. This construction is distinct from the experimental PCRW/Uniform protocol and does not cover PRME or LPE.

\begin{table}[H]
\centering
\small
\caption{Core guarantees. Task-number improvements require predictable structure. Unrelated tasks can still have linear total regret.}\label{tab1}
\begin{tabular}{>{\raggedright\arraybackslash}p{0.19\textwidth}>{\raggedright\arraybackslash}p{0.43\textwidth}>{\raggedright\arraybackslash}p{0.28\textwidth}}
\toprule
Setting & Core regret guarantee & Main qualification\\
\midrule
PC-KDE, one task &
$2\sqrt{nA_\zeta\bar\Gamma_s}$ &
Known $\mathcal D$ and exact policy moments\\
PRME, one task &
$\mathcal O(n^{2/3}\bar\Gamma_s^{1/3})$ plus explicit moment error &
Unknown $\mathcal D$ with condition-number dependence\\
LPE, one task &
Eq.~\eqref{eq:lpe-regret} with an explicit projection-bias residual &
No distributional input, $\mathcal O(d)$ update, and a residual that can be $\Theta(n)$ without directional coverage\\
Margin condition &
PC: \eqref{eq:general-margin-pc} and PRME: \eqref{eq:general-margin-prme} &
No power-law gap, and $B_s\to0$ is sufficient\\
\bottomrule
\end{tabular}
\end{table}

\paragraph{Machine-checked verification.}
All theoretical results in this paper have been formalized and
kernel-checked in Lean~4.31.0 with mathlib~v4.31.0. The formalization
contains no \texttt{sorry}, \texttt{admit}, custom axioms, or unsafe proof
bypasses. 

\section{Experiments}\label{sec:experiments}
We evaluate Meta-LinEXP3 on four settings: bounded synthetic transfer, a controlled PC-KDE/PRME/LPE comparison, MovieLens recommendation, and structured hyperspectral tensor sampling on KSC. The synthetic study uses 100 independent runs per similarity setting, the estimator comparison uses 60 runs, MovieLens uses 30 evaluation runs, and the KSC bandit curves use 60 runs. Unless noted otherwise, each curve shows the sample mean with one empirical standard deviation. Competing methods within a run share the same task/context realization and, where applicable, the same underlying random number stream.

LinEXP3 is recovered by setting the task prior to zero. PCRW uses the fixed positive-cosine retrieval weighting in \eqref{eqw1} with $\tau=1$, while Uniform uses \eqref{eqw2}. The synthetic experiment also includes an infeasible oracle that forms a normalized signed-cosine combination from the true task means. We use this oracle only to indicate the amount of transfer available from ground-truth task directions. It is not an upper bound on all possible transfer procedures.

\subsection{Synthetic Transfer Experiment}\label{sec6_1}
We consider $m=20$ sequential tasks with $n=30$ rounds per task, $k=40$ actions, and context dimension $d=5$. For every Monte Carlo run, the context law is regenerated as a bounded elliptical distribution. Specifically, with $\boldsymbol R\in\mathbb R^{d\times d}$ having i.i.d.\ standard Gaussian entries, we set
\[
 \boldsymbol\Sigma=0.05\boldsymbol R^T\boldsymbol R+0.01\boldsymbol I_d,
 \qquad \|\bar{\boldsymbol b}\|=0.1,
 \qquad \boldsymbol b=\bar{\boldsymbol b}+\sqrt d\,\operatorname{chol}(\boldsymbol\Sigma)\boldsymbol u,
\]
where $\boldsymbol u$ is sampled uniformly from the unit sphere. Hence contexts are bounded and have the constructed mean and covariance. The generated distribution is known and its raw second moment is positive definite, so the PRME clipping threshold can be computed exactly.

The round-wise loss vectors satisfy $\|\boldsymbol\theta_{s,t}\|\le1$. Each task mean $\boldsymbol\Theta_s=n^{-1}\sum_t\boldsymbol\theta_{s,t}$ has norm at most $0.5$, while the zero-mean within-task perturbations have radius at most $0.5$, so the generated $\boldsymbol\Theta_s$ is exactly the realized task mean. Task similarity is controlled by a lower bound $\mathrm{CS}_{\min}$ on the pairwise cosine similarity of nonzero task means, with values $-1,-0.5,0.5,$ and $1$. The case $\mathrm{CS}_{\min}=-1$ removes the pairwise lower-bound constraint, but the generator still uses a common random reference direction. It should therefore not be interpreted as a worst-case stream of unrelated tasks.

All four algorithms use PRME. We set
$\eta=\sqrt{\log(k)/n}=0.3507$, $\gamma=\sqrt{\log(k)/(mn)}=0.0784$, and $\mu=\log(k)/\log(n)=1.0846$, with PRME confidence parameter $\delta=0.05$. These fixed values are used for controlled empirical comparison and are not claimed to be theoretically optimal problem-dependent tuning.

\begin{figure}[t]
    \centering
    \begin{minipage}{0.45\linewidth}
        \centering
        \includegraphics[width=\linewidth]{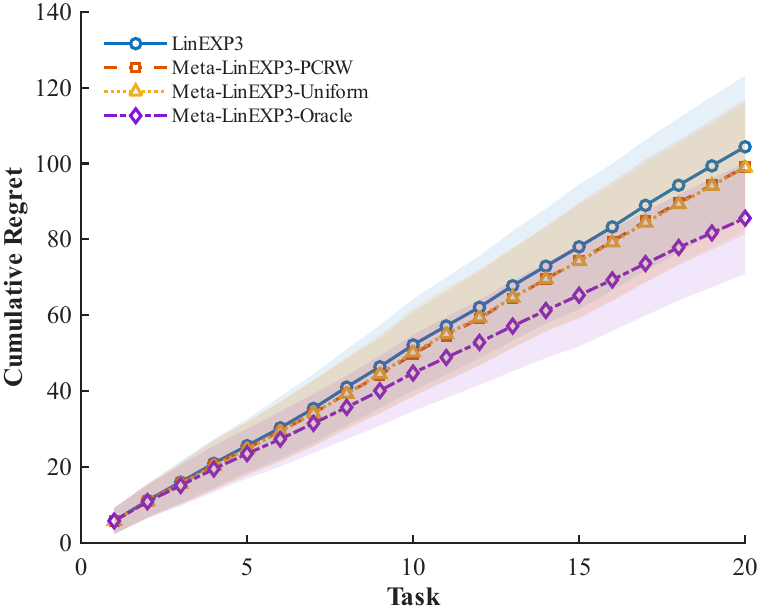}
        \subcaption{$\mathrm{CS}_{\min}=-1$}
    \end{minipage}%
    \hspace{0.04\linewidth}
    \begin{minipage}{0.45\linewidth}
        \centering
        \includegraphics[width=\linewidth]{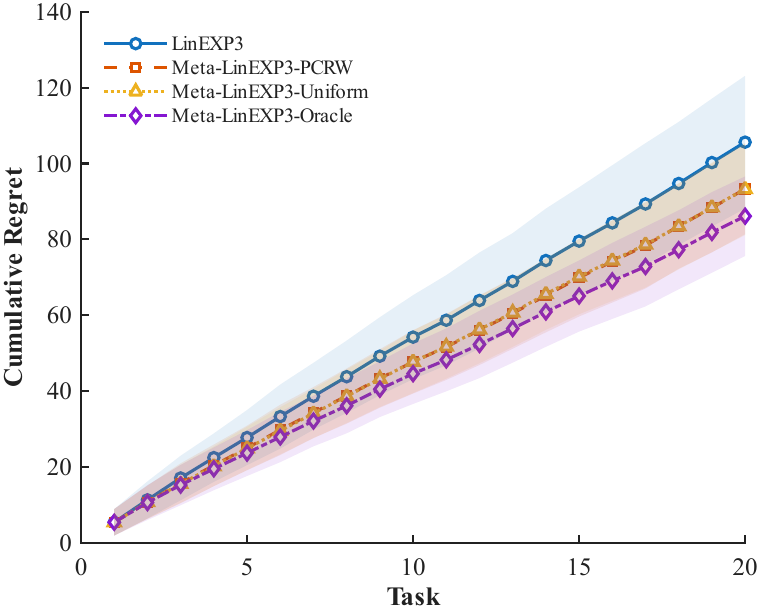}
        \subcaption{$\mathrm{CS}_{\min}=-0.5$}
    \end{minipage}

    \vskip\baselineskip

    \begin{minipage}{0.45\linewidth}
        \centering
        \includegraphics[width=\linewidth]{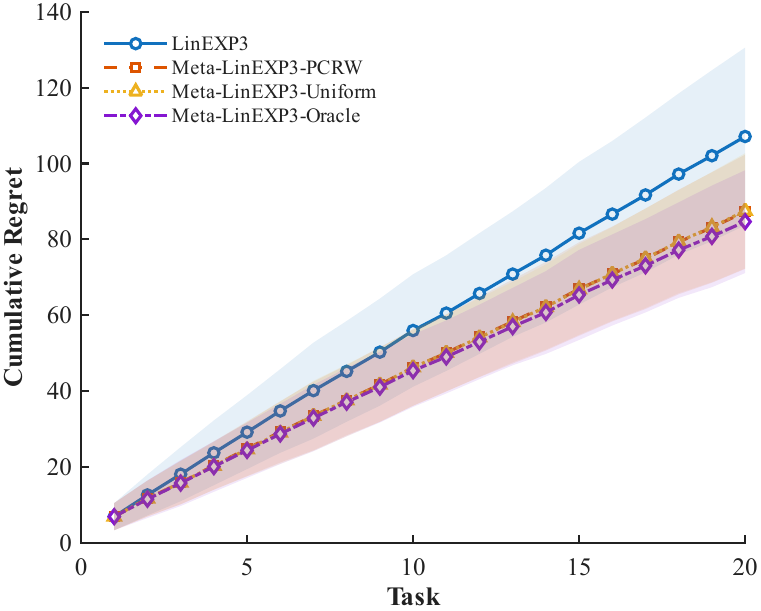}
        \subcaption{$\mathrm{CS}_{\min}=0.5$}
    \end{minipage}%
    \hspace{0.04\linewidth}
    \begin{minipage}{0.45\linewidth}
        \centering
        \includegraphics[width=\linewidth]{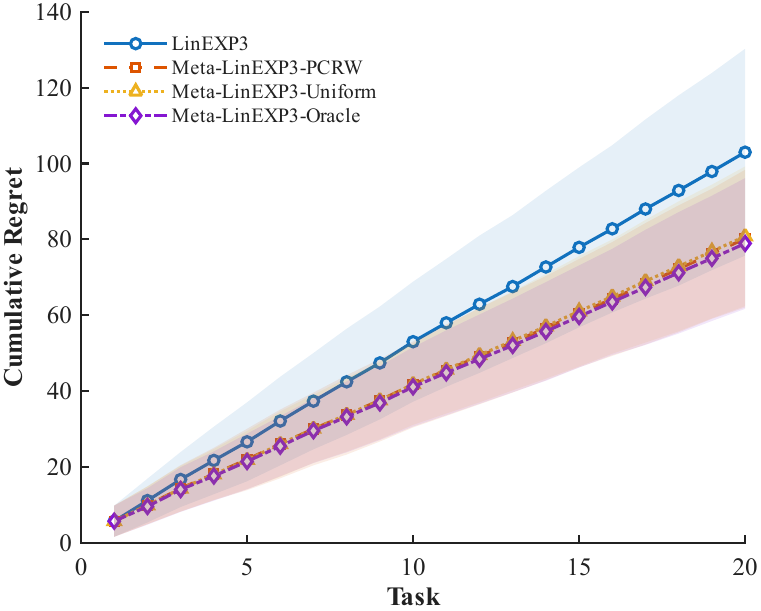}
        \subcaption{$\mathrm{CS}_{\min}=1$}
    \end{minipage}

    \caption{Cumulative regret on the bounded synthetic tasks using PRME. Each panel averages 100 runs. Shaded regions show one standard deviation.}
    \label{fig1}
\end{figure}

Figure~\ref{fig1} shows that the transfer gain increases as the imposed task alignment becomes stronger. At the final task, the mean cumulative regret of LinEXP3 is $104.40$, $105.63$, $107.12$, and $102.98$ for $\mathrm{CS}_{\min}=-1,-0.5,0.5,1$, respectively. The corresponding PCRW values are $99.10$, $93.29$, $87.27$, and $80.20$, which are reductions of approximately $5.1\%$, $11.7\%$, $18.5\%$, and $22.1\%$. Uniform is nearly indistinguishable from PCRW in this generator, with final means $99.02$, $93.24$, $87.54$, and $80.83$. The constrained task construction gives all tasks a common directional component, so a simple average is already informative. The oracle remains best in every panel, although its advantage over the practical priors narrows as alignment increases. The trend is consistent with the fixed prior becoming more informative as task directions align. This experiment does not address adaptive adversarial task sequences.

\subsection{Loss-Estimator Comparison}\label{sec:estimator-exp}
We next compare PC-KDE, PRME, and LPE while holding the task generator and meta-prior mechanism fixed. The experiment uses $m=12$ tasks, $n=40$ rounds, $k=3$ actions, $d=3$, task similarity setting $\mathrm{CS}_{\min}=0.5$, and 60 independent runs. Contexts are drawn uniformly from the four vertices of a regular tetrahedron in $\mathbb R^3$, scaled so that each support point has unit norm. This distribution is centered and has raw second moment $\boldsymbol I_3/3$. Because $k=3$, all $4^3=64$ ordered context tuples can be enumerated, and the policy-selected mean and covariance required by PC-KDE are computed exactly for the current adaptive policy rather than approximated by Monte Carlo integration.

All three methods use PCRW and share the same generated tasks, contexts, action-randomization stream, and hyperparameters $\eta=0.05$, $\gamma=0.45$, and $\mu=\log(3)/\log(40)=0.2978$. These values satisfy the PC-KDE score-stability condition used in this experiment. We report two quantities. For task $s$, the cumulative vector estimation error through round $t$ is
\[
 E_{s,t}=\left\|\sum_{j=1}^t\widehat{\boldsymbol\theta}_{s,j}-\sum_{j=1}^t\boldsymbol\theta_{s,j}\right\|,
\]
and the plotted value averages $E_{s,t}$ over the 12 tasks and then over runs. The second quantity is cumulative regret across tasks.

\begin{figure}[t]
    \centering
    \begin{minipage}{0.45\linewidth}
        \centering
        \includegraphics[width=\linewidth]{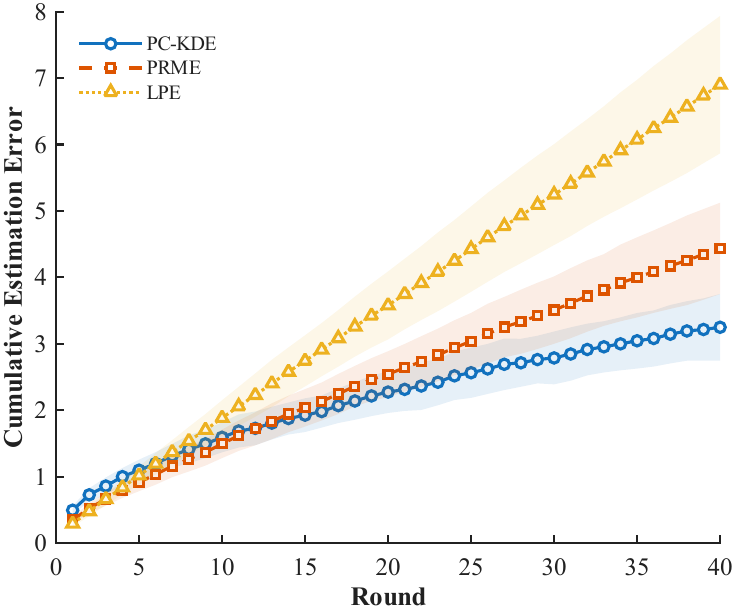}
        \subcaption{Cumulative estimation error within a task.}
    \end{minipage}%
    \hspace{0.04\linewidth}
    \begin{minipage}{0.45\linewidth}
        \centering
        \includegraphics[width=\linewidth]{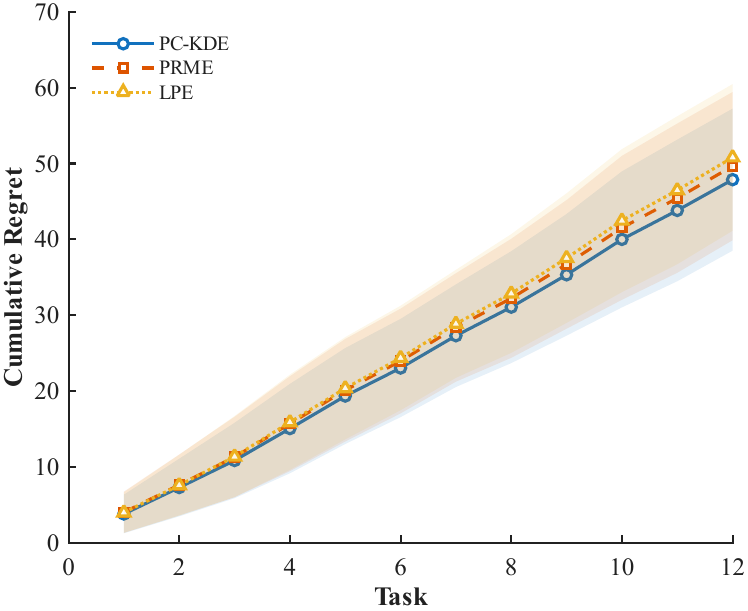}
        \subcaption{Cumulative regret across tasks.}
    \end{minipage}
    \caption{Controlled comparison of PC-KDE, PRME, and LPE on the exact finite-support context distribution. Each curve averages 60 runs and the shaded region shows one standard deviation.}
    \label{fig:estimator-comparison}
\end{figure}

At round 40, the mean cumulative estimation errors are $3.25\pm0.50$ for PC-KDE, $4.44\pm0.69$ for PRME, and $6.90\pm1.03$ for LPE. The ordering is consistent with the estimator constructions: PC-KDE uses exact policy-induced moments and is unbiased on the decision-relevant subspace, PRME incurs finite-sample moment estimation and inverse-propensity error, and LPE recovers only the component along the selected context. The final cumulative regrets are much closer---$47.87\pm9.36$, $49.66\pm9.77$, and $50.80\pm9.66$---showing that full-vector estimation error and decision regret need not move proportionally. When exact selected moments are available, PC-KDE gives the lowest estimation error in this controlled setting. PRME removes the need to know the context distribution at a modest empirical cost here, while LPE trades vector accuracy for an $\mathcal O(d)$ update.

\subsection{Movie Recommendation Task}\label{sec6_2}
We evaluate Meta-LinEXP3 on the MovieLens 100K data \cite{Data}, which contain 100,000 explicit ratings from 943 users on 1,682 movies. Movie genre indicators form $d=19$ context features, users are tasks, and movies are actions. Starting from the sparse user--movie rating matrix, we construct the experiment reward matrix using the attribute-clustering completion procedure detailed in Appendix~\ref{app:movielens-preprocess}: movies are grouped by their non-exclusive genre labels, users are assigned according to their observed genre preferences, and each missing entry is filled from the corresponding user-cluster/genre statistics whenever a valid statistic is available. Entries that remain zero are treated as unavailable and are excluded from candidate generation.

Eighty users are sampled once as a calibration pool and excluded from every reported evaluation run. The remaining users form the evaluation pool. Each of 30 runs samples $m=200$ evaluation users without replacement. Every user/task lasts $n=40$ rounds, and $k=10$ positive-rated candidate movies are sampled in each round. Meta-LinEXP3 and LinEXP3 use LPE, with $\eta=\sqrt{\log(10)/40}=0.2399$ and $\gamma=\sqrt{\log(10)/(200\cdot40)}=0.0170$. Because MovieLens ratings and binary genre vectors have a different scale from the synthetic experiment, the prior concentration parameter is calibrated only on the held-out 80 users. The tested grid is $\{0.6242,1.2484,2.4968\}$. Averaging three calibration repetitions selects $\mu=1.2484$, which is then frozen for all 30 evaluation runs.

We compare PCRW and Uniform against LinEXP3, Gaussian linear Thompson sampling (TS) \cite{TS,AgrawalGoyal2013}, and a conjugate hierarchical linear-Gaussian Meta-TS adaptation based on \cite{O_19}. Let $y_{s,t,a}^{\mathrm{ML}}\in[1,5]$ denote the positive completed rating used as reward. LinEXP3 and both Meta-LinEXP3 variants use the loss
\begin{equation}\label{eq:movielens-loss}
 \ell_{s,t,a}^{\mathrm{ML}}=-y_{s,t,a}^{\mathrm{ML}}\in[-5,-1],
\end{equation}
with no shift or normalization. TS and Meta-TS use $y_{s,t,a}^{\mathrm{ML}}$ directly. The reported metric is the actual cumulative best-candidate reward gap,
$\sum_{s=1}^m\sum_{t=1}^n(\max_{a\in[k]} y_{s,t,a}^{\mathrm{ML}}-y_{s,t,A_{s,t}}^{\mathrm{ML}})$, with no artificial affine shift. This empirical metric differs from the fixed-policy pseudo-regret analyzed in Section~\ref{sec:theory}. Moreover, the discrete rating data do not satisfy the noiseless linear-loss assumptions exactly.

\begin{figure}[t]
    \centering
    \includegraphics[width=0.52\linewidth]{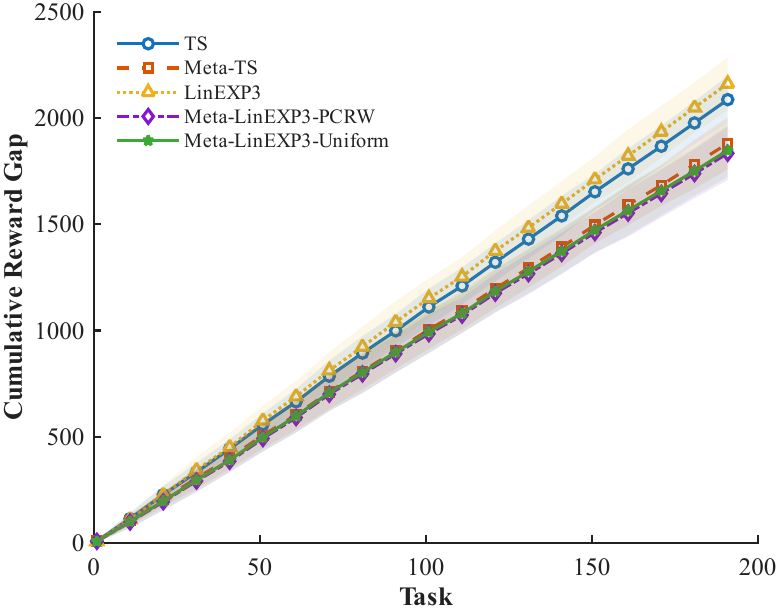}
    \caption{Cumulative best-candidate reward gap on MovieLens. Curves average 30 evaluation runs and shaded regions show one standard deviation. The plot reports the unshifted accumulated gap.}
    \label{figM}
\end{figure}

In Figure~\ref{figM}, the two Meta-LinEXP3 variants attain the lowest mean final reward gaps among the methods tested. At task 200, the mean cumulative gaps are $2256.7$ for LinEXP3, $2185.6$ for TS, $1966.6$ for Meta-TS, $1917.9$ for PCRW, and $1931.8$ for Uniform. Relative to LinEXP3, the mean gaps of PCRW and Uniform are lower by about $15.0\%$ and $14.4\%$, respectively. They are also about $2.5\%$ and $1.8\%$ below the mean gap of the tested Meta-TS adaptation. These differences are consistent with beneficial cross-user transfer, but they should not be interpreted as a general ranking of the methods: both the stochastic Gaussian model underlying TS/Meta-TS and the adversarial linear model underlying Meta-LinEXP3 are approximations to the MovieLens setting.

\subsection{Structured Tensor Sampling for Hyperspectral Data}\label{sec6_3}
We further evaluate Meta-LinEXP3 on structured sampling of the Kennedy Space Center (KSC) hyperspectral cube \cite{KSC}. The data tensor has size $512\times614\times176$, and every spectral slice is treated as a task, giving $m=176$ and context dimension $d=1126$. For each slice we compute rank-$K=10$ SVD factors. Each candidate sampling set contains 400 points, each task runs for $n=50$ rounds, and each round presents $k=20$ candidates drawn from a slice-specific FFW-derived proposal.

A complete evaluation candidate bank is generated once and reused in all 60 evaluation repetitions. Each candidate is sampled without replacement and must satisfy full-rank feasibility and a reciprocal-condition-number threshold of $10^{-3}$ for the two Gram matrices entering the reconstruction MSE. The within-slice learner uses the absolute-MSE LPE update with $\eta=2.45\times10^{-4}$ and $\gamma=0.01845$. The KSC loss is nonlinear in the incidence context, and its raw positive MSE contains a large taskwise offset. In a completed-task LPE summary, this absolute offset mainly reflects selection frequency rather than relative sampling-set quality. For cross-slice transfer, we therefore use the centered relative loss summary defined below. For a completed task define
\begin{equation}\label{eq:ksc-meta-summary}
 \boldsymbol z_s^{\rm meta}
 =\frac1n\sum_{t=1}^n
 \frac{(\ell_{s,t,A_{s,t}}-\bar\ell_s)\boldsymbol b_{s,t,A_{s,t}}}{L_{\mathrm{samp}}},
 \qquad
 \bar\ell_s=\frac1n\sum_{t=1}^n\ell_{s,t,A_{s,t}} .
\end{equation}
After subtracting the coordinate mean, we apply the same regularized direction normalization used elsewhere. A below-average selected set contributes negative cost mass to its coordinates, so overlap with historically good sampling coordinates receives more probability under the standard fixed prior $\exp(-\mu\langle\boldsymbol b,\boldsymbol h_s\rangle)$. This centered relative loss summary is application specific. It is not claimed to be an unbiased estimator of the linear loss vector from the theoretical ALCB model.

To separate scale selection from the reported suffix, slices $1$--$24$ form a chronological calibration prefix. The first 12 prefix slices initialize the retrieval history, and slices $13$--$24$ score a fixed early-search summary computed from the best-observed-MSE trajectory over rounds $1{:}10$. Calibration uses an independent candidate bank. The initial grid is
\[
 \mu\in\{0,\;0.0383,\;0.1,\;0.25,\;0.5,\;1,\;2,\;4,\;8\},
\]
where $\mu=0$ allows the procedure to reject transfer. If the PCRW optimum remains at the largest positive value, the grid is doubled until an interior optimum or saturation is observed. Twenty calibration repetitions give PCRW early-search scores $332.642$ at $\mu=8$, $331.763$ at $16$, and $331.810$ at $32$. Hence, the minimum occurs at $\mu=16$ and the $16$--$32$ comparison indicates saturation rather than a truncated boundary optimum. We therefore freeze $\mu=16$ before the held-out evaluation. The same frozen value is used for Uniform so that PCRW versus Uniform remains a controlled aggregation-rule ablation rather than a separately tuned baseline comparison.

{
Reported evaluation uses slices $25$--$176$ with a separate candidate
bank and 60 paired repetitions. PCRW, Uniform, and LinEXP3 share the same
candidate bank, 50-round search budget, $k$, LPE update, $\eta$, $\gamma$,
and paired inverse-CDF random numbers. LinEXP3 uses
$\boldsymbol h_s=\boldsymbol0_d$, whereas PCRW and Uniform use the
cross-slice priors constructed from completed slices. We also report
slice-wise FFW \cite{LIHAO} and reverse-greedy frame-potential sampling
(Greedy-FP) \cite{C}. Both construct one sampling set directly for each
slice. The bandit methods instead report the best set observed over
50 sequential MSE-feedback rounds, so comparisons with FFW and Greedy-FP
reflect recovery quality under different search procedures rather than
equal search effort.
}

\begin{figure}[t]
    \centering
    \begin{minipage}{0.45\linewidth}
        \centering
        \includegraphics[width=\linewidth]{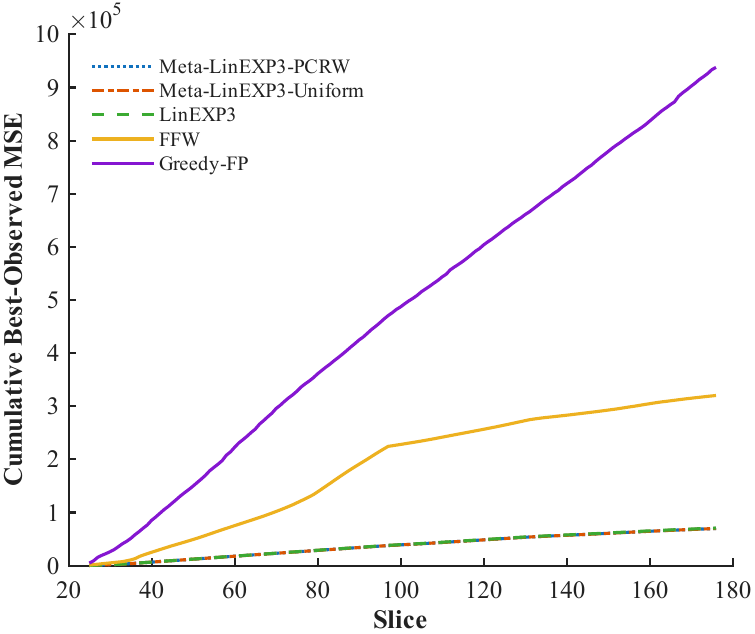}
        \subcaption{Cumulative best-observed MSE on slices 25--176.}
    \end{minipage}%
    \hspace{0.04\linewidth}
    \begin{minipage}{0.45\linewidth}
        \centering
        \includegraphics[width=\linewidth]{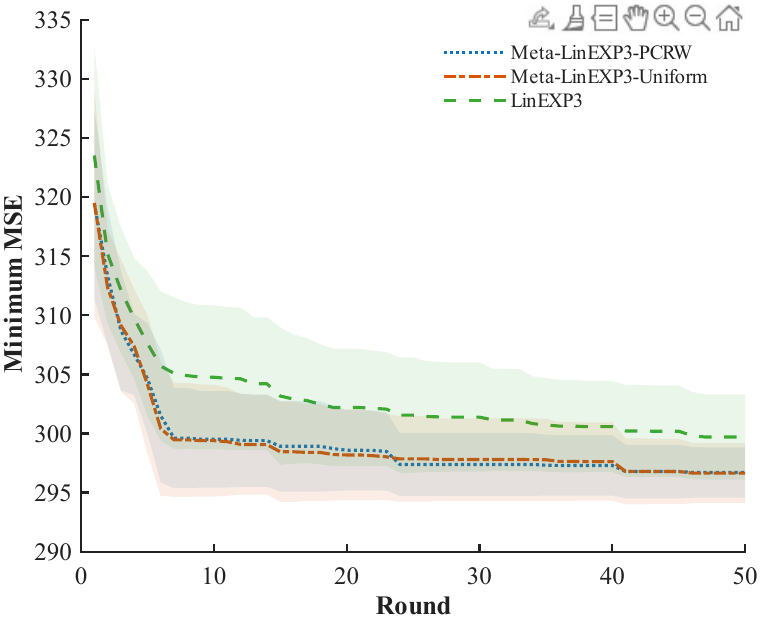}
        \subcaption{Best-observed MSE on spectral slice 176.}
    \end{minipage}
    \caption{KSC structured sampling. PCRW, Uniform, and LinEXP3 use the same 50 round sequential protocol. Their curves average 60 paired runs and shading denotes one standard deviation. FFW and Greedy-FP construct one sampling set directly for each slice.}
    \label{fig5}
\end{figure}

{
As shown in Figure~\ref{fig5}(a), cumulative best-observed MSE over the
reported evaluation slices is $69929.56\pm62.95$ for PCRW,
$69911.67\pm56.90$ for Uniform, and $70715.18\pm77.57$ for LinEXP3,
corresponding to reductions of $1.111\%$ and $1.136\%$ relative to
LinEXP3. The paired differences are $-785.63$ with 95\% CI
$[-807.45,-763.81]$ for PCRW--LinEXP3 and $-803.51$ with 95\% CI
$[-825.16,-781.86]$ for Uniform--LinEXP3. These results support the
benefit of the transferred cross-slice prior.

FFW and Greedy-FP finish at cumulative recovery MSEs $320255.19$ and
$937907.31$, respectively, substantially above those of the three bandit
methods. This advantage is consistent with the bandit methods directly
using recovery MSE as feedback while explicitly exploring alternative
candidate sets. In contrast, FFW and Greedy-FP optimize surrogate sampling
criteria that only indirectly target reconstruction error. Because FFW
and Greedy-FP construct one sampling set directly whereas the bandit
methods search sequentially over 50 MSE-feedback rounds, this comparison
should be interpreted as a recovery-quality comparison under different
search procedures rather than an equal-search-effort comparison.
}

{
The common linear scale in Figure~\ref{fig5}(a) visually compresses the
difference among the three bandit methods because FFW and Greedy-FP extend
the axis to much larger MSE values. The approximately $1.1\%$ reduction
relative to LinEXP3 is therefore more clearly quantified by the paired
differences and confidence intervals above.
}

{
Figure~\ref{fig5}(b) reports the best MSE observed by each bandit method
within the first $t$ rounds. Because additional rounds give every method
more chances to encounter a good candidate, the best-so-far gaps naturally
narrow as $t$ increases. Thus, the more informative comparison is how
quickly a method discovers a good sampling set when only a few search
rounds are available.

The fixed first-10-round best-observed-MSE summary is
$473.509\pm0.434$ for PCRW, $473.568\pm0.485$ for Uniform, and
$479.719\pm0.533$ for LinEXP3, corresponding to improvements of
$1.294\%$ and $1.282\%$. On slice 176, PCRW reaches mean best-observed MSE
$304.65$ by round 5, $299.51$ by round 10, and $298.92$ by round 15,
compared with $307.62$, $304.77$, and $303.17$ for LinEXP3. By round 50,
the means are $296.70$ and $299.70$. Both Meta-LinEXP3 variants outperform
LinEXP3 in the early rounds, and the trajectory flattens markedly after
roughly seven rounds. Thus, the transfer benefit is most pronounced when
only a few search rounds are available. These results show that the
cross-slice prior improves early search efficiency, allowing Meta-LinEXP3
to identify high-quality sampling sets with fewer within-slice observations.
This should be interpreted as a warm-start advantage rather than evidence
of faster asymptotic convergence.

The transfer diagnostics are consistent with this interpretation:
mean prior-score/MSE correlations are positive ($0.332$ for PCRW and
$0.494$ for Uniform), mean total-variation distances from zero-prior
LinEXP3 are $0.652$ and $0.438$, and paired action-disagreement rates are
$87.1\%$ and $78.5\%$. These diagnostics confirm that the transferred
summary materially changes the action distribution, consistent with the
observed early-round improvement.
}

{
We also measure algorithmic wall-clock time over the 152 evaluation slices.
The complete 50-round searches require $0.7244\pm0.0441$\,s for PCRW,
$0.6940\pm0.0440$\,s for Uniform, and $0.3996\pm0.0263$\,s for
LinEXP3, while FFW and Greedy-FP require $0.0750$\,s and $1.5236$\,s,
respectively. These timings exclude common SVD/data preprocessing,
candidate-bank generation, candidate-MSE precomputation, plotting,
post-hoc diagnostics, and downstream reconstruction evaluation.
Transfer adds moderate overhead over LinEXP3, while the measured
50-round Meta-LinEXP3 search remains below half the Greedy-FP search
time. Since the best-observed-MSE curves flatten markedly after roughly
seven rounds, applications with strict latency or computation constraints
can reduce the within-slice search budget to about seven rounds,
substantially shortening the sequential-search runtime while incurring
only a modest increase in best-observed MSE. This is a practical
implication of the observed trajectory rather than a separately
benchmarked seven-round runtime result.
}

To assess sensitivity to the final candidate bank, we independently regenerate six additional banks and run 12 paired action-random repetitions on each, keeping the prefix-calibrated $\mu=16$ fixed. Treating each bank mean as the bootstrap cluster, PCRW--LinEXP3 cumulative MSE is $-772.77$ with 95\% bootstrap CI $[-800.38,-744.04]$, a $1.093\%$ improvement. Uniform--LinEXP3 is $-783.48$ with CI $[-810.20,-752.60]$, a $1.108\%$ improvement. All six banks favor both transfer variants. The corresponding first-10-round early-search improvements are $1.239\%$ for PCRW and $1.211\%$ for Uniform, again with all six banks favorable. The close agreement between the fixed-bank and independent-bank results indicates that the reported gain is not specific to a single favorable candidate realization.

{
The KSC experiment is an application-level study rather than a direct
test of the linear theory. The inverse-Gram MSE is nonlinear in the
incidence context and the candidate distribution is slice dependent, so
the experiment does not validate the linear/common-context-distribution
theory. Instead, it evaluates whether the transferred prior remains useful
in a structured nonlinear sampling application. The results show that
cross-slice information improves candidate discovery when only a few
search rounds are available, while the advantage naturally narrows as
additional search opportunities become available.
}

\section{Conclusion}
Meta-LinEXP3 constructs a predictable prior from completed task summaries and keeps that prior fixed throughout the next ALCB task. For known context distributions, PC-KDE gives an intrinsic-dimension $\mathcal O(\sqrt n)$ per-task guarantee. For unknown distributions, PRME gives an explicit $\mathcal O(n^{2/3})$ leading term together with a finite-sample moment estimation error. Under the stated margin and prior-accuracy conditions, these taskwise guarantees translate into sublinear transfer-dependent terms over structured task sequences.

The experiments support four observations. In the bounded synthetic generator, transfer gains increase with task alignment. In the exact finite-support study, PC-KDE gives the lowest cumulative vector estimation error. On MovieLens, both Meta-LinEXP3 variants have lower mean final reward gaps than the tested baselines. {
On KSC, the shared sequential comparison gives about a $1.1\%$ cumulative
reduction relative to LinEXP3. Because best-observed MSE is a best-so-far
quantity, the smaller late-round separation is expected as all methods
receive more opportunities to find good candidates. The main
meta-learning benefit is therefore the warm-start advantage when only a
few search rounds are available. The same direction is observed on six
independently regenerated candidate banks. Measured runtime places both
Meta-LinEXP3 variants between LinEXP3 and Greedy-FP, showing that the
transfer gain is obtained with moderate additional cost.
}

The main limitations are computational and structural. Exact PC-KDE requires policy-induced moments, while PRME incurs inverse-propensity variance and $\mathcal O(d^2)$ memory. Improvement with the number of tasks is conditional on predictable structure that makes the prior increasingly accurate. Unrelated tasks need not yield any transfer gain. In addition, MovieLens and KSC do not exactly satisfy the noiseless-linear/common-context-distribution model, and the KSC centered relative loss summary is an application-specific outer representation rather than a theoretically unbiased loss vector estimator. Extending the analysis to approximate policy moments and to nonlinear application-specific task representations remains an important direction.
\section*{Funding}
The work was supported by the Major Scientific and Technological Innovation Platform Project of Hunan Province (2024JC1003).

\section*{Conflict of Interest}
The authors declare no relevant financial or nonfinancial conflicts of interest.



\bibliographystyle{plainnat}
\bibliography{references}

\clearpage
\appendix
\section{Proofs of the Theoretical Results}\label{app:proofs}

For readability, boldface is omitted locally in the derivations below when vector or matrix type is unambiguous. The symbols correspond to their boldface counterparts in the main text.

\subsection{Technical Results Deferred from the Main Text}\label{app:technical-results}

\paragraph{Changing priors and the general decomposition.}
For a predictable sequence of strictly positive kernels $r_{s,t}$, $t\in[n+1]$, define the scalar prior-drift charge
\begin{align}
 \omega_{s,t}(\mathcal B)
 &=\max_{a\in[k]}
 \log\frac{r_{s,t+1}(a\mid\mathcal B)}
 {r_{s,t}(a\mid\mathcal B)},\label{eq:prior-drift}\\
 \Gamma_s^{\mathrm{dyn}}(\pi)
 &=\mathbb E\!\left[
 -\log r_{s,n+1}(\pi(\mathcal B^0)\mid\mathcal B^0)
 +\sum_{t=1}^{n}\omega_{s,t}(\mathcal B^0)
 \right].\label{eq:transfer-complexity}
\end{align}
For softmax kernels
$r_{s,t}(a\mid\mathcal B)\propto
\exp[-\mu_s\langle\boldsymbol b_a,\boldsymbol h_{s,t}\rangle]$
with a common prior concentration parameter $\mu_s$,
\begin{equation}\label{eq:prior-drift-bound}
 \omega_{s,t}(\mathcal B)
 \le\mu_s\Delta
 \|\boldsymbol h_{s,t+1}-\boldsymbol h_{s,t}\|.
\end{equation}
Algorithm~\ref{alg1} has $\boldsymbol h_{s,t}=\boldsymbol h_s$, so
$\Gamma_s^{\mathrm{dyn}}(\pi)=\Gamma_s(\pi)$.

\begin{lemma}[Changing-prior exponential weights]\label{lem:changing-prior}
Fix $\mathcal B$ and scalar scores $z_{t,a}$, and write
$\boldsymbol z_t=(z_{t,1},\ldots,z_{t,k})^T$. Let
\begin{equation}\label{eq:q-from-scores}
 q_{s,t}(a\mid\mathcal B)=
 \frac{r_{s,t}(a\mid\mathcal B)
 \exp[-\eta\sum_{j<t}z_{j,a}]}
 {\sum_{a'=1}^k r_{s,t}(a'\mid\mathcal B)
 \exp[-\eta\sum_{j<t}z_{j,a'}]},
\end{equation}
and set
$\boldsymbol q_{s,t}(\mathcal B)=
(q_{s,t}(1\mid\mathcal B),\ldots,q_{s,t}(k\mid\mathcal B))^T$.
If $|\eta z_{t,a}|\le\zeta\le1$, then, for every $a^\star\in[k]$,
\begin{equation}\label{eq:changing-prior-bound}
 \sum_{t=1}^{n}\langle \boldsymbol q_{s,t}-\boldsymbol e_{a^\star},
 \boldsymbol z_t\rangle
 \le
 \frac{-\log r_{s,n+1}(a^\star\mid\mathcal B)
 +\sum_{t=1}^{n}\omega_{s,t}(\mathcal B)}{\eta}
 +\psi(\zeta)\eta\sum_{t=1}^{n}\sum_{a=1}^k
 q_{s,t}(a\mid\mathcal B)z_{t,a}^2.
\end{equation}
\end{lemma}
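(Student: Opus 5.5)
We use the standard potential-function argument for exponential weights, adapted to a prior that changes from round to round. Fix $\mathcal B$ and define the unnormalized weights $w_{t,a}=r_{s,t}(a\mid\mathcal B)\exp[-\eta\sum_{j<t}z_{j,a}]$ and the potential $W_t=\sum_a w_{t,a}$, so that $q_{s,t}(a\mid\mathcal B)=w_{t,a}/W_t$. Note that $W_1=\sum_a r_{s,1}(a\mid\mathcal B)=1$. Define the shadow weights $\tilde w_{t+1,a}=r_{s,t}(a\mid\mathcal B)\exp[-\eta\sum_{j\le t}z_{j,a}]$, which use the old prior with the updated scores, and set $\tilde W_{t+1}=\sum_a\tilde w_{t+1,a}$.

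\textbf{Step 1 (one-step loss bound).} Since $\tilde w_{t+1,a}=w_{t,a}e^{-\eta z_{t,a}}$, we have
\[
\tilde W_{t+1}/W_t=\sum_a q_{s,t}(a)e^{-\eta z_{t,a}}.
\]
Write $x=-\eta z_{t,a}$, so $|x|\le\zeta\le1$. The function $\psi$ is increasing, hence $e^{x}\le 1+x+\psi(\zeta)x^2$ for $|x|\le\zeta$; for negative $x$ this follows because $\psi(x)\le\psi(\zeta)$ there as well. Using also $\log(1+u)\le u$, we get
\[
\log(\tilde W_{t+1}/W_t)\le-\eta\langle\boldsymbol q_{s,t},\boldsymbol z_t\rangle+\psi(\zeta)\eta^2\sum_a q_{s,t}(a)z_{t,a}^2 .
\]

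\textbf{Step 2 (prior-change step).} Because $w_{t+1,a}=\tilde w_{t+1,a}\,r_{s,t+1}(a)/r_{s,t}(a)\le\tilde w_{t+1,a}e^{\omega_{s,t}(\mathcal B)}$, we have $\log(W_{t+1}/\tilde W_{t+1})\le\omega_{s,t}(\mathcal B)$. This is the only place where the drift charge \eqref{eq:prior-drift} enters. It needs no sign condition: $\omega_{s,t}$ is a maximum of log-ratios of two probability vectors, hence nonnegative, but that fact is not even required.

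\textbf{Step 3 (telescoping and comparator).} Summing Steps 1 and 2 over $t=1,\dots,n$ gives
\[
\log W_{n+1}-\log W_1\le-\eta\sum_t\langle\boldsymbol q_{s,t},\boldsymbol z_t\rangle+\psi(\zeta)\eta^2\sum_{t,a}q_{s,t}(a)z_{t,a}^2+\sum_t\omega_{s,t}(\mathcal B).
\]
For the lower bound, $W_{n+1}\ge w_{n+1,a^\star}=r_{s,n+1}(a^\star\mid\mathcal B)\exp[-\eta\sum_{t\le n}z_{t,a^\star}]$, and $\log W_1=0$. Rearranging and dividing by $\eta$ yields \eqref{eq:changing-prior-bound}. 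Strict positivity of the kernels makes every logarithm finite.

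The main point to check is the quadratic inequality in Step 1 for possibly negative scores, i.e. $x\in[-\zeta,0]$. We handle it by noting that $\psi(x)=(e^x-1-x)/x^2$ is increasing on $\mathbb R$, extended by $\psi(0)=1/2$. Then $e^x-1-x=\psi(x)x^2\le\psi(\zeta)x^2$ for all $|x|\le\zeta$. The rest is routine bookkeeping: the prior at round $n+1$ is the one used in the comparator term, and the drift is charged between the shadow and actual potentials.
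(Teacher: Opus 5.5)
Your proposal is correct and follows the paper's proof step for step. It uses the same potential $W_t$ with $W_1=1$, the same shadow potential $\widetilde W_{t+1}$ built with the old prior, the $\psi(\zeta)$ quadratic bound, the drift step $W_{t+1}\le e^{\omega_{s,t}(\mathcal B)}\widetilde W_{t+1}$, and the comparator lower bound on $W_{n+1}$ via $r_{s,n+1}(a^\star\mid\mathcal B)$. Your monotonicity argument for $(e^x-1-x)/x^2$ additionally justifies the elementary inequality, which the paper states without proof.
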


For a policy kernel $w$, set
$\boldsymbol m_w(\mathcal B)=\sum_aw(a\mid\mathcal B)\boldsymbol b_a$,
and let $\mathrm{unif}$ denote the uniform kernel. For an arbitrary estimator, define
\begin{equation}\label{eq:general-bias}
 \boldsymbol u_{s,t}
 =\mathbb E_{s,t}^{\mathrm{pre}}
 [\widehat{\boldsymbol\theta}_{s,t}]
 -\boldsymbol\theta_{s,t}.
\end{equation}

\begin{lemma}[General transfer and bias decomposition]\label{lem01}
Let $\boldsymbol a_{s,t}$ be $\mathcal H_{s,t}$-measurable and put
$z_{s,t,a}(\mathcal B^0)=
\langle\boldsymbol b_a^0-\boldsymbol a_{s,t},
\widehat{\boldsymbol\theta}_{s,t}\rangle$.
Suppose $|\eta z_{s,t,a}(\mathcal B^0)|\le\zeta\le1$ almost surely. Define
\begin{align}
 \mathcal W_{s,t}
 &=\mathbb E\!\left[
 \sum_{a=1}^kp_{s,t}(a\mid\mathcal B^0)
 z_{s,t,a}(\mathcal B^0)^2\right],
 \label{eq:general-second-moment}\\
 \mathcal E_{s,t}(\pi)
 &=\mathbb E\!\left[
 \left\langle\boldsymbol m_{\mathrm{unif}}(\mathcal B^0)
 -\boldsymbol b^0_{\pi(\mathcal B^0)},
 \boldsymbol\theta_{s,t}\right\rangle\right],
 \label{eq:exact-exploration}\\
 \mathfrak B_s(\pi)
 &=-(1-\gamma)\sum_{t=1}^{n}\mathbb E\!\left[
 \left\langle\boldsymbol m_{q_{s,t}}(\mathcal B^0)
 -\boldsymbol b^0_{\pi(\mathcal B^0)},
 \boldsymbol u_{s,t}\right\rangle\right].
 \label{eq:exact-bias}
\end{align}
Then
\begin{equation}\label{eq:general-regret}
 \mathrm{Reg}_s(\pi)
 \le\frac{(1-\gamma)\Gamma_s^{\mathrm{dyn}}(\pi)}{\eta}
 +\psi(\zeta)\eta\sum_{t=1}^{n}\mathcal W_{s,t}
 +\gamma\sum_{t=1}^{n}\mathcal E_{s,t}(\pi)
 +\mathfrak B_s(\pi),
\end{equation}
and $\mathcal E_{s,t}(\pi)\le G$.
\end{lemma}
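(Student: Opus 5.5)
We prove Lemma~\ref{lem01} by working one round at a time, at the pre-context time $\mathcal H_{s,t}$, and then applying Lemma~\ref{lem:changing-prior} on a ghost set $\mathcal B^0$.

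\textbf{Step 1: exact per-round regret.} Since $\mathcal B_{s,t}$ is fresh given $\mathcal H_{s,t}$ and $\boldsymbol\theta_{s,t}$ is $\mathcal H_{s,t}$-measurable, we may replace $\mathcal B_{s,t}$ by $\mathcal B^0$ in every pre-context expectation. This gives
\[
\mathbb E^{\mathrm{pre}}_{s,t}\bigl[\langle \boldsymbol b_{s,t,A_{s,t}}-\boldsymbol b_{s,t,\pi(\mathcal B_{s,t})},\boldsymbol\theta_{s,t}\rangle\bigr]
=\mathbb E_{\mathcal B^0}\bigl[\langle \boldsymbol m_{p_{s,t}}(\mathcal B^0)-\boldsymbol b^0_{\pi(\mathcal B^0)},\boldsymbol\theta_{s,t}\rangle\bigr].
\]
Write $p=(1-\gamma)q+\gamma\,\mathrm{unif}$. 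The instantaneous regret then splits into $(1-\gamma)\mathbb E\langle \boldsymbol m_q-\boldsymbol b^0_\pi,\boldsymbol\theta_{s,t}\rangle$ plus $\gamma\mathcal E_{s,t}(\pi)$.

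\textbf{Step 2: move from true losses to estimates.} Substitute $\boldsymbol\theta_{s,t}=\mathbb E^{\mathrm{pre}}_{s,t}[\widehat{\boldsymbol\theta}_{s,t}]-\boldsymbol u_{s,t}$. The $-\boldsymbol u_{s,t}$ part produces exactly the $t$-th summand of $\mathfrak B_s(\pi)$.

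For the other part, use that $\widehat{\boldsymbol\theta}_{s,t}$ depends on $\mathcal B_{s,t}$, while the ghost set is independent of everything else. We may therefore pull the conditional expectation outside:
\[
\mathbb E\langle \boldsymbol m_q(\mathcal B^0)-\boldsymbol b^0_\pi,\widehat{\boldsymbol\theta}_{s,t}\rangle .
\]
Next, recenter with $\boldsymbol a_{s,t}$. Because $\sum_a q(a)=1$, we have $\langle \boldsymbol m_q-\boldsymbol b_{a^\star},\widehat\theta\rangle=\langle \boldsymbol q_{s,t}-\boldsymbol e_{a^\star},\boldsymbol z_t\rangle$ with $z_{t,a}=z_{s,t,a}(\mathcal B^0)$. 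The shift by $\boldsymbol a_{s,t}$ cancels in this difference. It also leaves the exponential weights $q_{s,t}(\cdot\mid\mathcal B^0)$ unchanged, since a common additive shift of the scores cancels in the normalization.

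\textbf{Step 3: apply the exponential-weights lemma.} For fixed $\mathcal B^0$ and the realized estimate sequence, apply Lemma~\ref{lem:changing-prior} with $a^\star=\pi(\mathcal B^0)$. This is permitted because $|\eta z|\le\zeta$ holds almost surely. Multiply by $(1-\gamma)$ and take the expectation over both the history and $\mathcal B^0$. The prior term becomes $(1-\gamma)\Gamma_s^{\mathrm{dyn}}(\pi)/\eta$.

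The variance term gives $(1-\gamma)\psi(\zeta)\eta\sum_t\mathbb E\sum_a q z^2$. We bound it using $(1-\gamma)q\le p$, so it is at most $\psi(\zeta)\eta\sum_t\mathcal W_{s,t}$.

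\textbf{Main obstacle.} The delicate point is measurability. The ghost-set weights $q_{s,t}(\cdot\mid\mathcal B^0)$ must be functions of $\mathcal H_{s,t}$ and $\mathcal B^0$ only, which holds because the kernel is fixed pre-context. Also, Lemma~\ref{lem:changing-prior} has to be applied pathwise to one fixed $\mathcal B^0$ across all $t$; this works because the ghost set is shared over rounds and the lemma holds for every $\mathcal B$. Finally, we need the interchange of the summation over $t$ with the tower property. This is justified by boundedness, since $|\eta z|\le\zeta$ and the support is bounded.

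\textbf{The bound $\mathcal E_{s,t}(\pi)\le G$.} For each $\mathcal B^0$, the average $\langle \boldsymbol m_{\mathrm{unif}},\boldsymbol\theta\rangle$ is at most $\max_a\langle \boldsymbol b_a,\boldsymbol\theta\rangle$, and $\langle \boldsymbol b_\pi,\boldsymbol\theta\rangle\ge\min_a\langle \boldsymbol b_a,\boldsymbol\theta\rangle$. Their difference is therefore at most the within-set range, which Assumption~\ref{ass0} bounds by $G$. This range bound applies to the ghost set because $\boldsymbol\theta_{s,t}$ satisfies it almost surely for sets drawn from $\mathcal D^k$.
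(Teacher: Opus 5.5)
Your proposal is correct and follows essentially the same route as the paper. It uses the freshness identity to replace $\mathcal B_{s,t}$ by a single ghost set, splits $p_{s,t}=(1-\gamma)q_{s,t}+\gamma\,\mathrm{unif}$, and substitutes $\boldsymbol\theta_{s,t}=\mathbb E^{\mathrm{pre}}_{s,t}[\widehat{\boldsymbol\theta}_{s,t}]-\boldsymbol u_{s,t}$ to produce $\mathfrak B_s(\pi)$. It then applies Lemma~\ref{lem:changing-prior} pathwise with $a^\star=\pi(\mathcal B^0)$ and bounds the quadratic term by $(1-\gamma)q_{s,t}\le p_{s,t}$. Your remarks on the shift-invariance of the weights under the center $\boldsymbol a_{s,t}$ and on the range bound for $\mathcal E_{s,t}(\pi)$ simply spell out steps the paper leaves implicit.
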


\paragraph{Known-distribution refinements.}
\begin{proposition}[Data-dependent PC-KDE refinement]\label{prop:pc-refinement}
Under Theorem~\ref{theo01}, define
\begin{equation}\label{eq:pc-data-variance}
 \mathcal W^{\mathrm{PC}}_{s,t}
 =\mathbb E\!\left[
 \ell_{s,t,A_{s,t}}^2
 (\boldsymbol b_{s,t,A_{s,t}}-\boldsymbol x_{s,t})^T
 \boldsymbol H_{s,t}^{\dagger}
 (\boldsymbol b_{s,t,A_{s,t}}-\boldsymbol x_{s,t})
 \right].
\end{equation}
Then $\mathcal W^{\mathrm{PC}}_{s,t}\le Y^2d_{\mathcal U}$ and
\begin{equation}\label{eq:pc-regret-tight}
 \mathrm{Reg}_s(\pi)
 \le\frac{(1-\gamma)\Gamma_s(\pi)}{\eta}
 +\psi(\zeta)\eta\sum_{t=1}^{n}
 \mathcal W^{\mathrm{PC}}_{s,t}
 +\gamma\sum_{t=1}^{n}\mathcal E_{s,t}(\pi).
\end{equation}
\end{proposition}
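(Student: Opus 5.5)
We apply Lemma~\ref{lem01} with the choice $\boldsymbol a_{s,t}=\boldsymbol x_{s,t}$. This vector is $\mathcal H_{s,t}$-measurable, so the lemma's requirement is met. With this choice the scores are
\[
z_{s,t,a}(\mathcal B^0)=\langle \boldsymbol b_a^0-\boldsymbol x_{s,t},\widehat{\boldsymbol\theta}^{\mathrm{PC}}_{s,t}\rangle .
\]
Because the prior is fixed within the task, $\Gamma_s^{\mathrm{dyn}}(\pi)=\Gamma_s(\pi)$. The bias term $\mathfrak B_s(\pi)$ vanishes by \eqref{eq:pc-parameter-bias}: $\boldsymbol u_{s,t}=-\boldsymbol P_{\mathcal U^\perp}\boldsymbol\theta_{s,t}$, while $\boldsymbol m_{q_{s,t}}(\mathcal B^0)-\boldsymbol b^0_{\pi(\mathcal B^0)}$ is a convex combination of context differences and therefore lies in $\mathcal U$. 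This reduces \eqref{eq:general-regret} to \eqref{eq:pc-regret-tight} with $\mathcal W_{s,t}$ in place of $\mathcal W^{\mathrm{PC}}_{s,t}$. Two things remain: identify $\mathcal W_{s,t}$ with $\mathcal W^{\mathrm{PC}}_{s,t}$ and prove $\mathcal W^{\mathrm{PC}}_{s,t}\le Y^2d_{\mathcal U}$. I take the unbiasedness identity and the stability condition $|\eta z|\le\zeta$ as already established in the proof of Theorem~\ref{theo01}; the statement says "under Theorem~\ref{theo01}".

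\textbf{Second moment.} Write $\boldsymbol g=\boldsymbol b_{s,t,A_{s,t}}-\boldsymbol x_{s,t}$ and $\ell=\ell_{s,t,A_{s,t}}$, so that $\widehat{\boldsymbol\theta}=\boldsymbol H^\dagger\boldsymbol g\,\ell$. Since $\mathcal B^0$ is an independent copy of $\mathcal B_{s,t}$ given $\mathcal H_{s,t}$,
\[
\mathcal W_{s,t}=\mathbb E\!\left[\ell^2\,\boldsymbol g^T\boldsymbol H^\dagger\, \mathbb E_{\mathcal B^0}\!\Big[\sum_a p_{s,t}(a\mid\mathcal B^0)(\boldsymbol b^0_a-\boldsymbol x_{s,t})(\boldsymbol b^0_a-\boldsymbol x_{s,t})^T\Big]\boldsymbol H^\dagger\boldsymbol g\right].
\]
The inner expectation is exactly $\boldsymbol H_{s,t}$ by \eqref{eq:selected-covariance}, and $\boldsymbol H^\dagger\boldsymbol H\boldsymbol H^\dagger=\boldsymbol H^\dagger$. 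Hence $\mathcal W_{s,t}=\mathcal W^{\mathrm{PC}}_{s,t}$.

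\textbf{Dimension bound.} Bound $\ell^2\le Y^2$ pointwise by Assumption~\ref{ass0}. Then use the trace identity: conditionally on $\mathcal H_{s,t}$,
\[
\mathbb E[\boldsymbol g^T\boldsymbol H^\dagger\boldsymbol g]=\operatorname{tr}(\boldsymbol H^\dagger\boldsymbol H)=\operatorname{rank}(\boldsymbol H)\le d_{\mathcal U}.
\]
This holds because $\mathbb E[\boldsymbol g\boldsymbol g^T\mid\mathcal H_{s,t}]=\boldsymbol H_{s,t}$, the selected action being distributed according to $p_{s,t}$ over the fresh set, and because $\operatorname{range}(\boldsymbol H)\subseteq\mathcal U$. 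The latter holds since $\boldsymbol x_{s,t}$ is a mean of support points, so every $\boldsymbol g$ is a difference of support-type points and lies in $\mathcal U$. The rank equals $d_{\mathcal U}$ by the positivity supplied by uniform exploration, but the inequality is all we need.

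\textbf{Main obstacle.} The delicate step is justifying that $\boldsymbol g\in\mathcal U$ almost surely when $\mathcal D$ is not centered. Here $\boldsymbol b-\boldsymbol x_{s,t}$ is a difference between a support point and a convex combination in the closed convex hull of the support. I would argue that the affine hull of the support equals $\bar{\boldsymbol b}+\mathcal U$, so such differences lie in $\mathcal U$. The remaining steps are the conditional-independence bookkeeping that replaces $\mathcal B^0$ by $\mathcal B_{s,t}$, which follows from the pre-context conditioning.
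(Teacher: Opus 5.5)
Your proposal is correct and follows essentially the same route as the paper, whose proof of this proposition is the proof of Theorem~\ref{theo01}. In both, you apply Lemma~\ref{lem01} with center $\boldsymbol a_{s,t}=\boldsymbol x_{s,t}$, the bias term vanishes because $\boldsymbol m_{q_{s,t}}(\mathcal B^0)-\boldsymbol b^0_{\pi(\mathcal B^0)}\in\mathcal U$, ghost-set independence gives $\widehat{\boldsymbol\theta}^T\boldsymbol H_{s,t}\widehat{\boldsymbol\theta}=\ell^2\boldsymbol g^T\boldsymbol H_{s,t}^{\dagger}\boldsymbol g$, and the argument ends with the trace identity $\operatorname{tr}(\boldsymbol H_{s,t}^{\dagger}\boldsymbol H_{s,t})$. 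The only difference is cosmetic: you use $\operatorname{rank}(\boldsymbol H_{s,t})\le d_{\mathcal U}$, whereas the paper cites Lemma~\ref{lem:affine-support} to get equality, which the bound does not need.
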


\begin{theorem}[Average PC-KDE regret]\label{theo02}
Suppose the conditions of Theorem~\ref{theo01} hold for every task with common
$(\zeta,\eta,\gamma)$. If the comparators $\pi_s^\star$ exist, then
\begin{equation}\label{eq:average-known-regret}
 \frac{\mathrm{Reg}_{1:m}}m
 \le\frac{1-\gamma}{m\eta}\sum_{s=1}^m\Gamma_s(\pi_s^\star)
 +\psi(\zeta)\eta d_{\mathcal U}Y^2n+\gamma Gn.
\end{equation}
The same bound holds for any fixed comparator sequence. If a best comparator does not
exist, an $\varepsilon$-optimal comparator sequence incurs at most an additional
$\varepsilon$ in the average bound for any $\varepsilon>0$.
\end{theorem}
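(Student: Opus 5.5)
The plan is to apply Theorem~\ref{theo01} to each task separately and then average. Theorem~\ref{theo01} is stated for an arbitrary fixed comparator $\pi\in\Pi$ and for task-level parameters $(\zeta,\eta,\gamma)$. With the common choice, the stability condition \eqref{eq:pc-stability} does not depend on $s$, and neither do the constants $d_{\mathcal U}$, $Y$, $G$, $\kappa_{\mathcal U}$. So for every task $s$ and every $\pi\in\Pi$,
\[
 \mathrm{Reg}_s(\pi)\le\frac{(1-\gamma)\Gamma_s(\pi)}{\eta}+\psi(\zeta)\eta d_{\mathcal U}Y^2n+\gamma Gn .
\]

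We then choose comparators. By definition, $\mathrm{Reg}_s=\sup_{\pi\in\Pi}\mathrm{Reg}_s(\pi)$, with the supremum taken outside the expectation. If $\pi_s^\star$ attains this supremum, then $\mathrm{Reg}_s=\mathrm{Reg}_s(\pi_s^\star)$. Applying the display with $\pi=\pi_s^\star$, summing over $s\in[m]$, and dividing by $m$ gives \eqref{eq:average-known-regret}: the last two terms are independent of $s$, and the first term averages to $\frac{1-\gamma}{m\eta}\sum_s\Gamma_s(\pi_s^\star)$.

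For an arbitrary fixed comparator sequence $(\pi_s)$, the same summation yields the bound on $m^{-1}\sum_s\mathrm{Reg}_s(\pi_s)$. One point needs care. $\Gamma_s(\pi)$ is an expectation over the ghost set $\mathcal B^0$ and the random prior $h_s$, which depends on earlier tasks. The comparator must therefore be fixed and must not depend on the realized trajectory; only then does the per-task theorem apply verbatim. This holds because the comparator enters only through the pointwise Lemma~\ref{lem:changing-prior} with $a^\star=\pi(\mathcal B)$.

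When no maximizer exists, fix $\varepsilon>0$. For each $s$ pick $\pi_s^\varepsilon$ with $\mathrm{Reg}_s\le\mathrm{Reg}_s(\pi_s^\varepsilon)+\varepsilon$; this is possible because the supremum is finite, since $\mathrm{Reg}_s(\pi)\le Gn$. Applying the per-task bound to $\pi_s^\varepsilon$ and averaging adds exactly $\varepsilon$ to the right-hand side, with $\Gamma_s$ evaluated at $\pi_s^\varepsilon$.

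The argument has no real obstacle. The only delicate step is the interpretation above: $\Gamma_s(\pi)$ is the unconditional expectation, taken over earlier tasks through $h_s$. One must check that Theorem~\ref{theo01}'s bound already uses this averaged quantity, rather than a version conditional on $\mathcal H_{s,1}$, so that summing expectations across tasks is legitimate by linearity.
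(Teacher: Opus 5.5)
Your proposal is correct and follows essentially the same route as the paper: apply the per-task bound \eqref{eq:pc-regret-simple} to $\pi_s^\star$ (or to any fixed comparator sequence), sum over tasks and divide by $m$, and handle a non-attained supremum with $\varepsilon$-optimal comparators $\pi_{s,\varepsilon}$, so that $\mathrm{Reg}_{1:m}/m\le\varepsilon+m^{-1}\sum_s\mathrm{Reg}_s(\pi_{s,\varepsilon})$. Your side remarks are consistent with the paper's argument: the supremum is finite because $\mathrm{Reg}_s(\pi)\le Gn$, and $\Gamma_s(\pi)$ is already an unconditional expectation, so linearity suffices without any independence across tasks.
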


\begin{corollary}[Bounded-distribution transfer gain]\label{cor:bounded-transfer}
If $\boldsymbol\Theta_s$ and
$\pi_{\boldsymbol\Theta_s}\in\Pi$ are fixed before task $s$, then
\begin{equation}\label{eq:bounded-transfer-gain}
 \Gamma_s(\pi_{\boldsymbol\Theta_s})
 \le\mathbb E\!\left[
 \log k-\mu_s
 \mathbb E_{\mathcal B}\!\left[
 \langle\bar{\boldsymbol b}-
 \boldsymbol b_{\pi_{\boldsymbol\Theta_s}(\mathcal B)},
 \boldsymbol h_s\rangle\right]
 +\frac{\mu_s^2\Delta^2\|\boldsymbol h_s\|^2}{8}\right].
\end{equation}
\end{corollary}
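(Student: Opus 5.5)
The plan is to compute $\Gamma_s(\pi_{\boldsymbol\Theta_s})$ directly from the softmax form \eqref{eq:fixed-meta-prior}. The argument conditions on the pre-task information, so that $\boldsymbol h_s$, $\mu_s$ and $\pi:=\pi_{\boldsymbol\Theta_s}$ are frozen. It then applies Jensen's inequality to the log-partition term and Hoeffding's lemma to the resulting single-context moment generating function.

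\emph{Step 1 (conditioning).} Work conditionally on $\mathcal G_s=\mathcal H_{s,1}\vee\sigma(\boldsymbol\Theta_s)$.
\begin{itemize}
\item By Assumption~\ref{ass1}, $\boldsymbol h_s$ is $\mathcal H_{s,1}$-measurable, and $\mu_s$ is chosen before the task.
\item Since $\boldsymbol\Theta_s$ is fixed before task $s$, the comparator $\pi$ is $\mathcal G_s$-measurable.
\item The ghost set $\mathcal B^0\sim\mathcal D^k$ is independent of $\mathcal G_s$.
\end{itemize}
Hence
\[
\Gamma_s(\pi)=\mathbb E\bigl[\mathbb E_{\mathcal B}[-\log r_s(\pi(\mathcal B)\mid\mathcal B)]\bigr],
\]
where the inner expectation integrates only over $\mathcal B\sim\mathcal D^k$ with $\boldsymbol h_s,\mu_s,\pi$ fixed. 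Expanding the softmax gives
\[
-\log r_s(\pi(\mathcal B)\mid\mathcal B)=\mu_s\langle\boldsymbol b_{\pi(\mathcal B)},\boldsymbol h_s\rangle+\log k+\log\Bigl(\frac1k\sum_{a=1}^k e^{-\mu_s\langle\boldsymbol b_a,\boldsymbol h_s\rangle}\Bigr).
\]

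\emph{Step 2 (Jensen).} The logarithm is concave, and each $\boldsymbol b_a$ has marginal law $\mathcal D$. Therefore
\[
\mathbb E_{\mathcal B}\Bigl[\log\Bigl(\frac1k\sum_a e^{-\mu_s\langle\boldsymbol b_a,\boldsymbol h_s\rangle}\Bigr)\Bigr]\le\log\mathbb E_{\boldsymbol b\sim\mathcal D}\bigl[e^{-\mu_s\langle\boldsymbol b,\boldsymbol h_s\rangle}\bigr].
\]
This is the only place where the identical marginals of the $k$ contexts are used. The dependence of $\pi(\mathcal B)$ on the whole set does not matter here, because $\pi(\mathcal B)$ appears only in the linear term of Step 1.

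\emph{Step 3 (Hoeffding's lemma).} For $\boldsymbol b,\boldsymbol b'\in\operatorname{supp}(\mathcal D)$,
\[
|\langle\boldsymbol b-\boldsymbol b',\boldsymbol h_s\rangle|\le\Delta\|\boldsymbol h_s\|,
\]
so the scalar $\langle\boldsymbol b,\boldsymbol h_s\rangle$ lies a.s.\ in an interval of length at most $\Delta\|\boldsymbol h_s\|$. Hoeffding's lemma then gives
\[
\log\mathbb E\bigl[e^{-\mu_s\langle\boldsymbol b,\boldsymbol h_s\rangle}\bigr]\le-\mu_s\langle\bar{\boldsymbol b},\boldsymbol h_s\rangle+\frac{\mu_s^2\Delta^2\|\boldsymbol h_s\|^2}{8}.
\]

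\emph{Step 4 (combine).} Substitute Step 3 into Step 2, and Step 2 into the expansion of Step 1. Grouping $\mu_s\langle\boldsymbol b_{\pi(\mathcal B)},\boldsymbol h_s\rangle$ with $-\mu_s\langle\bar{\boldsymbol b},\boldsymbol h_s\rangle$ gives, inside the conditional expectation,
\[
\log k-\mu_s\,\mathbb E_{\mathcal B}\bigl[\langle\bar{\boldsymbol b}-\boldsymbol b_{\pi(\mathcal B)},\boldsymbol h_s\rangle\bigr]+\frac{\mu_s^2\Delta^2\|\boldsymbol h_s\|^2}{8}.
\]
Taking the outer expectation over $\mathcal G_s$ yields \eqref{eq:bounded-transfer-gain}.

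The main point needing care is Step 1. One must make sure that $\pi$ is legitimately frozen while $\mathcal B^0$ is integrated out, which is exactly what the hypothesis that $\boldsymbol\Theta_s$ is fixed before task $s$ guarantees. Without it, $\pi$ could correlate with the ghost set and the split $\mathbb E[\mathbb E_{\mathcal B}[\cdot]]$ would fail. The Jensen and Hoeffding steps themselves are routine.
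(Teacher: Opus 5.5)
Your proposal is correct and matches the paper's proof step for step. Both arguments condition on the pre-task information so that $\boldsymbol h_s$, $\mu_s$ and $\pi_{\boldsymbol\Theta_s}$ are frozen, expand the softmax, and apply Jensen to the log-partition term using the i.i.d.\ ghost arms. They then bound the single-context moment generating function by Hoeffding's lemma with range $\Delta\|\boldsymbol h_s\|$ and take the outer expectation. The only difference is cosmetic: you condition on $\mathcal G_s$ (which also includes $\boldsymbol\Theta_s$) rather than on the pre-task history alone.
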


\paragraph{PCRW tracking and the original fast-rate condition.}
For PCRW, set
\begin{align}
 D_s^{\mathrm{ret}}
 &=\sum_{i=1}^{s-1}c_{s,i}^{\mathrm{cos}}
 \|\widehat{\boldsymbol v}_i-\widehat{\boldsymbol v}_{s-1}\|,
 \notag\\
 e_s&=\|\widehat{\boldsymbol v}_s-\boldsymbol v_s\|,
 \qquad D_s^{\mathrm{task}}=\|\boldsymbol v_s-\boldsymbol v_{s-1}\|,\notag\\
 \delta_s^{\mathrm{track}}
 &=D_s^{\mathrm{ret}}+e_{s-1}+D_s^{\mathrm{task}}.
 \label{eq:tracking-complexity}
\end{align}

\begin{lemma}[Transfer Lipschitzness and PCRW tracking]\label{lem:pcrw-transfer}
For fixed $\mu_s$, let
$\Phi_s(\boldsymbol h)=
\mathbb E_{\mathcal B}[-\log
r_{\mu_s,\boldsymbol h}(\pi_s(\mathcal B)\mid\mathcal B)]$.
Then $\Phi_s$ is $\mu_s\Delta$-Lipschitz and
\begin{equation}\label{eq:pcrw-distance}
 \|\boldsymbol h_s^{\mathrm{cos}}-\boldsymbol v_s\|
 \le\delta_s^{\mathrm{track}},\qquad
 \Gamma_s(\pi_s)
 \le\Phi_s(\boldsymbol v_s)
 +\mu_s\Delta\delta_s^{\mathrm{track}}.
\end{equation}
If $\mu_1=0$, then
\begin{equation}\label{eq:pcrw-meta-excess}
 \sum_{s=1}^m\Gamma_s(\pi_s)
 \le\log k+\sum_{s=2}^m\Phi_s(\boldsymbol v_s)
 +\Delta\sum_{s=2}^m\mu_s\delta_s^{\mathrm{track}}.
\end{equation}
\end{lemma}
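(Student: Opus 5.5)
The lemma has three parts, and I would prove them in order: the Lipschitz property of $\Phi_s$, then the PCRW distance bound, and finally the summed inequality \eqref{eq:pcrw-meta-excess}.

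\emph{Lipschitz property.} Fix $\mathcal B$ and $a=\pi_s(\mathcal B)$. Then
\[
-\log r_{\mu_s,\boldsymbol h}(a\mid\mathcal B)=\mu_s\langle \boldsymbol b_a,\boldsymbol h\rangle+\log\sum_{a'}e^{-\mu_s\langle \boldsymbol b_{a'},\boldsymbol h\rangle}.
\]
Its gradient in $\boldsymbol h$ is $\mu_s(\boldsymbol b_a-\boldsymbol m_{r}(\mathcal B))$, where $\boldsymbol m_r$ is the softmax-weighted mean of the contexts in $\mathcal B$. This mean is a convex combination of support points. Hence the gradient is a convex combination of differences $\boldsymbol b_a-\boldsymbol b_{a'}$ of support points, and each such difference has norm at most $\Delta$. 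So the map is $\mu_s\Delta$-Lipschitz pointwise in $\mathcal B$. Taking expectation over $\mathcal B$ preserves the constant. The mean value theorem along the segment between two priors makes this rigorous, with no differentiability concerns beyond smoothness of log-sum-exp.

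\emph{Distance bound.} The PCRW weights $c_{s,i}^{\cos}$ are nonnegative and sum to one. Therefore
\[
\boldsymbol h_s^{\cos}-\boldsymbol v_s=\sum_i c_{s,i}^{\cos}(\widehat{\boldsymbol v}_i-\widehat{\boldsymbol v}_{s-1})+(\widehat{\boldsymbol v}_{s-1}-\boldsymbol v_{s-1})+(\boldsymbol v_{s-1}-\boldsymbol v_s).
\]
The triangle inequality bounds the three pieces by $D_s^{\mathrm{ret}}$, $e_{s-1}$ and $D_s^{\mathrm{task}}$, which gives $\|\boldsymbol h_s^{\cos}-\boldsymbol v_s\|\le\delta_s^{\mathrm{track}}$. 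For the bound on $\Gamma_s$, the ghost set $\mathcal B^0$ is independent of $\boldsymbol h_s$ and of $\pi_s$; I would condition on $\mathcal G_s$, under which both are fixed. This gives $\Gamma_s(\pi_s)=\mathbb E[\Phi_s(\boldsymbol h_s)]$. Lipschitzness then yields $\Phi_s(\boldsymbol h_s)\le\Phi_s(\boldsymbol v_s)+\mu_s\Delta\,\delta_s^{\mathrm{track}}$. The displayed inequality should be read with $\Phi_s(\boldsymbol v_s)$ and $\delta_s^{\mathrm{track}}$ inside the expectation, or pathwise conditionally on $\mathcal G_s$. Since $\boldsymbol v_s$ is deterministic for oblivious tasks, the main remaining caveat is that $\delta_s^{\mathrm{track}}$ is random. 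I would state the bound conditionally on $\mathcal G_s$ and then take expectations.

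\emph{Summation.} For $s=1$, the choice $\mu_1=0$ makes $r_1$ uniform, so $\Gamma_1(\pi_1)=\log k$. Summing the bound from the previous step over $s\ge2$ gives \eqref{eq:pcrw-meta-excess}.

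The main obstacle is the measurability bookkeeping, in particular justifying that $\Gamma_s(\pi_s)$ reduces to $\Phi_s$ evaluated at the realized prior. This requires that the comparator $\pi_s$ be $\mathcal G_s$-measurable and that $\mathcal B^0$ be independent of $\mathcal G_s$, both of which the setup guarantees.
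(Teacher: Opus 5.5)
Your proposal is correct and follows the paper's proof essentially step for step. The paper uses the same ingredients: the gradient $\mu_s(\boldsymbol b_{a^\star}-\sum_a r_{\mu_s,\boldsymbol h}(a\mid\mathcal B)\boldsymbol b_a)$ of norm at most $\mu_s\Delta$ for Lipschitzness, the convex PCRW weights plus the triangle inequality for the tracking bound, the fixed-prior identity for $\Gamma_s$, and $\mu_1=0$ for the first task. Your reading of the $\Gamma_s$ bound in expectation is a clarification the paper leaves implicit. One small refinement: the inequality $\Phi_s(\boldsymbol h_s)\le\Phi_s(\boldsymbol v_s)+\mu_s\Delta\delta_s^{\mathrm{track}}$ holds pathwise, so you do not need $\delta_s^{\mathrm{track}}$ to be $\mathcal G_s$-measurable. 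You only need the ghost set to be independent of the realized prior and comparator in order to identify $\Gamma_s(\pi_s)=\mathbb E[\Phi_s(\boldsymbol h_s)]$.
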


\begin{assumption}[Uniform gap and coarse tracking]\label{ass:meta-margin}
There is $g>0$ such that, almost surely for every $s\ge2$, every $\mathcal B$,
and every $a\ne\pi_s(\mathcal B)$,
\begin{equation}\label{eq:uniform-action-gap}
 \langle\boldsymbol b_a-
 \boldsymbol b_{\pi_s(\mathcal B)},\boldsymbol v_s\rangle\ge g,
\end{equation}
and $\delta_s^{\mathrm{track}}\le g/(2\Delta)$.
\end{assumption}

\begin{corollary}[Uniform-gap PC-KDE fast rate]\label{cor:pcrw-pc-m}
Under Assumption~\ref{ass:meta-margin}, set
\begin{equation}\label{eq:meta-temperature}
 \mu_1=0,\qquad
 \mu_s=\frac{4\log(s+1)}g\quad(s\ge2).
\end{equation}
Then
\begin{equation}\label{eq:gamma-decay}
 \Gamma_1(\pi_1)=\log k,\qquad
 \Gamma_s(\pi_s)\le\frac{k-1}{(s+1)^2}.
\end{equation}
Feasible taskwise PC-KDE tuning gives
\begin{equation}\label{eq:pc-cumulative-m}
 \mathrm{Reg}_{1:m}
 \le2\sqrt{nA_\zeta}\left[
 \sqrt{\log k}+\sqrt{k-1}\{1+\log(m+1)\}\right].
\end{equation}
\end{corollary}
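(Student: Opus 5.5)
The plan has three steps. First, bound $\Gamma_s(\pi_s)$ under the uniform gap. Then sum the resulting per-task bounds using Corollary~\ref{cor1}. Finally, control the harmonic-type sum that appears.

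\textbf{Bounding $\Gamma_s(\pi_s)$.} For $s=1$ we have $\mu_1=0$. The prior is uniform, so $\Gamma_1(\pi_1)=\log k$ directly from \eqref{eq:fixed-transfer-complexity}.

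For $s\ge2$, fix a ghost set $\mathcal B$ and write $a^\star=\pi_s(\mathcal B)$. Since $r_s$ is a softmax,
\[
-\log r_s(a^\star\mid\mathcal B)=\log\Bigl(1+\sum_{a\ne a^\star}\exp[-\mu_s\langle\boldsymbol b_a-\boldsymbol b_{a^\star},\boldsymbol h_s\rangle]\Bigr)\le\sum_{a\ne a^\star}\exp[-\mu_s\langle\boldsymbol b_a-\boldsymbol b_{a^\star},\boldsymbol h_s\rangle],
\]
using $\log(1+x)\le x$.

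Next, replace $\boldsymbol h_s$ by $\boldsymbol v_s$. By Lemma~\ref{lem:pcrw-transfer} we have $\|\boldsymbol h_s-\boldsymbol v_s\|\le\delta_s^{\mathrm{track}}\le g/(2\Delta)$. Cauchy--Schwarz with $\|\boldsymbol b_a-\boldsymbol b_{a^\star}\|\le\Delta$ then gives
\[
\langle\boldsymbol b_a-\boldsymbol b_{a^\star},\boldsymbol h_s\rangle\ge\langle\boldsymbol b_a-\boldsymbol b_{a^\star},\boldsymbol v_s\rangle-g/2\ge g/2,
\]
where the last inequality is the uniform gap \eqref{eq:uniform-action-gap}. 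Each summand is therefore at most $e^{-\mu_s g/2}=e^{-2\log(s+1)}=(s+1)^{-2}$. Hence $-\log r_s(a^\star\mid\mathcal B)\le(k-1)/(s+1)^2$ pointwise, and taking expectation over $\mathcal B^0$ gives \eqref{eq:gamma-decay}.

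One technical point needs care. Lemma~\ref{lem:pcrw-transfer} is stated for $\boldsymbol h_s^{\mathrm{cos}}$, which is the PCRW prior. This corollary is in the PCRW section, so I take $\boldsymbol h_s=\boldsymbol h_s^{\mathrm{cos}}$. The gap condition holds almost surely and the bound is pointwise, so conditioning on $\mathcal G_s$ presents no difficulty.

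\textbf{Summation.} Apply Corollary~\ref{cor1} taskwise with $\bar\Gamma_1=\log k$ and $\bar\Gamma_s=(k-1)/(s+1)^2$. Feasible tuning is assumed in the statement. This yields
\[
\mathrm{Reg}_{1:m}\le2\sqrt{nA_\zeta}\Bigl[\sqrt{\log k}+\sqrt{k-1}\sum_{s=2}^m(s+1)^{-1}\Bigr].
\]

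\textbf{The harmonic bound.} It remains to show $\sum_{s=2}^m(s+1)^{-1}\le1+\log(m+1)$. This holds because the sum equals $H_{m+1}-3/2\le\log(m+1)-1/2$.

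The main obstacle is mild: Corollary~\ref{cor1} requires a deterministic $\bar\Gamma_s$ that bounds $\Gamma_s(\pi)$ uniformly over the candidate tuning pairs. Our bound depends only on $\mu_s$, $g$ and $k$, and $\mu_s$ is deterministic, so this requirement is satisfied.
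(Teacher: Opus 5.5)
Your proposal is correct and follows essentially the same route as the paper. The tracking bound from Lemma~\ref{lem:pcrw-transfer} and the uniform gap give $\langle\boldsymbol b_a-\boldsymbol b_{a^\star},\boldsymbol h_s\rangle\ge g/2$, then $\log(1+x)\le x$ yields $(k-1)/(s+1)^2$, and applying Corollary~\ref{cor1} taskwise with the harmonic bound gives \eqref{eq:pc-cumulative-m}. Like the paper, you implicitly use that $\pi_s$ attains the supremum in $\mathrm{Reg}_s$ (by obliviousness, as argued in the proof of Theorem~\ref{theo:general-margin}), which is what lets a bound on $\Gamma_s(\pi_s)$ alone control $\mathrm{Reg}_{1:m}$.
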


Let
$Z_1,\ldots,Z_k\stackrel{\mathrm{i.i.d.}}{\sim}\mathcal N(0,1)$ and define
\[
 \nu_k=\mathbb E[\min_{a\le k}Z_a]<0,\qquad
 \cos_{\boldsymbol C}(\boldsymbol h,\boldsymbol\Theta)
 =\frac{\boldsymbol h^T\boldsymbol C\boldsymbol\Theta}
 {\sqrt{\boldsymbol h^T\boldsymbol C\boldsymbol h}
  \sqrt{\boldsymbol\Theta^T\boldsymbol C\boldsymbol\Theta}}.
\]
Set this cosine to zero if either denominator factor vanishes.

\begin{corollary}[Gaussian transfer calculation]\label{cor:gaussian-transfer}
Let $k\ge2$ and
$\boldsymbol b_a\stackrel{\mathrm{i.i.d.}}{\sim}
\mathcal N(\bar{\boldsymbol b},\boldsymbol C)$.
For $r_{\boldsymbol h}(a\mid\mathcal B)\propto
\exp[-\mu\langle\boldsymbol b_a,\boldsymbol h\rangle]$,
\begin{align}
 \mathbb E[-\log r_{\boldsymbol h}
 (\pi_{\boldsymbol\Theta}(\mathcal B)\mid\mathcal B)]
 &\le\log k+\mu\nu_k
 \frac{\boldsymbol h^T\boldsymbol C\boldsymbol\Theta}
 {\sqrt{\boldsymbol\Theta^T\boldsymbol C\boldsymbol\Theta}}
 +\frac{\mu^2}{2}\boldsymbol h^T\boldsymbol C\boldsymbol h,
 \label{eq:gaussian-transfer}\\
 &\le\log k-\frac{\nu_k^2}{2}
 [\cos_{\boldsymbol C}
 (\boldsymbol h,\boldsymbol\Theta)_+]^2
 \quad\text{after optimizing $\mu\ge0$.}
 \label{eq:gaussian-cosine}
\end{align}
\end{corollary}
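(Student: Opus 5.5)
We prove the two displays in order, starting from the exact Gaussian structure of the selected comparator. Write $\boldsymbol b_a=\bar{\boldsymbol b}+\boldsymbol C^{1/2}\boldsymbol g_a$ with $\boldsymbol g_a$ i.i.d.\ standard Gaussian. Since $-\log r_{\boldsymbol h}(\pi_{\boldsymbol\Theta}(\mathcal B)\mid\mathcal B)=\mu\langle\boldsymbol b_{\pi_{\boldsymbol\Theta}(\mathcal B)},\boldsymbol h\rangle+\log\sum_a\exp[-\mu\langle\boldsymbol b_a,\boldsymbol h\rangle]$, the expectation splits into two terms.

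\emph{Log-partition term.} Apply Jensen's inequality (concavity of $\log$) to the log-partition term:
\[
\mathbb E\log\sum_a e^{-\mu\langle\boldsymbol b_a,\boldsymbol h\rangle}\le\log\bigl(k\,\mathbb E e^{-\mu\langle\boldsymbol b,\boldsymbol h\rangle}\bigr)=\log k-\mu\langle\bar{\boldsymbol b},\boldsymbol h\rangle+\tfrac{\mu^2}{2}\boldsymbol h^T\boldsymbol C\boldsymbol h .
\]
The equality is the Gaussian moment generating function. The $\bar{\boldsymbol b}$ contribution cancels against the mean part of the first term.

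\emph{Comparator term.} It remains to compute $\mathbb E\langle\boldsymbol b_{\pi_{\boldsymbol\Theta}}-\bar{\boldsymbol b},\boldsymbol h\rangle$. Let $\sigma^2=\boldsymbol\Theta^T\boldsymbol C\boldsymbol\Theta$. If $\sigma=0$, every action ties, the selection is independent of the contexts, the term is zero, and the cosine is zero by convention, so the bound holds. If $\sigma>0$, set $Z_a=\langle\boldsymbol b_a-\bar{\boldsymbol b},\boldsymbol\Theta\rangle/\sigma$, which are i.i.d.\ $\mathcal N(0,1)$. Decompose the centered projection as
\[
\langle\boldsymbol b_a-\bar{\boldsymbol b},\boldsymbol h\rangle=\rho Z_a+W_a,\qquad \rho=\frac{\boldsymbol h^T\boldsymbol C\boldsymbol\Theta}{\sigma}.
\]
Here $W_a$ is Gaussian and uncorrelated with $Z_a$, hence independent of it and of all the $Z$'s. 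The selected index is the argmin of the $Z_a$ (ties have probability zero), so it is independent of the $W$'s. Therefore $\mathbb E[W_{\pi}]=0$ and $\mathbb E[\rho Z_\pi]=\rho\,\mathbb E\min_a Z_a=\rho\nu_k$. Combining the two terms gives \eqref{eq:gaussian-transfer}. This conditional independence argument is the main step needing care: one must condition on $(Z_a)$, note that $\pi$ is measurable in them, and use independence of the $W$ block.

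\emph{Optimizing $\mu$.} The right side of \eqref{eq:gaussian-transfer} is a quadratic in $\mu$, of the form $\log k+\mu\nu_k\rho+\tfrac{\mu^2}{2}v$ with $v=\boldsymbol h^T\boldsymbol C\boldsymbol h$. If $\rho\le0$ or $v=0$ with $\rho\le 0$, take $\mu=0$ to get $\log k$; this matches since the positive part of the cosine is then zero. If $v=0$, then $\boldsymbol C\boldsymbol h=0$, so $\rho=0$ and the cosine is zero by convention. Otherwise, with $\rho>0$ and $v>0$, the minimizer is $\mu=-\nu_k\rho/v\ge0$, giving $\log k-\nu_k^2\rho^2/(2v)$. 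Finally $\rho^2/v=\cos_{\boldsymbol C}(\boldsymbol h,\boldsymbol\Theta)^2$, which yields \eqref{eq:gaussian-cosine}.
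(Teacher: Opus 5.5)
Your proof is correct and is essentially the paper's argument. It uses Jensen plus the Gaussian moment generating function for the log-partition term, a Gaussian regression of $\langle\boldsymbol b_a-\bar{\boldsymbol b},\boldsymbol h\rangle$ on the $\boldsymbol\Theta$-projection to obtain $\nu_k\,\boldsymbol h^T\boldsymbol C\boldsymbol\Theta/\sqrt{\boldsymbol\Theta^T\boldsymbol C\boldsymbol\Theta}$ for the comparator term, and minimization of the resulting quadratic over $\mu\ge0$; your independence of the whole residual block $(W_a)_a$ from $(Z_a)_a$ is a slightly more careful version of the paper's conditioning on a single $g_a$ (needed because the selected index depends on all the $Z_a$), and your $\sigma=0$ aside tacitly assumes an index-based tie-break, which is harmless since the first display is undefined there and the second holds at $\mu=0$.
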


This Gaussian calculation is separate from the bounded-support theorem. Combining them would require a truncation argument.

\paragraph{Detailed PRME and LPE results.}
On the simultaneous moment event,
\begin{equation}\label{eq:moment-event}
 \|\overline{\boldsymbol S}_{s,t}-\boldsymbol S\|
 \le\xi_{N_{s,t}},\qquad
 \boldsymbol S\preceq\widehat{\boldsymbol S}_{s,t}
 \preceq\boldsymbol S+2\xi_{N_{s,t}}\boldsymbol I_d.
\end{equation}
For the current policy define
\begin{align}
 \boldsymbol M_{s,t}
 &=\mathbb E_{\mathcal B}\!\left[
 \sum_{a=1}^kp_{s,t}(a\mid\mathcal B)
 \boldsymbol b_a\boldsymbol b_a^T\right],\notag\\
 \rho_{s,t}^{\mathrm{eff}}
 &=\operatorname{tr}(\boldsymbol M_{s,t}\boldsymbol S^{-1})
 \le\chi d.
 \label{eq:effective-design}
\end{align}
For a comparator put
\[
 \boldsymbol d_{s,t}^{\pi}
 =\mathbb E_{\mathcal B^0}\!\left[
 \boldsymbol m_{q_{s,t}}(\mathcal B^0)
 -\boldsymbol b^0_{\pi(\mathcal B^0)}\right].
\]
The exact PRME parameter bias is
\begin{equation}
 \boldsymbol u^{\mathrm{PR}}_{s,t}
 =\left(\widetilde{\boldsymbol S}_{s,t}^{-1}
 \boldsymbol S-\boldsymbol I_d\right)\boldsymbol\theta_{s,t}.
 \label{eq:prme-exact-bias}
\end{equation}

\begin{theorem}[Finite-sample PRME decomposition]\label{theoB}
Under Assumptions~\ref{ass0}--\ref{ass1} and
\eqref{eq:raw-moment-lower}, if PRME is used and
$\eta Y\kappa\le\gamma$, then
\begin{align}
 \mathrm{Reg}_s(\pi)
 \le{}&
 \frac{(1-\gamma)\Gamma_s(\pi)}{\eta}
 +\frac{\psi(1)\eta Y^2}{\gamma}
 \sum_{t=1}^n\mathbb E[\rho_{s,t}^{\mathrm{eff}}]
 +\gamma Gn\notag\\
 &+4R\sqrt{\frac{\chi}{\lambda}}
 \sum_{t=1}^n\xi_{N_{s,t}}
 +f_s.
 \label{eq:prme-tight-regret}
\end{align}
Here $f_s$ is the failure remainder bounded in \eqref{eq:failure-remainder}.
\end{theorem}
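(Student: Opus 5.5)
The plan is to apply the general decomposition of Lemma~\ref{lem01} with $\boldsymbol a_{s,t}=\boldsymbol0_d$, so that $z_{s,t,a}=\langle\boldsymbol b_a^0,\widehat{\boldsymbol\theta}^{\mathrm{PR}}_{s,t}\rangle$. The argument splits into a good event and a failure event. The good event $\mathcal E$ is the simultaneous moment event \eqref{eq:moment-event}. It is obtained from matrix Bernstein applied to the bounded, centered summands $\boldsymbol b\boldsymbol b^T-\boldsymbol S$ (operator norm at most $L^2$, variance proxy at most $L^4$). A union bound over the at most $T$ task--round indices, with the $2d$ dimension factor, gives failure probability $\delta$. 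This is why $\Lambda=\log(2dT/\delta)$ and why $\xi_N$ has its Bernstein form; $\xi_0=L^2$ handles $N=0$ trivially. Since $\beta_{s,t}=\xi_{N_{s,t}}$, on $\mathcal E$ we get $\widehat{\boldsymbol S}_{s,t}\succeq\boldsymbol S\succeq\lambda\boldsymbol I_d$. Hence the clipping is inactive, $\widetilde{\boldsymbol S}_{s,t}=\widehat{\boldsymbol S}_{s,t}$, and the upper bound in \eqref{eq:moment-event} holds. The per-task share of the failure probability is $\delta/m$, which accounts for the factor in $f_s$.

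\textbf{Score range.} Because clipping always gives $\widetilde{\boldsymbol S}_{s,t}\succeq\lambda\boldsymbol I_d$, and $p_{s,t}\ge\gamma/k$, we have $|z_{s,t,a}|\le L^2Y/(\lambda\gamma)=\kappa Y/\gamma$ deterministically. The condition $\eta Y\kappa\le\gamma$ therefore gives $\zeta=1$, so Lemma~\ref{lem01} applies with $\psi(1)$.

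\textbf{Variance.} On $\mathcal E$, $\widetilde{\boldsymbol S}_{s,t}^{-1}\preceq\boldsymbol S^{-1}$. We bound $\mathcal W_{s,t}$ by conditioning on the pre-context history and integrating the ghost set. Using $\mathbb E_{\mathcal B^0}\sum_ap(a)\boldsymbol b^0_a\boldsymbol b^{0T}_a\preceq\boldsymbol M_{s,t}$ and $1/p\le k/\gamma$ on one inverse-propensity factor, this gives $\mathcal W_{s,t}\le (Y^2/\gamma)\operatorname{tr}(\boldsymbol M_{s,t}\boldsymbol S^{-1})=(Y^2/\gamma)\rho^{\mathrm{eff}}_{s,t}$. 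The bound $\rho^{\mathrm{eff}}\le\chi d$ follows from two estimates: $\operatorname{tr}(\boldsymbol M\boldsymbol S^{-1})\le kd$, since $\boldsymbol M\preceq k\boldsymbol S$, and $\operatorname{tr}(\boldsymbol M\boldsymbol S^{-1})\le\kappa d$, since $\boldsymbol M\preceq L^2\boldsymbol I_d$.

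\textbf{Exploration.} The exploration term is bounded by $\gamma Gn$ via $\mathcal E_{s,t}\le G$.

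\textbf{Bias (main obstacle).} We must bound $\mathfrak B_s(\pi)$ using $\boldsymbol u^{\mathrm{PR}}_{s,t}=(\widetilde{\boldsymbol S}^{-1}\boldsymbol S-\boldsymbol I)\boldsymbol\theta_{s,t}$. On $\mathcal E$, write $\boldsymbol u=-\widetilde{\boldsymbol S}^{-1}(\widetilde{\boldsymbol S}-\boldsymbol S)\boldsymbol\theta$, where $\|\widetilde{\boldsymbol S}-\boldsymbol S\|\le2\xi_{N_{s,t}}$. The term to control is then $|\langle\boldsymbol d^\pi_{s,t},\boldsymbol u\rangle|\le\|\widetilde{\boldsymbol S}^{-1/2}\boldsymbol d^\pi_{s,t}\|\cdot\|\widetilde{\boldsymbol S}^{-1/2}\|\cdot2\xi R$. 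We bound $\|\boldsymbol S^{-1/2}\boldsymbol d^\pi\|$ by $2\sqrt\chi$, and $\|\widetilde{\boldsymbol S}^{-1/2}\|\le\lambda^{-1/2}$, which yields $4R\sqrt{\chi/\lambda}\,\xi_{N_{s,t}}$ per round. The delicate part is the $\sqrt\chi$ bound on $\boldsymbol d^\pi$. Each mean $\boldsymbol m_{q}$ and each $\boldsymbol b^0_\pi$ has $\boldsymbol S^{-1}$-norm at most $\sqrt\kappa$ pointwise. Via Jensen, the expected selected context has squared $\boldsymbol S^{-1}$-norm at most $\mathbb E\sum_a\boldsymbol b_a^T\boldsymbol S^{-1}\boldsymbol b_a=kd$ divided appropriately. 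I expect this to need care to obtain $\min\{k,\kappa\}$ rather than $kd$. I would try the bound $\|\mathbb E\,\boldsymbol b_{\pi}\|_{\boldsymbol S^{-1}}^2\le\sup_{\|\boldsymbol w\|_{\boldsymbol S}\le1}(\mathbb E\langle\boldsymbol b_\pi,\boldsymbol w\rangle)^2\le k\,\mathbb E\langle\boldsymbol b,\boldsymbol w\rangle^2=k$, combined with the pointwise $\kappa$ bound.

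\textbf{Failure event.} On $\mathcal E^c$ the analysis still applies with the clipped matrix. The extra contributions are the variance, at most $\psi(1)\eta(Y\kappa/\gamma)^2$ per round, and the bias, at most $2LR\sqrt\chi(\kappa+1)$ using $\|\widetilde{\boldsymbol S}^{-1}\boldsymbol S\|\le\kappa$. These enter weighted by the failure probability $\delta/m$, which gives \eqref{eq:failure-remainder}. Collecting the four bounds yields \eqref{eq:prme-tight-regret}.
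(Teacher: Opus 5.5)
Your proposal is correct and follows the paper's proof essentially step for step. The shared steps are: Lemma~\ref{lem01} with center zero and $\zeta=1$ from the deterministic score range $Y\kappa/\gamma$; the variance bound $(Y^2/\gamma)\operatorname{tr}(\boldsymbol M_{s,t}\boldsymbol S^{-1})$ with $\rho^{\mathrm{eff}}_{s,t}\le\chi d$ from $\boldsymbol M_{s,t}\preceq k\boldsymbol S$ and $\boldsymbol M_{s,t}\preceq L^2\boldsymbol I_d$; the bias bound through $\|\boldsymbol d^{\pi}_{s,t}\|_{\boldsymbol S^{-1}}\le2\sqrt\chi$, where your dual-norm/Jensen argument is the same rank-one Loewner bound the paper invokes; and the same crude failure-event estimates.

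Two steps you left implicit should be written out. First, the variance step needs $\widetilde{\boldsymbol S}_{s,t}^{-1}\boldsymbol S\widetilde{\boldsymbol S}_{s,t}^{-1}\preceq\boldsymbol S^{-1}$, which holds because $\boldsymbol A=\boldsymbol S^{1/2}\widetilde{\boldsymbol S}_{s,t}^{-1}\boldsymbol S^{1/2}$ satisfies $\boldsymbol 0\preceq\boldsymbol A\preceq\boldsymbol I_d$ and hence $\boldsymbol A^2\preceq\boldsymbol I_d$; it does not follow by substituting $\widetilde{\boldsymbol S}_{s,t}^{-1}\preceq\boldsymbol S^{-1}$ into the product. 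Second, the good/failure split must be done round by round with the $\mathcal H_{s,t}$-measurable events $\{\|\overline{\boldsymbol S}_{s,t}-\boldsymbol S\|\le\xi_{N_{s,t}}\}$, not with a simultaneous event involving future contexts. This keeps each round's pre-context conditional bounds valid, and the failure probabilities then sum to $n\delta/T=\delta/m$ without an extra factor $n$.
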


Using $\rho_{s,t}^{\mathrm{eff}}\le\chi d$ in Theorem~\ref{theoB} gives the simpler bound
\begin{equation}\label{eq:prme-simple-regret}
 \mathrm{Reg}_s(\pi)
 \le\frac{\Gamma_s(\pi)}{\eta}
 +\frac{\psi(1)\eta\chi dY^2n}{\gamma}
 +\gamma Gn
 +4R\sqrt{\frac{\chi}{\lambda}}
 \sum_{t=1}^n\xi_{N_{s,t}}
 +f_s.
\end{equation}
On the moment event~\eqref{eq:moment-event}, the signed bias additionally satisfies
\begin{equation}\label{eq:prme-bias-certificate}
 |(\boldsymbol d_{s,t}^{\pi})^T
 \boldsymbol u^{\mathrm{PR}}_{s,t}|
 \le\|\boldsymbol d_{s,t}^{\pi}\|_{\boldsymbol S^{-1}}
 \sqrt{\boldsymbol\theta_{s,t}^T
 \boldsymbol E_{s,t}\widehat{\boldsymbol S}_{s,t}^{-1}
 \boldsymbol E_{s,t}\boldsymbol\theta_{s,t}},
 \quad
 \boldsymbol E_{s,t}=\widehat{\boldsymbol S}_{s,t}-\boldsymbol S.
\end{equation}

\begin{corollary}[Uniform-gap PRME fast rate]\label{cor:pcrw-prme-m}
Under Assumption~\ref{ass:meta-margin}, use
\eqref{eq:meta-temperature} and taskwise PRME tuning with
$\bar\Gamma_1=\log k$ and
$\bar\Gamma_s=(k-1)/(s+1)^2$. Assume the taskwise exploration and
stability conditions of Theorem~\ref{theo:prme-core} hold. Then
\begin{align}
 \mathrm{Reg}_{1:m}
 \le{}&
 3\{\psi(1)\chi dY^2G\}^{1/3}n^{2/3}
 \left[(\log k)^{1/3}
 +3(k-1)^{1/3}(m+1)^{1/3}\right]\notag\\
 &+4R\sqrt{\frac{\chi}{\lambda}}
 \sum_{u=1}^{mn}\xi_{k(u-1)}
 +\sum_{s=1}^m f_s,
 \label{eq:prme-cumulative-m}
\end{align}
which is
\begin{equation}\label{eq:prme-cumulative-order}
 \widetilde{\mathcal O}\!\left(
 n^{2/3}m^{1/3}+\sqrt{\frac{mn}{k}}\right)
\end{equation}
for fixed remaining parameters.
\end{corollary}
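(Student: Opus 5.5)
This corollary is obtained by combining the taskwise PRME guarantee of Theorem~\ref{theo:prme-core} with the transfer-complexity bound of Corollary~\ref{cor:pcrw-pc-m}, and then summing over tasks. First, under Assumption~\ref{ass:meta-margin} with the temperatures \eqref{eq:meta-temperature}, Corollary~\ref{cor:pcrw-pc-m} already gives $\Gamma_1(\pi_1)=\log k$ and $\Gamma_s(\pi_s)\le (k-1)/(s+1)^2$ for $s\ge2$. These are deterministic, since they hold almost surely. Hence the choices $\bar\Gamma_1=\log k$ and $\bar\Gamma_s=(k-1)/(s+1)^2$ are admissible deterministic upper bounds in Theorem~\ref{theo:prme-core}, uniformly over tuning pairs. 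The prior $\boldsymbol h_s$ and $\mu_s$ do not depend on $(\eta_s,\gamma_s)$. Because the exploration and feasibility conditions are assumed taskwise, applying \eqref{eq:prme-rate} to each task with comparator $\pi_s$ gives
\[
\mathrm{Reg}_s(\pi_s)\le 3\{\psi(1)\chi dY^2G\}^{1/3}n^{2/3}\bar\Gamma_s^{1/3}+4R\sqrt{\chi/\lambda}\sum_{t=1}^n\xi_{N_{s,t}}+f_s .
\]

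Second, sum over $s$. The moment terms concatenate in lexicographic order: $N_{s,t}=k(u-1)$ with $u=n(s-1)+t$, so $\sum_s\sum_t\xi_{N_{s,t}}=\sum_{u=1}^{mn}\xi_{k(u-1)}$. The failure remainders simply sum. For the leading term, bound $\sum_{s=1}^m\bar\Gamma_s^{1/3}\le(\log k)^{1/3}+(k-1)^{1/3}\sum_{s=2}^m(s+1)^{-2/3}$. By an integral comparison, $\sum_{s=2}^m(s+1)^{-2/3}\le\int_2^{m+1}x^{-2/3}\,dx\le 3(m+1)^{1/3}$. This yields \eqref{eq:prme-cumulative-m}.

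Third, the order statement \eqref{eq:prme-cumulative-order} follows from \eqref{eq:sum-xi} applied with $T=mn$. The $\xi$-sum is $\mathcal O(L^2[\sqrt{\Lambda mn/k}+\Lambda\log(mn)/k])$, which is $\widetilde{\mathcal O}(\sqrt{mn/k})$. The leading term is $\mathcal O(n^{2/3}m^{1/3})$. For $\sum_s f_s$, I use \eqref{eq:failure-remainder}. With the tuning \eqref{eq:prme-tuning}, $\eta_s/\gamma_s^2$ is a fixed function of the constants and $\bar\Gamma_s$, with only polynomial dependence on $n$ through the tuning. Summed over $m$ tasks, the factor $\delta/m$ gives $\mathcal O(\delta)$ up to constants, which is absorbed for fixed parameters.

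The main obstacle is making sure the hypotheses of Theorem~\ref{theo:prme-core} are legitimately applied. The comparator $\pi_s$ must belong to $\Pi$ and be fixed independently of the context trajectory; this holds because $\boldsymbol v_s$ depends only on oblivious task means. The $\bar\Gamma_s$ bound must hold for the realized, random prior. Corollary~\ref{cor:pcrw-pc-m} supplies it almost surely through the gap and tracking conditions, so I would take it as given. Also, $\eta_s/\gamma_s^2$ grows like $\bar\Gamma_s^{0}n^{0}$ times constants, and I would verify that it stays bounded in $s$ so that $\sum_s f_s=\mathcal O(\delta)$. The remaining steps are routine summation.
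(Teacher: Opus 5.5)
Your proposal is correct and follows the paper's proof essentially step for step: \eqref{eq:gamma-decay} supplies the deterministic $\bar\Gamma_s$, Theorem~\ref{theo:prme-core} is applied taskwise, the cube-root terms are summed via $\sum_{s=2}^m(s+1)^{-2/3}\le 3(m+1)^{1/3}$, the moment sums concatenate through $N_{s,t}=k(u-1)$, and \eqref{eq:sum-xi} gives the order statement. Two small remarks. First, replacing $\sum_s\mathrm{Reg}_s(\pi_s)$ by $\mathrm{Reg}_{1:m}$ uses the fact that $\pi_s$ attains the supremum for oblivious tasks, as argued in the proof of Theorem~\ref{theo:general-margin}, and you should state this explicitly. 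Second, under \eqref{eq:prme-tuning} one has exactly $\eta_s/\gamma_s^2=G/(\psi(1)\chi dY^2)$, so $f_s\le[\kappa^2G/(\chi d)+2LR\sqrt\chi(\kappa+1)]\delta/m$ with no dependence on $n$ or $s$; your aside about ``polynomial dependence on $n$'' is unnecessary.
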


Set $Y_{\mathrm{LP}}=LR$. The LPE projection identity implies
\begin{equation}\label{eq:lpe-error}
 \|\widehat{\boldsymbol\theta}^{\mathrm{LP}}_{s,t}
 -\boldsymbol\theta_{s,t}\|^2
 =\|\boldsymbol\theta_{s,t}\|^2
 -\frac{\ell_{s,t,A_{s,t}}^2}
 {\|\boldsymbol b_{s,t,A_{s,t}}\|^2}.
\end{equation}
The displayed fraction is defined as zero when the selected context is zero.

\begin{theorem}[Regret and exact projection bias with LPE]\label{theoC}
If $\eta Y_{\mathrm{LP}}\le\zeta\le1$, then
\begin{equation}\label{eq:lpe-regret}
 \mathrm{Reg}_s(\pi)
 \le\frac{(1-\gamma)\Gamma_s(\pi)}{\eta}
 +\psi(\zeta)\eta Y_{\mathrm{LP}}^2n+\gamma Gn
 +\mathfrak B_s^{\mathrm{LP}}(\pi),
\end{equation}
where
\begin{equation}\label{eq:lpe-signed-bias}
 \mathfrak B_s^{\mathrm{LP}}(\pi)
 =-(1-\gamma)\sum_{t=1}^n\mathbb E\!\left[
 (\boldsymbol d_{s,t}^{\pi})^T
 \{\mathbb E_{s,t}^{\mathrm{pre}}
 [\boldsymbol P_{\boldsymbol b_{s,t,A_{s,t}}}]
 -\boldsymbol I_d\}\boldsymbol\theta_{s,t}\right].
\end{equation}
It satisfies
\begin{align}
 |\mathfrak B_s^{\mathrm{LP}}(\pi)|
 &\le(1-\gamma)\sum_{t=1}^n
 \mathbb E\!\left[
 \|\boldsymbol d_{s,t}^{\pi}\|
 \sqrt{\|\boldsymbol\theta_{s,t}\|^2
 -\frac{\ell_{s,t,A_{s,t}}^2}
 {\|\boldsymbol b_{s,t,A_{s,t}}\|^2}}\right],
 \label{eq:lpe-bias-tight}\\
 &\le\Delta(1-\gamma)
 \sqrt{n\,\mathbb E\!\left[
 \sum_{t=1}^n\left(
 \|\boldsymbol\theta_{s,t}\|^2
 -\frac{\ell_{s,t,A_{s,t}}^2}
 {\|\boldsymbol b_{s,t,A_{s,t}}\|^2}\right)\right]}.
 \label{eq:lpe-bias-simple}
\end{align}
\end{theorem}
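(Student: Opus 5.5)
The plan is to obtain \eqref{eq:lpe-regret} as a direct instance of Lemma~\ref{lem01}, applied with the centering choice $\boldsymbol a_{s,t}=\boldsymbol0_d$, so that $z_{s,t,a}(\mathcal B^0)=\langle\boldsymbol b_a^0,\widehat{\boldsymbol\theta}^{\mathrm{LP}}_{s,t}\rangle$.

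\textbf{Score range.} By the projection identity \eqref{eq:lpe-projection},
\[
z_{s,t,a}=\frac{\langle\boldsymbol b_a^0,\boldsymbol b_{s,t,A_{s,t}}\rangle}{\|\boldsymbol b_{s,t,A_{s,t}}\|}\cdot\frac{\ell_{s,t,A_{s,t}}}{\|\boldsymbol b_{s,t,A_{s,t}}\|}.
\]
The first factor is at most $\|\boldsymbol b_a^0\|\le L$ in absolute value. The second factor is $\langle\boldsymbol b_{s,t,A_{s,t}}/\|\boldsymbol b_{s,t,A_{s,t}}\|,\boldsymbol\theta_{s,t}\rangle$, which is at most $R$ in absolute value. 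Hence $|z_{s,t,a}|\le LR=Y_{\mathrm{LP}}$, and the case of a zero context is trivial. The hypothesis $\eta Y_{\mathrm{LP}}\le\zeta\le1$ therefore gives the stability condition of Lemma~\ref{lem01}.

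\textbf{Second-moment, exploration and transfer terms.} The same pointwise bound gives $\mathcal W_{s,t}\le Y_{\mathrm{LP}}^2$, since $p_{s,t}(\cdot\mid\mathcal B^0)$ is a probability vector. Lemma~\ref{lem01} already supplies $\mathcal E_{s,t}(\pi)\le G$. Because the prior is fixed within the task (Assumption~\ref{ass1}), we have $\Gamma_s^{\mathrm{dyn}}=\Gamma_s$. Substituting these into \eqref{eq:general-regret} produces the first three terms of \eqref{eq:lpe-regret}.

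\textbf{Bias term.} This is the step needing the most care. By \eqref{eq:lpe-projection},
\[
\boldsymbol u_{s,t}=\{\mathbb E^{\mathrm{pre}}_{s,t}[\boldsymbol P_{\boldsymbol b_{s,t,A_{s,t}}}]-\boldsymbol I_d\}\boldsymbol\theta_{s,t},
\]
which is $\mathcal H_{s,t}$-measurable. The ghost set $\mathcal B^0$ is independent of $\mathcal H_{s,t}$, and $q_{s,t}$ is an $\mathcal H_{s,t}$-measurable kernel. Conditioning on $\mathcal H_{s,t}$ and integrating out $\mathcal B^0$ first therefore turns $\mathbb E[\langle\boldsymbol m_{q_{s,t}}(\mathcal B^0)-\boldsymbol b^0_{\pi(\mathcal B^0)},\boldsymbol u_{s,t}\rangle]$ into $\mathbb E[(\boldsymbol d^\pi_{s,t})^T\boldsymbol u_{s,t}]$. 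This yields exactly \eqref{eq:lpe-signed-bias}.

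\textbf{Bounding the bias.} For \eqref{eq:lpe-bias-tight}, apply Cauchy--Schwarz to get $|(\boldsymbol d^\pi_{s,t})^T\boldsymbol u_{s,t}|\le\|\boldsymbol d^\pi_{s,t}\|\,\|\boldsymbol u_{s,t}\|$. Next write $\boldsymbol u_{s,t}=\mathbb E^{\mathrm{pre}}_{s,t}[\widehat{\boldsymbol\theta}^{\mathrm{LP}}_{s,t}-\boldsymbol\theta_{s,t}]$ and use Jensen to obtain $\|\boldsymbol u_{s,t}\|\le\mathbb E^{\mathrm{pre}}_{s,t}\|\widehat{\boldsymbol\theta}^{\mathrm{LP}}_{s,t}-\boldsymbol\theta_{s,t}\|$. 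The right-hand side equals the conditional expectation of the square root in \eqref{eq:lpe-error}; that identity is Pythagoras for the orthogonal projector, together with $\|\boldsymbol P_{\boldsymbol b}\boldsymbol\theta\|^2=\ell^2/\|\boldsymbol b\|^2$. Since $\|\boldsymbol d^\pi_{s,t}\|$ is $\mathcal H_{s,t}$-measurable, the tower property removes the inner conditional expectation.

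For \eqref{eq:lpe-bias-simple}, note that $\boldsymbol d^\pi_{s,t}$ is an expectation of a difference of two convex combinations of support points, so $\|\boldsymbol d^\pi_{s,t}\|\le\Delta$. Then apply Cauchy--Schwarz over $t$ in the form $\sum_t\sqrt{x_t}\le\sqrt{n\sum_t x_t}$, followed by Jensen for the concave square root under $\mathbb E$.

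I expect the main obstacle to be the measurability bookkeeping in the bias step. One must confirm that $\boldsymbol d^\pi_{s,t}$ and $\boldsymbol u_{s,t}$ are both $\mathcal H_{s,t}$-measurable, so that the $\mathbb E^{\mathrm{pre}}$ inside the square root can be pulled out legitimately while the adaptive adversary is allowed.
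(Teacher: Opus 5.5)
Your proposal is correct and follows essentially the same route as the paper: instantiate Lemma~\ref{lem01} with zero center, using $\|\widehat{\boldsymbol\theta}^{\mathrm{LP}}_{s,t}\|\le R$ to get score range $Y_{\mathrm{LP}}$ and second moment at most $Y_{\mathrm{LP}}^2$, read off the signed bias from the projection identity, and then bound it by Cauchy--Schwarz, conditional Jensen with \eqref{eq:lpe-error}, $\|\boldsymbol d^{\pi}_{s,t}\|\le\Delta$, and Cauchy--Schwarz over $t$. Your explicit check that $\boldsymbol d^{\pi}_{s,t}$ and $\boldsymbol u_{s,t}$ are $\mathcal H_{s,t}$-measurable spells out what the paper's step of ``keeping $\|\boldsymbol d^{\pi}_{s,t}\|$ inside the expectation'' leaves implicit.
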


\begin{corollary}[Average LPE bias]\label{cor3}
For fixed comparator policies $\pi_1,\ldots,\pi_m$,
\begin{equation}\label{eq:lpe-average-bias}
 \frac1m\sum_{s=1}^m
 |\mathfrak B_s^{\mathrm{LP}}(\pi_s)|
 \le\Delta(1-\gamma)
 \sqrt{n^2R^2-\frac{n}{m}\mathbb E\!\left[
 \sum_{s=1}^m\sum_{t=1}^n
 \frac{\ell_{s,t,A_{s,t}}^2}
 {\|\boldsymbol b_{s,t,A_{s,t}}\|^2}\right]}.
\end{equation}
\end{corollary}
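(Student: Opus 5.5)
The plan is to sum the per-task average-LPE bias bound of Theorem~\ref{theoC} over tasks and apply Cauchy--Schwarz across tasks. The first step is to fix comparators $\pi_1,\ldots,\pi_m$ and apply \eqref{eq:lpe-bias-tight} to each task. For each $s$ this gives
\[
|\mathfrak B_s^{\mathrm{LP}}(\pi_s)|\le(1-\gamma)\sum_{t=1}^n\mathbb E\big[\|\boldsymbol d_{s,t}^{\pi_s}\|\,e_{s,t}\big],
\]
where $e_{s,t}=\sqrt{\|\boldsymbol\theta_{s,t}\|^2-\ell_{s,t,A_{s,t}}^2/\|\boldsymbol b_{s,t,A_{s,t}}\|^2}$. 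By \eqref{eq:lpe-error}, $e_{s,t}=\|\widehat{\boldsymbol\theta}^{\mathrm{LP}}_{s,t}-\boldsymbol\theta_{s,t}\|$, so it is well defined and nonnegative; the zero-context convention gives $e_{s,t}=\|\boldsymbol\theta_{s,t}\|$.

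Next I bound $\|\boldsymbol d_{s,t}^{\pi}\|\le\Delta$. The vector $\boldsymbol m_{q_{s,t}}(\mathcal B^0)-\boldsymbol b^0_{\pi(\mathcal B^0)}$ is a convex combination of differences $\boldsymbol b_a^0-\boldsymbol b^0_{\pi(\mathcal B^0)}$ of support points, each of norm at most $\Delta$. Jensen's inequality then passes this bound through the expectation over $\mathcal B^0$. Summing over $s$ and dividing by $m$ yields
\[
\frac1m\sum_s|\mathfrak B_s^{\mathrm{LP}}(\pi_s)|\le\frac{\Delta(1-\gamma)}m\sum_{s,t}\mathbb E[e_{s,t}].
\]

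Then I apply Cauchy--Schwarz over the $mn$ pairs $(s,t)$, together with Jensen ($\mathbb E[e]\le\sqrt{\mathbb E[e^2]}$). This gives
\[
\sum_{s,t}\mathbb E[e_{s,t}]\le\sqrt{mn\,\mathbb E\Big[\sum_{s,t}e_{s,t}^2\Big]}.
\]
Substituting $e_{s,t}^2=\|\boldsymbol\theta_{s,t}\|^2-\ell^2/\|\boldsymbol b\|^2$ and using $\|\boldsymbol\theta_{s,t}\|\le R$ from Assumption~\ref{ass0} bounds $\mathbb E\sum_{s,t}\|\boldsymbol\theta_{s,t}\|^2$ by $mnR^2$. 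The resulting prefactor is $\frac{1}{m}\sqrt{mn(mnR^2-\mathbb E[\cdots])}$. This equals $\sqrt{n^2R^2-\frac nm\mathbb E[\cdots]}$, which is exactly \eqref{eq:lpe-average-bias}.

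I do not expect a genuine obstacle; the step needing most care is bookkeeping. First, the expectation must sit inside the square root, which requires applying Cauchy--Schwarz jointly over $(s,t)$ and the probability space. Second, the uniform bound $\|\boldsymbol d_{s,t}^{\pi}\|\le\Delta$ must hold almost surely despite $q_{s,t}$ being random. Third, the radicand is nonnegative because each $e_{s,t}^2\ge0$.
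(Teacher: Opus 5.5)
Your proposal is correct and follows the paper's proof: both start from \eqref{eq:lpe-bias-tight}, bound $\|\boldsymbol d_{s,t}^{\pi}\|\le\Delta$ using the diameter of the convex hull of the support, apply Cauchy--Schwarz/Jensen, and finish with $\|\boldsymbol\theta_{s,t}\|\le R$. The only difference is cosmetic: you use one joint Cauchy--Schwarz over all $mn$ task--round pairs, whereas the paper first applies it over $t$ to obtain \eqref{eq:lpe-bias-simple} and then over tasks, and both routes give the same bound.
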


Without directional coverage, the average LPE bias bound can remain of order $n$.

\subsection{Filtration, Ghost Sets, and Changing Priors}

Throughout the proofs we use the pre-context filtration in \eqref{eq:pre-filtration}. Every policy kernel below is measurable before the fresh context set is drawn. The ghost set $\mathcal B^0\sim\mathcal D^k$ is independent of the entire realized interaction and is introduced only for the analysis.

\begin{proof}[Proof of Lemma~\ref{lem:changing-prior}]
For fixed $\mathcal B$, abbreviate $r_t(a)=r_{s,t}(a\mid\mathcal B)$ and let
\begin{equation}
 V_{t-1,a}^{\mathrm{cum}}=\sum_{j<t}z_{j,a},
 \qquad
 W_t=\sum_{a=1}^k r_t(a)e^{-\eta V_{t-1,a}^{\mathrm{cum}}}.
\end{equation}
Write $q_{t,a}=q_{s,t}(a\mid\mathcal B)$. The policy in
Algorithm~\ref{alg1} is
$q_{t,a}=r_t(a)e^{-\eta V_{t-1,a}^{\mathrm{cum}}}/W_t$. Subtracting a round-dependent constant from every $z_{t,a}$ leaves this policy unchanged.

For $|x|\le\zeta$, the elementary inequality
\begin{equation}\label{eq:exp-psi}
 e^{-x}\le1-x+\psi(\zeta)x^2
\end{equation}
holds with $\psi(\zeta)$ from \eqref{eq:psi-c}. Hence
\begin{align}
 \widetilde W_{t+1}
 &:=\sum_a r_t(a)e^{-\eta(V_{t-1,a}^{\mathrm{cum}}+z_{t,a})}
 =W_t\sum_aq_{t,a}e^{-\eta z_{t,a}}\notag\\
 &\le W_t\exp\!\left(
 -\eta\langle q_t,z_t\rangle
 +\psi(\zeta)\eta^2\sum_aq_{t,a}z_{t,a}^2\right).
 \label{eq:within-prior-potential}
\end{align}
By the definition of $\omega_{s,t}$,
\begin{equation}
 W_{t+1}
 =\sum_ar_{t+1}(a)e^{-\eta V_{t,a}^{\mathrm{cum}}}
 \le e^{\omega_{s,t}(\mathcal B)}\widetilde W_{t+1}.
\end{equation}
Since $W_1=1$, iteration of the preceding two displays gives an upper bound on $\log W_{n+1}$. On the other hand, for any $a^\star$,
\begin{equation}
 W_{n+1}\ge r_{n+1}(a^\star)
 e^{-\eta\sum_{t=1}^nz_{t,a^\star}}.
\end{equation}
Combining the upper and lower bounds and dividing by $\eta$ proves \eqref{eq:changing-prior-bound}.
\end{proof}

\begin{proof}[Proof of \eqref{eq:prior-drift-bound}]
For $r_{\boldsymbol h}(a\mid\mathcal B)\propto\exp[-\mu_s\langle b_a,\boldsymbol h\rangle]$, let $Z_t(\mathcal B)=\sum_a\exp[-\mu_s\langle b_a,h_{s,t}\rangle]$. Then
\begin{equation}
 \log\frac{r_{s,t+1}(a\mid\mathcal B)}{r_{s,t}(a\mid\mathcal B)}
 =-\mu_s\langle b_a,h_{s,t+1}-h_{s,t}\rangle
 +\log Z_t-\log Z_{t+1}.
\end{equation}
The gradient is
\begin{equation}
 \nabla_{\boldsymbol h}\log r_{\boldsymbol h}(a\mid\mathcal B)
 =-\mu_s\left(b_a-\sum_c r_{\boldsymbol h}(c\mid\mathcal B)b_c\right),
\end{equation}
whose norm is at most $\mu_s\Delta$. The mean-value theorem and maximization over $a$ prove the claim.
\end{proof}

\begin{proof}[Proof of Lemma~\ref{lem01}]
Fix $s,t$ and condition on $\mathcal H_{s,t}$. For any policy kernel $w$, freshness of $\mathcal B_{s,t}$ gives
\begin{equation}\label{eq:fresh-real-ghost}
 \mathbb E\!\left[
 \left\langle m_w(\mathcal B_{s,t})-b_{\pi(\mathcal B_{s,t})},
 \theta_{s,t}\right\rangle\middle|\mathcal H_{s,t}\right]
 =\mathbb E_{\mathcal B^0}\!\left[
 \left\langle m_w(\mathcal B^0)-b^0_{\pi(\mathcal B^0)},
 \theta_{s,t}\right\rangle\right].
\end{equation}
Independence of the ghost set and the definition of $u_{s,t}$ also imply
\begin{align}
 &\mathbb E\!\left[
 \left\langle m_{q_{s,t}}(\mathcal B^0)-b^0_{\pi(\mathcal B^0)},
 \widehat\theta_{s,t}\right\rangle\middle|\mathcal H_{s,t}\right]\notag\\
 &\quad=\mathbb E_{\mathcal B^0}\!\left[
 \left\langle m_{q_{s,t}}(\mathcal B^0)-b^0_{\pi(\mathcal B^0)},
 \theta_{s,t}+u_{s,t}\right\rangle\right].
 \label{eq:biased-ghost-identity}
\end{align}
Decompose $p_{s,t}=(1-\gamma)q_{s,t}+\gamma\,\mathrm{unif}$. Equations \eqref{eq:fresh-real-ghost}--\eqref{eq:biased-ghost-identity}, summed over $t$, express the true regret as the exploitation surrogate, the exact uniform exploration term \eqref{eq:exact-exploration}, and the signed correction \eqref{eq:exact-bias}. Apply Lemma~\ref{lem:changing-prior} pathwise to $\mathcal B^0$ with $a^\star=\pi(\mathcal B^0)$. Finally,
\begin{equation}
 (1-\gamma)q_{s,t}(a\mid\mathcal B^0)
 \le p_{s,t}(a\mid\mathcal B^0),
\end{equation}
so the multiplied quadratic term is at most \eqref{eq:general-second-moment}. Taking the joint expectation proves \eqref{eq:general-regret}. The range bound on $\mathcal E_{s,t}$ follows directly from Assumption~\ref{ass0}.
\end{proof}

\subsection{Known-Distribution Estimator}

\begin{lemma}[Affine support and selected covariance]\label{lem:affine-support}
Let $C=\operatorname{Cov}_{\mathcal D}(b)$ and $\mathcal U=\operatorname{range}(C)$. Then $b-\bar b\in\mathcal U$ almost surely. Moreover, for the selected covariance $H_{s,t}$ in \eqref{eq:selected-covariance},
\begin{equation}\label{eq:selected-cov-lower}
 \operatorname{range}(H_{s,t})=\mathcal U,
 \qquad
 H_{s,t}|_{\mathcal U}\succeq\gamma C|_{\mathcal U}.
\end{equation}
\end{lemma}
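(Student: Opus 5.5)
The plan has three parts: the support claim, a lower bound for $H_{s,t}$, and an upper bound on its range.

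\textbf{Support claim.} Since $\bar b=\mathbb E[b]$, for any $v\in\mathcal U^\perp$ we have $v^TCv=\mathbb E[\langle v,b-\bar b\rangle^2]=0$. Hence $\langle v,b-\bar b\rangle=0$ almost surely. Taking a basis of $\mathcal U^\perp$ gives $b-\bar b\in\mathcal U$ almost surely. As a consequence, almost surely every context $b_a$ in any set $\mathcal B$ lies in the affine subspace $\bar b+\mathcal U$.

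\textbf{Range of $H_{s,t}$ is contained in $\mathcal U$.} The selected mean $x_{s,t}$ is a convex combination, in expectation, of points in $\bar b+\mathcal U$. Since affine subspaces are closed and convex, $x_{s,t}\in\bar b+\mathcal U$. Therefore $b_a-x_{s,t}\in\mathcal U$ almost surely. It follows that every outer product in \eqref{eq:selected-covariance} has range in $\mathcal U$, so $\operatorname{range}(H_{s,t})\subseteq\mathcal U$. Equivalently, $H_{s,t}v=0$ for every $v\in\mathcal U^\perp$.

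\textbf{Lower bound, which also gives equality of ranges.} For $v\in\mathcal U$, write $p_{s,t}\ge\gamma/k$ pointwise, which follows from the uniform mixing in Algorithm~\ref{alg1}. Then
\[
v^TH_{s,t}v=\mathbb E_{\mathcal B}\Bigl[\sum_a p_{s,t}(a\mid\mathcal B)\langle v,b_a-x_{s,t}\rangle^2\Bigr]\ge\frac{\gamma}{k}\sum_{a=1}^k\mathbb E[\langle v,b_a-x_{s,t}\rangle^2].
\]
Each $b_a$ is marginally distributed as $\mathcal D$, and $x_{s,t}$ is a fixed $\mathcal H_{s,t}$-measurable vector. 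The bias–variance decomposition gives
\[
\mathbb E[\langle v,b-x\rangle^2]=v^TCv+\langle v,\bar b-x\rangle^2\ge v^TCv.
\]
Summing over the $k$ actions cancels the factor $1/k$, so $v^TH_{s,t}v\ge\gamma\,v^TCv$, which is \eqref{eq:selected-cov-lower}. Because $C|_{\mathcal U}$ is positive definite and $\gamma>0$, $H_{s,t}|_{\mathcal U}$ is positive definite. Combined with the inclusion above, this yields $\operatorname{range}(H_{s,t})=\mathcal U$.

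\textbf{Main subtlety.} The key care point is that $p_{s,t}$ depends on the entire set $\mathcal B$, not only on $b_a$. The argument deliberately sidesteps this by dropping to the pointwise floor $\gamma/k$ before taking the expectation. It then uses only the marginal law of each $b_a$ together with the fact that $x_{s,t}$ is fixed given $\mathcal H_{s,t}$. No independence between the selection and the selected context is needed anywhere.
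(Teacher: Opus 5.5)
Your proof is correct and follows essentially the same route as the paper: you prove the support claim from $v^TCv=0$ on $\mathcal U^\perp$, obtain the lower bound from the uniform-exploration component of $p_{s,t}=(1-\gamma)q_{s,t}+\gamma/k$, and get the reverse range inclusion because centered selected contexts lie in $\mathcal U$. The only difference is cosmetic: the paper writes $H_{s,t}$ exactly via the covariance-mixture identity as $(1-\gamma)\operatorname{Cov}_{Q_{s,t}}(b)+\gamma C+\gamma(1-\gamma)(x^q_{s,t}-\bar b)(x^q_{s,t}-\bar b)^T$, whereas you drop the $q$-part through the pointwise floor $p_{s,t}\ge\gamma/k$ and apply a bias--variance step to the uniform part, which gives the same $\gamma C$ bound.
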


\begin{proof}
Fix $s,t$ and condition on $\mathcal H_{s,t}$. All distributions and means in this proof are therefore conditional, pathwise quantities.
For every $v\in\mathcal U^\perp$,
$\mathbb E[\langle v,b-\bar b\rangle^2]=v^TCv=0$. Hence, the first assertion holds. Let $Q_{s,t}$ be the marginal distribution of the context selected by $q_{s,t}$. The marginal selected by $p_{s,t}$ is
$P_{s,t}=(1-\gamma)Q_{s,t}+\gamma\mathcal D$, and let $x^q_{s,t}=\mathbb E_{b\sim Q_{s,t}}[b]$. The covariance-mixture identity yields
\begin{align}
 H_{s,t}
 ={}&(1-\gamma)\operatorname{Cov}_{Q_{s,t}}(b)+\gamma C\notag\\
 &+\gamma(1-\gamma)
 (x^q_{s,t}-\bar b)(x^q_{s,t}-\bar b)^T
 \succeq\gamma C.
\end{align}
All centered selected contexts lie in $\mathcal U$, so the reverse range inclusion holds as well. This proves \eqref{eq:selected-cov-lower}.
\end{proof}

\begin{proof}[Proof of Theorem~\ref{theo01}]
Conditioning before the current set and using noiseless linear loss,
\begin{align}
 \mathbb E_{s,t}^{\mathrm{pre}}[\widehat\theta^{\mathrm{PC}}_{s,t}]
 &=H_{s,t}^{\dagger}
 \mathbb E[(b_{s,t,A_{s,t}}-x_{s,t})b_{s,t,A_{s,t}}^T]
 \theta_{s,t}\notag\\
 &=H_{s,t}^{\dagger}H_{s,t}\theta_{s,t}
 =P_{\mathcal U}\theta_{s,t}.
 \label{eq:pc-mean-proof}
\end{align}
Thus the full parameter bias is $-P_{\mathcal U^\perp}\theta_{s,t}$. Both
$m_{q_{s,t}}(\mathcal B^0)$ and $b^0_{\pi(\mathcal B^0)}$ belong to the same affine space $\bar b+\mathcal U$. Their difference belongs to $\mathcal U$. Its inner product with the bias is therefore zero, proving exact decision-relevant unbiasedness.

The selected mean belongs to the closed convex hull of the support, so
$\|b-x_{s,t}\|\le\Delta$. Lemma~\ref{lem:affine-support} gives
\begin{equation}
 |\langle b_a^0-x_{s,t},\widehat\theta^{\mathrm{PC}}_{s,t}\rangle|
 \le \frac{Y\Delta^2}{\gamma\lambda_+}.
\end{equation}
Hence \eqref{eq:pc-stability} implies the score condition in Lemma~\ref{lem01} with center $a_{s,t}=x_{s,t}$.

For the second moment, independence of the ghost set first gives
\begin{equation}
 \mathbb E_{\mathcal B^0}\sum_ap_{s,t}(a\mid\mathcal B^0)
 \langle b_a^0-x_{s,t},\widehat\theta_{s,t}^{\mathrm{PC}}\rangle^2
 =(\widehat\theta_{s,t}^{\mathrm{PC}})^TH_{s,t}
 \widehat\theta_{s,t}^{\mathrm{PC}}.
\end{equation}
Taking expectation over the actual set and action yields exactly \eqref{eq:pc-data-variance}. Since $|\ell|\le Y$,
\begin{align}
 \mathcal W^{\mathrm{PC}}_{s,t}
 &\le Y^2\mathbb E[(b_A-x_{s,t})^TH_{s,t}^{\dagger}(b_A-x_{s,t})]\notag\\
 &=Y^2\operatorname{tr}(H_{s,t}^{\dagger}H_{s,t})
 =Y^2d_{\mathcal U}.
\end{align}
Substitution in Lemma~\ref{lem01} proves both regret bounds.
\end{proof}

\begin{proof}[Proof of Corollary~\ref{cor1}]
Set $\gamma=\eta Y\kappa_{\mathcal U}/\zeta$ in \eqref{eq:pc-regret-simple}, use $1-\gamma\le1$ and $\Gamma_s(\pi)\le\bar\Gamma_s$, and obtain
\begin{equation}
 \mathrm{Reg}_s(\pi)\le\frac{\bar\Gamma_s}{\eta}+\eta nA_\zeta.
\end{equation}
The choice in \eqref{eq:pc-tuning} minimizes the right-hand side. Its feasibility gives the stated result.
\end{proof}

\begin{proof}[Proof of Theorem~\ref{theo02}]
Apply \eqref{eq:pc-regret-simple} to $\pi_s^\star$ for every task, sum, and divide by $m$. Since $\mathrm{Reg}_s=\mathrm{Reg}_s(\pi_s^\star)$, this gives \eqref{eq:average-known-regret}. If the supremum is not attained, let $\pi_{s,\varepsilon}$ be an $\varepsilon$-optimal comparator for task $s$. Then
\(
 \mathrm{Reg}_{1:m}/m
 \le \varepsilon+m^{-1}\sum_s\mathrm{Reg}_s(\pi_{s,\varepsilon})
\)
for every sequence specified in the theorem. Applying \eqref{eq:pc-regret-simple} to that
sequence gives the asserted $\varepsilon$-optimal bound. Applying the same argument
directly to an arbitrary fixed comparator sequence proves its stated bound. No
independence among task parameters is required for this step.
\end{proof}

\begin{proof}[Proof of Corollary~\ref{cor:bounded-transfer}]
Condition on the history preceding task $s$, and hence on $h=h_s$, and set $\Theta=\Theta_s$ and $\mu=\mu_s$. Since the ghost set is independent of this history, Jensen's inequality and the i.i.d.\ ghost arms give
\begin{align}
 \mathbb E[-\log r_h(\pi_\Theta(\mathcal B)\mid\mathcal B)]
 &\le\log k+\mu\mathbb E\langle b_{\pi_\Theta(\mathcal B)},h\rangle
 +\log\mathbb E e^{-\mu\langle b,h\rangle}.
\end{align}
The random variable $\langle b-\bar b,h\rangle$ has range length at most $\Delta\|h\|$. Hoeffding's lemma therefore bounds the last logarithm by
$-\mu\langle\bar b,h\rangle+\mu^2\Delta^2\|h\|^2/8$. Taking expectation over the pre-task history proves \eqref{eq:bounded-transfer-gain}. There is no drift term because the prior is fixed within the task.
\end{proof}

\begin{proof}[Proof of Lemma~\ref{lem:pcrw-transfer}]
For fixed $\mathcal B$ and comparator action $a^\star=\pi_s(\mathcal B)$,
\begin{equation}
 \nabla_h[-\log r_{\mu_s,h}(a^\star\mid\mathcal B)]
 =\mu_s\left(b_{a^\star}-\sum_a r_{\mu_s,h}(a\mid\mathcal B)b_a\right).
\end{equation}
Its norm is at most $\mu_s\Delta$, so averaging over $\mathcal B$ proves the Lipschitz claim.
Because the PCRW coefficients are nonnegative and sum to one,
\begin{align}
 \|h_s^{\mathrm{cos}}-\widehat v_{s-1}\|
 &=\left\|\sum_{i<s}c_{s,i}^{\mathrm{cos}}
 (\widehat v_i-\widehat v_{s-1})\right\|
 \le D_s^{\mathrm{ret}}.
\end{align}
The triangle inequality then gives
$\|h_s^{\mathrm{cos}}-v_s\|\le D_s^{\mathrm{ret}}+e_{s-1}+D_s^{\mathrm{task}}$.
Apply Lipschitzness and the fixed-prior identity \eqref{eq:fixed-transfer-complexity}, then sum over tasks. For $s=1$, $\mu_1=0$ gives the uniform action prior and complexity $\log k$.
\end{proof}

\begin{proof}[Proof of Theorem~\ref{theo:general-margin}]
Fix $s\ge2$, write $\varepsilon=\varepsilon_s$ and $\mu=\mu_s$, and let
$E_s=\{\|h_s-v_s\|\le\varepsilon\}$. Condition on $\mathcal G_s$ from \eqref{eq:augmented-pre-task}. Since $h_s$ is $\mathcal H_{s,1}$-measurable and $v_s$ is $\mathcal G_s$-measurable, $E_s$ is $\mathcal G_s$-measurable. On $E_s$, write $\mathcal B=\mathcal B^0$ for the fresh ghost set defined in the cross-task setup and put $a^\star=\pi_s(\mathcal B)$. For every $a\ne a^\star$,
\begin{align}
 \langle b_a-b_{a^\star},h_s\rangle
 &\ge \langle b_a-b_{a^\star},v_s\rangle
 -\|b_a-b_{a^\star}\|\,\|h_s-v_s\|\notag\\
 &\ge \operatorname{gap}_s(\mathcal B)-\Delta\varepsilon.
 \label{eq:general-margin-predicted-gap}
\end{align}
Therefore $\log(1+x)\le x$ gives
\begin{align}
 -\log r_s(a^\star\mid\mathcal B)
 &=\log\!\left[1+\sum_{a\ne a^\star}
 e^{-\mu\langle b_a-b_{a^\star},h_s\rangle}\right]\notag\\
 &\le(k-1)e^{\mu\Delta\varepsilon}
 e^{-\mu\operatorname{gap}_s(\mathcal B)}.
 \label{eq:general-margin-softmax}
\end{align}
Let $X=\operatorname{gap}_s(\mathcal B)$. Since
$\lim_{x\to 0^+}F(x)=0$, Assumption~\ref{ass:general-margin} implies
$\mathbb P(X=0\mid\mathcal G_s)=0$. Conditional integration by parts yields
\begin{align}
 \mathbb E[e^{-\mu X}\mid\mathcal G_s]
 &=\mu\int_0^\infty e^{-\mu x}
 \mathbb P(X\le x\mid\mathcal G_s)\,dx\notag\\
 &\le\mu\int_0^\infty e^{-\mu x}F(x)\,dx
 =\Phi_F(\mu).
 \label{eq:general-margin-laplace-proof}
\end{align}
Combining \eqref{eq:general-margin-softmax}--\eqref{eq:general-margin-laplace-proof}
shows that, on the $\mathcal G_s$-measurable event $E_s$, the conditional transfer complexity is at most
$(k-1)e^{\mu\Delta\varepsilon}\Phi_F(\mu)$. On $E_s^c$, the universal fallback
\eqref{eq:transfer-uniform-fallback} is $\log k+\mu\Delta$. Taking total expectation and using $\mathbb P(E_s^c)\le \delta_s$ proves
$\Gamma_s(\pi_s)\le B_s$. For $s=1$, $\mu_1=0$ gives the uniform prior and
$\Gamma_1(\pi_1)=B_1=\log k$.

Because each task is oblivious, minimizing its cumulative comparator loss is equivalent to
minimizing $\langle b_a,\Theta_s\rangle$ for each context set. Components in
$\mathcal U^\perp$ add the same offset to all actions, so the fixed policy $\pi_s$ in
\eqref{eq:true-task-direction} attains the supremum in \eqref{eq:policy-regret}. Hence
$\mathrm{Reg}_{1:m}=\sum_s\mathrm{Reg}_s(\pi_s)$. Applying
Corollary~\ref{cor1} task by task with $\bar\Gamma_s=B_s$ gives
\eqref{eq:general-margin-pc}. Applying Theorem~\ref{theo:prme-core} and summing its cube-root
terms gives \eqref{eq:general-margin-prme}. The moment terms combine because
$N_{s,t}=k\{n(s-1)+t-1\}$ in lexicographic order.

It remains to prove the qualitative statement. By the change of variables in
\eqref{eq:margin-laplace-envelope},
\[
 \Phi_F(\mu)=\int_0^\infty e^{-y}F(y/\mu)\,dy.
\]
For each fixed $y$, $F(y/\mu)\to0$ as $\mu\to\infty$, and the integrand is bounded by
$e^{-y}$. Dominated convergence therefore gives $\Phi_F(\mu)\to0$. Under
\eqref{eq:qualitative-calibration}, the exponential factor in the first term of $B_s$ is
uniformly bounded for all sufficiently large $s$, while the second term tends to zero.
Thus $B_s\to0$. The same holds for $\sqrt{B_s}$ and $B_s^{1/3}$, and Ces\`aro
summability gives \eqref{eq:cesaro-transfer}.
\end{proof}

\begin{proof}[Proof of Corollary~\ref{cor:general-margin-feasibility}]
Equations~\eqref{eq:general-margin-pc}, \eqref{eq:general-margin-prme}, and
\eqref{eq:cesaro-transfer} give the stated PC-KDE and PRME learning-term scales. The
tuned PC-KDE exploration probability is proportional to $\sqrt{B_s}$, while the tuned
PRME exploration probability is proportional to $B_s^{1/3}$ and the left-hand side of
\eqref{eq:prme-feasibility} is proportional to $B_s$. Hence both tuning conditions hold
for all sufficiently large $s$. A finite prefix may use the trivial $Gn$ bound.
Equation~\eqref{eq:sum-xi} gives the
$\widetilde{\mathcal O}(\sqrt{mn/k})$ PRME moment-learning term. Under the tuned PRME
parameters, the aggregate failure remainder is $\mathcal O(\delta)$ for fixed problem
constants, exactly as in the proof of Theorem~\ref{theo:prme-core}. These bounds give
the stated sublinear regret consequences.
\end{proof}

Let
$\Gamma_{\mathrm E}(u)=\int_0^\infty e^{-x}x^{u-1}\,dx$
denote the Euler gamma function.

\begin{corollary}[Power-law specialization]\label{cor:power-margin}
If, in addition, the function $F$ in Assumption~\ref{ass:general-margin} satisfies
\begin{equation}\label{eq:power-margin}
 F(x)\le C_gx^\alpha,\qquad C_g>0,\quad \alpha>0,
\end{equation}
then
\begin{equation}\label{eq:power-laplace}
 \Phi_F(\mu)\le C_g\Gamma_{\mathrm E}(\alpha+1)\mu^{-\alpha}.
\end{equation}
With $\mu_s=\alpha/(\Delta\varepsilon_s)$,
\begin{equation}\label{eq:power-margin-B}
 B_s\le (k-1)C_g\Gamma_{\mathrm E}(\alpha+1)e^\alpha
 \left(\frac{\Delta}{\alpha}\right)^\alpha\varepsilon_s^\alpha
 +\delta_s\left(\log k+\frac{\alpha}{\varepsilon_s}\right).
\end{equation}
\end{corollary}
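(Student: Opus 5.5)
The argument has two parts: a Laplace-transform computation for the power-law envelope, and a substitution of the chosen $\mu_s$ into the definition \eqref{eq:general-margin-B} of $B_s$ from Theorem~\ref{theo:general-margin}.

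\textbf{Laplace bound.} For \eqref{eq:power-laplace}, I would use the second form of the margin function in \eqref{eq:margin-laplace-envelope}, namely $\Phi_F(\mu)=\int_0^\infty e^{-y}F(y/\mu)\,dy$. Since $F$ takes values in $[0,1]$ and is nonnegative, the pointwise bound \eqref{eq:power-margin} applies inside the integral and gives
\[
\Phi_F(\mu)\le C_g\mu^{-\alpha}\int_0^\infty e^{-y}y^{\alpha}\,dy
=C_g\Gamma_{\mathrm E}(\alpha+1)\mu^{-\alpha}.
\]
This uses only the definition of $\Gamma_{\mathrm E}$ with $u=\alpha+1$. It does not need $F\le1$, although that fact is harmless.

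\textbf{Substitution.} For \eqref{eq:power-margin-B}, put $\mu_s=\alpha/(\Delta\varepsilon_s)$, which is positive because $\Delta>0$ by Assumption~\ref{ass:general-margin} and $\varepsilon_s>0$. Then $\mu_s\Delta\varepsilon_s=\alpha$, so the exponential factor in $B_s$ equals $e^{\alpha}$. The Laplace bound gives
\[
\mu_s^{-\alpha}=(\Delta\varepsilon_s/\alpha)^\alpha=(\Delta/\alpha)^\alpha\varepsilon_s^\alpha .
\]
Multiplying by $(k-1)$ yields the first term of \eqref{eq:power-margin-B}. For the failure term, $\delta_s(\log k+\mu_s\Delta)=\delta_s(\log k+\alpha/\varepsilon_s)$, since $\mu_s\Delta=\alpha/\varepsilon_s$.

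There is no real obstacle; the argument is a direct calculation. The only care point is making sure the Laplace bound is applied to the integral form, so the inequality direction is preserved by nonnegativity of $e^{-y}$. I would also note that the choice $\mu_s=\alpha/(\Delta\varepsilon_s)$ is exactly the minimizer of $e^{\mu\Delta\varepsilon}\mu^{-\alpha}$, which explains the constants, though the proof does not need this.
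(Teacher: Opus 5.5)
Your proposal is correct and takes essentially the same route as the paper. The only difference is cosmetic: the paper bounds $\Phi_F(\mu)$ in the form $C_g\mu\int_0^\infty e^{-\mu x}x^\alpha\,dx$, whereas you use the rescaled form $\int_0^\infty e^{-y}F(y/\mu)\,dy$, which is just a change of variables. Both then substitute $\mu_s=\alpha/(\Delta\varepsilon_s)$ into the definition of $B_s$ in the same way.
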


Thus polynomial calibration rates for $\varepsilon_s$ and $\delta_s$ give corresponding polynomial task-number rates through Theorem~\ref{theo:general-margin}.

\begin{proof}[Proof of Corollary~\ref{cor:power-margin}]
Under \eqref{eq:power-margin},
\begin{align}
 \Phi_F(\mu)
 &\le C_g\mu\int_0^\infty e^{-\mu x}x^\alpha\,dx
 =C_g\Gamma_{\mathrm E}(\alpha+1)\mu^{-\alpha},
\end{align}
which proves \eqref{eq:power-laplace}. Substituting $\mu_s=\alpha/(\Delta\varepsilon_s)$ in
\eqref{eq:general-margin-B} gives \eqref{eq:power-margin-B}.
\end{proof}

\subsection{A Concrete Recurrent-Type Sufficient Condition for Prior Consistency}\label{app:recurrent-realization}

This subsection gives one structured sufficient condition for the prior-accuracy assumption in Theorem~\ref{theo:general-margin}. It is an analytical construction and is not an additional algorithmic variant used in Section~\ref{sec:experiments}. Before task $s$, suppose an exogenous descriptor $z_s\in[J]$ is revealed. For this subsection only, augment the pre-context filtration to
\begin{equation}\label{eq:descriptor-filtration}
 \mathcal H^{z}_{s,t}=\mathcal H_{s,t}\vee\sigma(z_s).
\end{equation}
The context-freshness condition of Section~\ref{sec2_1} is required conditional on $\mathcal H^{z}_{s,t}$. Because $z_s$ is revealed before the task prior and every within-task decision, the arguments leading to Theorem~\ref{theo:general-margin} remain valid under this augmented filtration.

\begin{assumption}[Finite recurrent types with pre-task descriptors]\label{ass:recurrent-prototype-app}
There are a fixed number $J<\infty$ of unit vectors
$\boldsymbol u_1,\ldots,\boldsymbol u_J\in\mathcal U$ and a descriptor sequence
$z_s\in[J]$ revealed before task $s$. The descriptor sequence and all task loss vectors are fixed before the interaction. For every task,
\begin{equation}\label{eq:recurrent-ray-app}
 \boldsymbol P_{\mathcal U}\boldsymbol\Theta_s
 =\rho_s\boldsymbol u_{z_s},
 \qquad \rho_s\ge\rho_{\min}>0.
\end{equation}
\end{assumption}

Define the same-descriptor history count and descriptor-gated prior by
\begin{equation}\label{eq:recurrent-prior-app}
 N_s=\sum_{i<s}\mathbf1\{z_i=z_s\},\qquad
 \bar{\boldsymbol g}_s=
 \begin{cases}
 \displaystyle\frac1{N_s}\sum_{i<s:\,z_i=z_s}
 \boldsymbol P_{\mathcal U}\widehat{\boldsymbol\Theta}_i,
 &N_s\ge1,\\[0.8em]
 \boldsymbol0_d,&N_s=0,
 \end{cases}
 \qquad
 \boldsymbol h_s^{\mathrm{rec}}=
 \begin{cases}
 \bar{\boldsymbol g}_s/\|\bar{\boldsymbol g}_s\|,
 &\bar{\boldsymbol g}_s\ne\boldsymbol0_d,\\
 \boldsymbol0_d,&\bar{\boldsymbol g}_s=\boldsymbol0_d.
 \end{cases}
\end{equation}
The raw projected summaries are averaged before normalization, so different positive magnitudes along the same decision direction reinforce the same recurrent type. The rule is $\mathcal H^z_{s,1}$-measurable and can be maintained with one running sum and count per descriptor.

On the $r$-th occurrence of any descriptor, impose the exploration floor
\begin{equation}\label{eq:recurrent-exploration-floor-app}
 \gamma_s\ge \underline\gamma_r
 :=\frac{\gamma_0}{\log(\mathrm e+r)},
 \qquad 0<\gamma_0\le\frac12.
\end{equation}
For $N\ge1$, set
\begin{align}
 \delta_N&=(\mathrm e+N)^{-2},\qquad
 \mu_N=c_\mu\log(\mathrm e+N),\quad c_\mu>0,\notag\\
 M_N&=R+\frac{Y\Delta}{\lambda_+\gamma_0}\log(\mathrm e+N),\notag\\
 d_N&=M_N\sqrt{\frac{2d_{\mathcal U}\log(2d_{\mathcal U}/\delta_N)}{N}},\notag\\
 \varepsilon_N&=2\min\left\{1,\frac{d_N}{\rho_{\min}}\right\},
 \label{eq:recurrent-radius-app}
\end{align}
and define
\begin{equation}\label{eq:recurrent-B-app}
 B_N^{\mathrm{rec}}
 =(k-1)e^{\mu_N\Delta\varepsilon_N}\Phi_F(\mu_N)
 +\delta_N(\log k+\mu_N\Delta).
\end{equation}

For a task with $N_s=N\ge1$, let
$(\eta_s^{\mathrm{tun}},\gamma_s^{\mathrm{tun}})$ be the PC-KDE tuning
\eqref{eq:pc-tuning} with $\bar\Gamma_s=B_N^{\mathrm{rec}}$. Whenever
$\gamma_s^{\mathrm{tun}}\le1/2$, use
\begin{equation}\label{eq:recurrent-tuning-app}
 \eta_s=\eta_s^{\mathrm{tun}},\qquad
 \gamma_s=\max\{\gamma_s^{\mathrm{tun}},\underline\gamma_{N+1}\}.
\end{equation}
First occurrences and the finite number of tuning-infeasible occurrences may use any admissible parameters satisfying the exploration floor. Let
$N_j(m)=\sum_{s=1}^m\mathbf1\{z_s=j\}$.

\begin{theorem}[Recurrent-type PC-KDE realization]\label{theo:recurrent-realization-app}
Suppose Assumptions~\ref{ass0}--\ref{ass1}, Assumption~\ref{ass:general-margin}, and Assumption~\ref{ass:recurrent-prototype-app} hold under the augmented filtration
\eqref{eq:descriptor-filtration}. Use PC-KDE and the descriptor-gated prior
\eqref{eq:recurrent-prior-app}, with the exploration floor
\eqref{eq:recurrent-exploration-floor-app}. Then, on every task with $N_s=N\ge1$,
\begin{equation}\label{eq:recurrent-confidence-app}
 \mathbb P\!\left(
 \|\boldsymbol h_s^{\mathrm{rec}}-\boldsymbol v_s\|>\varepsilon_N
 \right)\le\delta_N,
\end{equation}
and
\begin{equation}\label{eq:recurrent-vanish-app}
 \varepsilon_N=\mathcal O\!\left(\frac{(\log N)^{3/2}}{\sqrt N}\right),\qquad
 \mu_N\varepsilon_N\to0,\qquad
 \delta_N(1+\mu_N)\to0,\qquad
 B_N^{\mathrm{rec}}\to0.
\end{equation}
There is a constant $C_{\mathrm{init}}$ independent of $m$ such that
\begin{align}
 \mathrm{Reg}_{1:m}^{\mathrm{PC}}
 \le{}&C_{\mathrm{init}}
 +2\sqrt{nA_\zeta}
 \sum_{j=1}^J\sum_{N=1}^{N_j(m)-1}\sqrt{B_N^{\mathrm{rec}}}\notag\\
 &+Gn\gamma_0
 \sum_{j=1}^J\sum_{r=1}^{N_j(m)}
 \frac1{\log(\mathrm e+r)}.
 \label{eq:recurrent-regret-app}
\end{align}
For fixed $J,n$ and the remaining problem constants,
\begin{equation}\label{eq:recurrent-sublinear-app}
 \mathrm{Reg}_{1:m}^{\mathrm{PC}}=o(m).
\end{equation}
\end{theorem}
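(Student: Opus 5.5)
The plan has three stages: establish the confidence statement \eqref{eq:recurrent-confidence-app} by a matrix/vector concentration argument for the same-descriptor average of PC-KDE summaries; verify the asymptotics \eqref{eq:recurrent-vanish-app} by direct calculation; and feed both into Theorem~\ref{theo:general-margin} and Corollary~\ref{cor1}, applied separately along each descriptor subsequence.

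\textbf{Concentration and direction error.} Fix a task $s$ with $z_s=j$ and $N_s=N\ge1$, and let $i_1<\cdots<i_N$ be the earlier tasks with descriptor $j$. By \eqref{eq:pc-parameter-bias}, each $P_{\mathcal U}\widehat\theta_{i,t}-P_{\mathcal U}\theta_{i,t}$ is a martingale difference with respect to the pre-context filtration augmented by descriptors. Since the tasks are oblivious, we get $\frac1n\sum_t P_{\mathcal U}\theta_{i,t}=P_{\mathcal U}\Theta_i=\rho_i u_j$. Hence
\[
\bar g_s-\bar\rho\,u_j=\frac1{Nn}\sum_{r\le N}\sum_t\bigl(P_{\mathcal U}\widehat\theta_{i_r,t}-P_{\mathcal U}\theta_{i_r,t}\bigr),
\qquad \bar\rho=N^{-1}\sum_r\rho_{i_r}\ge\rho_{\min}.
\]
To bound the increments, on the $r$-th occurrence Lemma~\ref{lem:affine-support} and the exploration floor give $\|H^\dagger\|\le 1/(\gamma\lambda_+)\le\log(\mathrm e+r)/(\gamma_0\lambda_+)$. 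Therefore $\|\widehat\theta^{\mathrm{PC}}\|\le Y\Delta\log(\mathrm e+N)/(\gamma_0\lambda_+)$ for all $r\le N$, and each centered increment is bounded by $M_N$. Next apply a vector Azuma--Hoeffding inequality coordinatewise in an orthonormal basis of $\mathcal U$, with a union bound over $d_{\mathcal U}$ coordinates and two signs. The average of the $Nn$ per-round terms (or, more crudely, the $N$ per-task averages, each also bounded by $M_N$) then deviates by at most $d_N$ with probability at least $1-\delta_N$.

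On this event, the elementary inequality $\|x/\|x\|-y/\|y\|\|\le 2\|x-y\|/\|y\|$ with $y=\bar\rho u_j$ gives $\|h^{\mathrm{rec}}_s-v_s\|\le 2d_N/\rho_{\min}$. Since both vectors have norm at most one, the same distance is also at most $2$, which yields $\varepsilon_N$. The case $\bar g_s=0$ is impossible when $d_N<\rho_{\min}$; otherwise $\varepsilon_N=2$ covers it trivially.

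\textbf{Asymptotics and the regret sum.} The rates in \eqref{eq:recurrent-vanish-app} are direct computations. We have $M_N=\mathcal O(\log N)$ and $\log(1/\delta_N)=\mathcal O(\log N)$, so $\varepsilon_N=\mathcal O((\log N)^{3/2}/\sqrt N)$. Multiplying by $\mu_N=\mathcal O(\log N)$ still tends to zero, and $\delta_N\mu_N\to0$. Theorem~\ref{theo:general-margin}'s argument, applied conditionally with $\varepsilon_N,\delta_N,\mu_N$, gives $\Gamma_s(\pi_s)\le B^{\mathrm{rec}}_N\to0$. For the regret, split each task according to whether the tuning is feasible. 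Corollary~\ref{cor1}'s proof uses $\gamma\ge\eta Y\kappa_{\mathcal U}/\zeta$ only for stability, so enlarging $\gamma$ to the floor keeps \eqref{eq:pc-stability}. Theorem~\ref{theo01} then bounds the regret by $\bar\Gamma/\eta+\psi\eta d_{\mathcal U}Y^2n+\gamma Gn$, and this is at most $2\sqrt{nA_\zeta B_N}+\underline\gamma_{N+1}Gn$ because $\max\{a,b\}\le a+b$. Summing over occurrences per descriptor, the comparator identity $\mathrm{Reg}_s=\mathrm{Reg}_s(\pi_s)$ from the proof of Theorem~\ref{theo:general-margin} gives \eqref{eq:recurrent-regret-app}. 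The first occurrences and the finitely many infeasible ones (finite since $B_N\to0$) contribute at most $JGn$ times a constant, which forms $C_{\mathrm{init}}$.

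\textbf{Sublinearity and main obstacle.} Sublinearity follows because $B_N\to0$ and $1/\log(\mathrm e+r)\to0$, so both inner sums are $o(N_j(m))$, with $\sum_jN_j(m)=m$ and $J$ fixed. I expect the main obstacle to be the concentration step. The increment bound depends on the exploration level of \emph{earlier} occurrences, which is why the floor is indexed by occurrence $r\le N$ and the bound uses $\log(\mathrm e+N)$. One must also check that $\widehat\theta$ is adapted so that Azuma applies across tasks interleaved with other descriptors. This holds since the descriptor sequence is fixed in advance and freshness holds under $\mathcal H^z$.
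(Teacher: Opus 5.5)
Your proposal is correct and follows the paper's proof essentially step by step. The shared steps are:
\begin{itemize}
\item descriptor-wise martingale differences obtained from PC-KDE unbiasedness and obliviousness;
\item the $\log(\mathrm e+r)/(\gamma_0\lambda_+)$ range bound from the exploration floor and the selected-covariance lower bound;
\item coordinatewise Azuma--Hoeffding with a union bound over the $d_{\mathcal U}$ coordinates, followed by the $2d_N/\rho_{\min}$ normalization bound;
\item the per-task split $2\sqrt{nA_\zeta B_N^{\mathrm{rec}}}+Gn\underline\gamma_{N+1}$, with the first and finitely many infeasible occurrences absorbed into $C_{\mathrm{init}}$.
\end{itemize}
The only difference is cosmetic: you also allow per-round rather than per-task increments, which shrinks the concentration radius by $\sqrt n$ but is not needed.
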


Thus historical PC-KDE summaries satisfy the prior-consistency premise of Theorem~\ref{theo:general-margin} in this finite recurrent-type model.

\begin{proof}
Fix a descriptor $j$ and enumerate its completed tasks in occurrence order as
$i_{j,1}<i_{j,2}<\cdots$. Write
\[
 \widehat\Theta_{j,r}=\widehat\Theta_{i_{j,r}},\qquad
 \rho_{j,r}=\rho_{i_{j,r}},\qquad
 D_{j,r}=\boldsymbol P_{\mathcal U}\widehat\Theta_{j,r}
 -\rho_{j,r}u_j.
\]
Because the descriptor sequence and task loss vectors are fixed before the interaction, the tower property and \eqref{eq:pc-parameter-bias} give
\begin{equation}\label{eq:recurrent-mds-app}
 \mathbb E[D_{j,r}\mid
 \mathcal F_{1:i_{j,r}-1},z_{i_{j,r}}]=0.
\end{equation}
Hence the descriptor-selected sequence is a vector martingale-difference sequence. The selected-covariance lower bound \eqref{eq:selected-cov-lower} and
$\|b_{s,t,A_{s,t}}-x_{s,t}\|\le\Delta$ imply, on the $r$-th occurrence,
\begin{equation}\label{eq:recurrent-summary-range-app}
 \|\widehat\theta^{\mathrm{PC}}_{i_{j,r},t}\|
 \le\frac{Y\Delta}{\gamma_{i_{j,r}}\lambda_+}
 \le\frac{Y\Delta}{\lambda_+\gamma_0}\log(\mathrm e+r).
\end{equation}
Since $\|\boldsymbol P_{\mathcal U}\Theta_{i_{j,r}}\|\le R$, averaging over the $n$ rounds yields
\begin{equation}\label{eq:recurrent-difference-range-app}
 \|D_{j,r}\|\le
 R+\frac{Y\Delta}{\lambda_+\gamma_0}\log(\mathrm e+r).
\end{equation}

Choose a fixed orthonormal basis of $\mathcal U$. For the first $N$ completed tasks of type $j$, apply scalar Azuma--Hoeffding to each of the $d_{\mathcal U}$ coordinates of \eqref{eq:recurrent-mds-app}, use \eqref{eq:recurrent-difference-range-app}, and take a union bound. With probability at least $1-\delta_N$,
\begin{align}
 \left\|\frac1N\sum_{r=1}^ND_{j,r}\right\|
 &\le\frac1N\sqrt{2d_{\mathcal U}\log(2d_{\mathcal U}/\delta_N)
 \sum_{r=1}^N M_r^2}\notag\\
 &\le M_N\sqrt{\frac{2d_{\mathcal U}\log(2d_{\mathcal U}/\delta_N)}{N}}
 =d_N.
 \label{eq:recurrent-azuma-app}
\end{align}
Before the $(N+1)$-st occurrence of descriptor $j$, the raw average in
\eqref{eq:recurrent-prior-app} therefore has the form
\[
 \bar g_s=\bar\rho_Nu_j+\bar D_N,
 \qquad
 \bar\rho_N=\frac1N\sum_{r=1}^N\rho_{j,r}\ge\rho_{\min},
 \qquad
 \bar D_N=\frac1N\sum_{r=1}^ND_{j,r},
 \qquad \|\bar D_N\|\le d_N.
\]
If $d_N<\rho_{\min}$, then $\bar g_s\ne0$ and
\begin{equation}\label{eq:recurrent-normalization-app}
 \left\|\frac{\bar g_s}{\|\bar g_s\|}-u_j\right\|
 \le\frac{2d_N}{\rho_{\min}}.
\end{equation}
If $d_N\ge\rho_{\min}$, the distance between two vectors in the unit ball is at most $2$. This proves \eqref{eq:recurrent-confidence-app}.

Since $M_N=\mathcal O(\log N)$ and
$\log(2d_{\mathcal U}/\delta_N)=\mathcal O(\log N)$,
\eqref{eq:recurrent-radius-app} gives
$\varepsilon_N=\mathcal O((\log N)^{3/2}/\sqrt N)$. Hence
$\mu_N\varepsilon_N=\mathcal O((\log N)^{5/2}/\sqrt N)\to0$ and
$\delta_N(1+\mu_N)\to0$. Theorem~\ref{theo:general-margin} then gives
$B_N^{\mathrm{rec}}\to0$.

On any task for which the tuned pair is feasible, apply \eqref{eq:pc-regret-simple} with
$\eta_s=\eta_s^{\mathrm{tun}}$ and the larger exploration value in
\eqref{eq:recurrent-tuning-app}. Since
$\gamma_s-\gamma_s^{\mathrm{tun}}\le\underline\gamma_{N+1}$,
\begin{equation}\label{eq:recurrent-per-task-app}
 \mathrm{Reg}_s(\pi_s)
 \le2\sqrt{nA_\zeta B_N^{\mathrm{rec}}}
 +Gn\underline\gamma_{N+1}.
\end{equation}
Because $B_N^{\mathrm{rec}}\to0$, the tuned exploration also tends to zero. Therefore, only finitely many occurrences of each of the fixed $J$ descriptors can violate
$\gamma_s^{\mathrm{tun}}\le1/2$. Together with the first occurrence of each descriptor, these tasks contribute a constant $C_{\mathrm{init}}$ under the trivial $Gn$ bound. Summing \eqref{eq:recurrent-per-task-app} in descriptor-occurrence order proves \eqref{eq:recurrent-regret-app}. Finally, Ces\`aro summability gives
$\sum_{N<M}\sqrt{B_N^{\mathrm{rec}}}=o(M)$, and
$\sum_{r\le M}1/\log(\mathrm e+r)=o(M)$. Summing over fixed $J$ proves \eqref{eq:recurrent-sublinear-app}.
\end{proof}

\begin{proof}[Proof of Corollary~\ref{cor:pcrw-pc-m}]
Let $a^\star=\pi_s(\mathcal B)$. For every $a\ne a^\star$, Assumption~\ref{ass:meta-margin} and \eqref{eq:pcrw-distance} give
\begin{align}
 \langle b_a-b_{a^\star},h_s\rangle
 &\ge \langle b_a-b_{a^\star},v_s\rangle
 -\|b_a-b_{a^\star}\|\|h_s-v_s\|\notag\\
 &\ge g-\Delta\frac{g}{2\Delta}=\frac g2.
\end{align}
Consequently,
\begin{align}
 -\log r_s(a^\star\mid\mathcal B)
 &=\log\!\left[1+\sum_{a\ne a^\star}
 e^{-\mu_s\langle b_a-b_{a^\star},h_s\rangle}\right]\notag\\
 &\le(k-1)e^{-\mu_sg/2}=\frac{k-1}{(s+1)^2},
\end{align}
where the last equality uses \eqref{eq:meta-temperature}. This proves \eqref{eq:gamma-decay}. Apply Corollary~\ref{cor1} task by task and use
$\sum_{s=2}^m(s+1)^{-1}\le1+\log(m+1)$ to obtain \eqref{eq:pc-cumulative-m}.
\end{proof}

\begin{proof}[Proof of Corollary~\ref{cor:gaussian-transfer}]
Let $g_a=\Theta^T(b_a-\bar b)$. Joint Gaussian regression gives
\begin{equation}
 \mathbb E[h^T(b_a-\bar b)\mid g_a]
 =\frac{h^TC\Theta}{\Theta^TC\Theta}g_a.
\end{equation}
The comparator selects the smallest $g_a$, whose expectation equals
$\nu_k\sqrt{\Theta^TC\Theta}$. Thus
\begin{equation}
 \mathbb E[h^T(b_{\pi_\Theta(\mathcal B)}-\bar b)]
 =\nu_k\frac{h^TC\Theta}{\sqrt{\Theta^TC\Theta}}.
\end{equation}
The Gaussian log moment-generating function is
$\log\mathbb E e^{-\mu h^T(b-\bar b)}=\mu^2h^TCh/2$. Repeating the Jensen step from the preceding proof gives \eqref{eq:gaussian-transfer}. Minimizing the resulting quadratic over $\mu\ge0$ proves \eqref{eq:gaussian-cosine}.
\end{proof}

\subsection{Unknown-Distribution Estimators}

\begin{lemma}[Past-only matrix concentration]\label{lem:matrix-bernstein}
With $\xi_N$ in \eqref{eq:bernstein-radius}, for every round with $N=N_{s,t}\ge1$,
\begin{equation}
 \mathbb P(\|\overline S_{s,t}-S\|>\xi_N)\le\frac{\delta}{T}.
\end{equation}
The corresponding event is $\mathcal H_{s,t}$-measurable.
\end{lemma}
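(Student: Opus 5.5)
The plan is to reduce the statement to a single application of the matrix Bernstein inequality for a sum of $N$ independent, centered, bounded symmetric $d\times d$ matrices. Then I would check that the chosen radius $\xi_N$ is exactly the deviation level at which the Bernstein tail equals $\delta/T$. Fix $(s,t)$ with $N=N_{s,t}\ge1$. Enumerate the $N$ context vectors $\boldsymbol b_{i,j,a}$ with $(i,j)<(s,t)$ as $\boldsymbol b^{(1)},\ldots,\boldsymbol b^{(N)}$, and set $\boldsymbol X_\ell=\boldsymbol b^{(\ell)}\boldsymbol b^{(\ell)T}-\boldsymbol S$. Then $\overline{\boldsymbol S}_{s,t}-\boldsymbol S=N^{-1}\sum_\ell\boldsymbol X_\ell$.

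\textbf{Step 1 (independence).} I would show that these vectors are i.i.d.\ with law $\mathcal D$. By the protocol in Section~\ref{sec2_1}, the set $\mathcal B_{i,j}$ has conditional law $\mathcal D^k$ given $\mathcal H_{i,j}$, and this law does not depend on the history. Induction over the lexicographic order, applied to the joint characteristic function or to products of bounded test functions, then gives that all past context vectors are jointly i.i.d.\ $\mathcal D$. This holds even though actions and loss vectors adapt to them.

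If one prefers to avoid this step, a fallback is available. The sequence $\boldsymbol X_\ell$, ordered in time, is a matrix martingale-difference sequence with respect to the pre-context filtration. Matrix Freedman then gives the same constants, because the predictable variance bound below is deterministic.

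\textbf{Step 2 (range and variance).} Both $\boldsymbol b\boldsymbol b^T$ and $\boldsymbol S$ are positive semidefinite with operator norm at most $L^2$. Hence $\|\boldsymbol X_\ell\|\le L^2$, since the difference of two PSD matrices of norm at most $L^2$ has eigenvalues in $[-L^2,L^2]$. For the variance,
$\mathbb E[\boldsymbol X_\ell^2]\preceq\mathbb E[(\boldsymbol b\boldsymbol b^T)^2]=\mathbb E[\|\boldsymbol b\|^2\boldsymbol b\boldsymbol b^T]\preceq L^2\boldsymbol S$.
Therefore $\|\sum_\ell\mathbb E\boldsymbol X_\ell^2\|\le NL^4=:\sigma^2$.

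\textbf{Step 3 (Bernstein and the radius).} Matrix Bernstein gives
\[
\mathbb P\Bigl(\bigl\|\textstyle\sum_\ell\boldsymbol X_\ell\bigr\|\ge u\Bigr)\le 2d\exp\!\Bigl(-\frac{u^2/2}{\sigma^2+L^2u/3}\Bigr).
\]
Set the right-hand side equal to $\delta/T$, i.e.\ require $u^2/2=\Lambda(NL^4+L^2u/3)$ with $\Lambda=\log(2dT/\delta)$. The positive root of this quadratic is
$u=L^2\bigl[\Lambda/3+\sqrt{\Lambda^2/9+2N\Lambda}\bigr]$.
Dividing by $N$ yields exactly $\xi_N$ from \eqref{eq:bernstein-radius}. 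Monotonicity of the tail in $u$ then gives $\mathbb P(\|\overline{\boldsymbol S}_{s,t}-\boldsymbol S\|>\xi_N)\le\delta/T$.

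\textbf{Measurability.} $N_{s,t}$ is deterministic, and $\overline{\boldsymbol S}_{s,t}$ is a function of contexts from rounds strictly before $(s,t)$. These contexts lie in $\mathcal F_{1:s-1}$ or in $\sigma(\mathcal B_{s,j}:j<t)$, so the event is $\mathcal H_{s,t}$-measurable.

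The main obstacle is Step 1. One must justify cleanly that adaptive, history-dependent actions and adversarial loss vectors do not break the independence of the contexts. I would handle it by the sequential-conditioning induction, and fall back on matrix Freedman if that argument becomes awkward.
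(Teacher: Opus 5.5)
Your proposal is correct and follows the paper's proof. You enumerate the $N$ past contexts, bound $\|\boldsymbol X_\ell\|\le L^2$ and $\|\sum_\ell\mathbb E\boldsymbol X_\ell^2\|\le NL^4$, invert the two-sided matrix Bernstein tail at $\Lambda=\log(2dT/\delta)$ to recover exactly $\xi_N$, and obtain measurability from the use of strictly past contexts only. Your one addition is Step~1, whose sequential-conditioning argument on the pre-context filtration makes explicit the i.i.d.\ property of past contexts under adaptive actions that the paper uses without comment.
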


\begin{proof}
Enumerate the $N$ previously revealed contexts as $b_1,\ldots,b_N$ and set
$X_i=b_ib_i^T-S$. Then $\mathbb E X_i=0$, $\|X_i\|\le L^2$, and
\begin{equation}
 \left\|\sum_{i=1}^N\mathbb E X_i^2\right\|
 \le NL^2\|S\|\le NL^4.
\end{equation}
The two-sided self-adjoint matrix Bernstein inequality \cite{Tropp2012}, inverted at
$\Lambda=\log(2dT/\delta)$, gives the stated radius. Only contexts preceding $(s,t)$ enter the event, proving measurability.
\end{proof}

\begin{proof}[Proof of Theorem~\ref{theoB}]
First, inverse propensity weighting gives the exact pre-context mean
\begin{align}
 \mathbb E_{s,t}^{\mathrm{pre}}[\widehat\theta^{\mathrm{PR}}_{s,t}]
 &=\frac1k\widetilde S_{s,t}^{-1}
 \mathbb E_{\mathcal B}\sum_{a=1}^kb_ab_a^T\theta_{s,t}
 =\widetilde S_{s,t}^{-1}S\theta_{s,t},
\end{align}
which proves \eqref{eq:prme-exact-bias}. Also, since $kp_{s,t}(a\mid\mathcal B)\ge\gamma$ and the safeguard ensures $\widetilde S_{s,t}\succeq\lambda I$,
\begin{equation}\label{eq:prme-global-range}
 |(b_a^0)^T\widehat\theta^{\mathrm{PR}}_{s,t}|
 \le\frac{Y L^2}{\gamma\lambda}=\frac{Y\kappa}{\gamma}.
\end{equation}
Thus the assumed parameter condition makes Lemma~\ref{lem01} applicable with $\zeta=1$ and center zero.

For the conditional second moment,
\begin{align}
 \mathbb E_{s,t}^{\mathrm{pre}}[\widehat\theta^{\mathrm{PR}}_{s,t}
 (\widehat\theta^{\mathrm{PR}}_{s,t})^T]
 &\preceq\frac{Y^2}{k^2}\widetilde S_{s,t}^{-1}
 \mathbb E_{\mathcal B}\sum_a\frac{b_ab_a^T}{p_{s,t}(a\mid\mathcal B)}
 \widetilde S_{s,t}^{-1}\notag\\
 &\preceq\frac{Y^2}{\gamma}
 \widetilde S_{s,t}^{-1}S\widetilde S_{s,t}^{-1}.
 \label{eq:prme-second-matrix}
\end{align}
On the event in Lemma~\ref{lem:matrix-bernstein},
$S\preceq\widehat S_{s,t}$, so clipping is inactive. Let
$A=S^{1/2}\widehat S_{s,t}^{-1}S^{1/2}\preceq I$. Then
\begin{equation}
 \widehat S_{s,t}^{-1}S\widehat S_{s,t}^{-1}
 =S^{-1/2}A^2S^{-1/2}\preceq S^{-1}.
\end{equation}
Consequently, the ghost second moment is at most
\begin{equation}
 \frac{Y^2}{\gamma}
 \operatorname{tr}(M_{s,t}S^{-1})
 =\frac{Y^2}{\gamma}\rho_{s,t}^{\mathrm{eff}}.
\end{equation}
The two Loewner bounds preceding \eqref{eq:effective-design} establish
$\rho_{s,t}^{\mathrm{eff}}\le\chi d$.

For the bias on the same event, put $E_{s,t}=\widehat S_{s,t}-S$. It is positive semidefinite and no larger than $2\xi_{N_{s,t}}I$. Since
$u^{\mathrm{PR}}_{s,t}=-\widehat S_{s,t}^{-1}E_{s,t}\theta_{s,t}$ and
$S\preceq\widehat S_{s,t}$,
\begin{equation}
 \|u^{\mathrm{PR}}_{s,t}\|_S^2
 \le\theta_{s,t}^TE_{s,t}\widehat S_{s,t}^{-1}
 E_{s,t}\theta_{s,t},
\end{equation}
which proves \eqref{eq:prme-bias-certificate}. Jensen's inequality gives
$\|m_w\|_{S^{-1}}\le\sqrt\chi$ for either the $q$-selected distribution or the comparator-selected distribution, because their raw second moments are bounded by both $kS$ and $L^2I$. Hence
\begin{equation}
 |(d_{s,t}^{\pi})^Tu^{\mathrm{PR}}_{s,t}|
 \le4\xi_{N_{s,t}}R\sqrt{\frac\chi\lambda}.
 \label{eq:prme-bias-uniform}
\end{equation}

It remains to account for failure without conditioning on an event involving future rounds. Each event in Lemma~\ref{lem:matrix-bernstein} is pre-context measurable. On its complement, \eqref{eq:prme-global-range} bounds the quadratic score by $(Y\kappa/\gamma)^2$. Moreover,
$\|u^{\mathrm{PR}}_{s,t}\|_S\le LR(\kappa+1)$ and
$\|d_{s,t}^{\pi}\|_{S^{-1}}\le2\sqrt\chi$. For a fixed task, the $n$ individual failure probabilities sum to at most $n\delta/T=\delta/m$, which gives \eqref{eq:failure-remainder}. Apply Lemma~\ref{lem01} with the exact coefficient $\psi(1)$, use $\mathcal E_{s,t}\le G$, and combine the good- and failure-event estimates. This proves \eqref{eq:prme-tight-regret}. Equation~\eqref{eq:effective-design} gives \eqref{eq:prme-simple-regret}.
\end{proof}

\begin{proof}[Proof of Theorem~\ref{theo:prme-core}]
Ignoring the displayed bias and failure terms temporarily, minimize
\begin{equation}
 \frac{\bar\Gamma_s}{\eta}
 +\frac{\psi(1)\eta\chi dY^2n}{\gamma}+\gamma Gn.
\end{equation}
The stationary point is exactly \eqref{eq:prme-tuning}. At this point, each of the three terms equals
\begin{equation}
 \{\psi(1)\chi d\bar\Gamma_sY^2G\}^{1/3}n^{2/3},
\end{equation}
which gives the leading term in \eqref{eq:prme-rate}. Condition \eqref{eq:prme-feasibility} is equivalent to $\eta_sY\kappa\le\gamma_s$ for the choices in \eqref{eq:prme-tuning}. The other stated condition controls the exploration mixture. Restoring the bias and failure terms proves \eqref{eq:prme-rate}.
\end{proof}

\begin{proof}[Proof of \eqref{eq:sum-xi}]
For $u\ge2$, use the convenient upper bound
\begin{equation}
 \xi_{k(u-1)}\le L^2\left[
 \sqrt{\frac{2\Lambda}{k(u-1)}}
 +\frac{2\Lambda}{3k(u-1)}\right].
\end{equation}
Now apply
$\sum_{j=1}^{T-1}j^{-1/2}\le2\sqrt{T-1}$ and
$\sum_{j=1}^{T-1}j^{-1}\le1+\log(T-1)$, and add $\xi_0=L^2$.
\end{proof}

\begin{proof}[Proof of Corollary~\ref{cor:pcrw-prme-m}]
Equation~\eqref{eq:gamma-decay} supplies the deterministic transfer bounds required by
Theorem~\ref{theo:prme-core}. Summing its leading term over tasks gives
\begin{align}
 3\{\psi(1)\chi dY^2G\}^{1/3}n^{2/3}
 \left[(\log k)^{1/3}+(k-1)^{1/3}
 \sum_{s=2}^m(s+1)^{-2/3}\right].
\end{align}
The integral bound
$\sum_{s=2}^m(s+1)^{-2/3}\le3(m+1)^{1/3}$
gives the first line of \eqref{eq:prme-cumulative-m}. In lexicographic task--round order,
$N_{s,t}=k(u-1)$ for $u=n(s-1)+t$, so the taskwise moment-bias sums combine exactly into
$\sum_{u=1}^{mn}\xi_{k(u-1)}$. The failure remainders add linearly. Finally apply
\eqref{eq:sum-xi}. Suppressing fixed problem parameters and logarithms gives
\eqref{eq:prme-cumulative-order}.
\end{proof}

\begin{proof}[Proof of Theorem~\ref{theoC} and Corollary~\ref{cor3}]
The pointwise identity
$\widehat\theta^{\mathrm{LP}}_{s,t}=P_{b_{s,t,A_{s,t}}}\theta_{s,t}$
implies \eqref{eq:lpe-error} by the Pythagorean theorem. It also gives
$\|\widehat\theta^{\mathrm{LP}}_{s,t}\|\le R$ and therefore
$|\langle b^0_a,\widehat\theta^{\mathrm{LP}}_{s,t}\rangle|\le LR=Y_{\mathrm{LP}}$.
The ghost second moment is at most $Y_{\mathrm{LP}}^2$. Lemma~\ref{lem01} then proves \eqref{eq:lpe-regret} and the exact signed expression \eqref{eq:lpe-signed-bias}.

Conditionally on $\mathcal H_{s,t}$, Jensen's inequality and \eqref{eq:lpe-error} give
\begin{align}
 \|u^{\mathrm{LP}}_{s,t}\|
 &\le\mathbb E_{s,t}^{\mathrm{pre}}
 \|(I-P_{b_{s,t,A_{s,t}}})\theta_{s,t}\|\notag\\
 &=\mathbb E_{s,t}^{\mathrm{pre}}
 \sqrt{\|\theta_{s,t}\|^2-
 \ell_{s,t,A_{s,t}}^2/\|b_{s,t,A_{s,t}}\|^2}.
\end{align}
Keeping $\|d_{s,t}^{\pi}\|$ inside the expectation yields \eqref{eq:lpe-bias-tight}. The diameter of the convex hull of the support is $\Delta$, so
$\|d_{s,t}^{\pi}\|\le\Delta$. Cauchy--Schwarz over $t$ proves \eqref{eq:lpe-bias-simple}. A second Cauchy--Schwarz/Jensen step over tasks, followed by $\|\theta_{s,t}\|\le R$, proves \eqref{eq:lpe-average-bias}. The residual is zero exactly when the loss vector lies in the selected one-dimensional span.
\end{proof}

\section{MovieLens Data Preparation and Attribute-Clustering Completion}\label{app:movielens-preprocess}
The MovieLens 100K data contain 943 users, 1,682 movies, 100,000 observed ratings on the integer scale 1--5, and 19 non-exclusive movie-genre indicators. We first construct the sparse rating matrix $\mathbf Y^{\mathrm{ML}}\in\mathbb R^{943\times1682}$, where $Y_{u,i}^{\mathrm{ML}}$ is the observed rating of user $u$ for movie $i$ and unobserved entries are initialized to zero. Let $\mathbf X^{\mathrm{genre}}\in\{0,1\}^{1682\times19}$ denote the genre-indicator matrix.

Movie clusters are defined by genre. Each of the 19 genres forms one cluster, and a movie belongs to every cluster corresponding to a nonzero entry in its genre vector. To obtain a user cluster, zeros in $\mathbf Y^{\mathrm{ML}}$ are temporarily treated as missing values. For every user and genre, we compute the mean of that user's observed ratings over movies carrying the genre. A user is assigned to the genre attaining the largest valid mean. If the user has no valid genre mean, the user is placed in a general cluster.

For every user cluster $c$ and genre $g$, let $\bar y_{c,g}^{\mathrm{ML}}$ be the mean of the available observed ratings contributed by users in cluster $c$ to movies in genre $g$. For an unobserved entry $Y_{u,i}^{\mathrm{ML}}=0$, let $c(u)$ be the user's cluster and let $\mathcal J(i)$ be the set of genres of movie $i$. The candidate imputation values are $\{\bar y_{c(u),g}^{\mathrm{ML}}:g\in\mathcal J(i)\}$ for which the cluster--genre mean exists. If at least one candidate is available, the missing entry is imputed by the maximum candidate. Otherwise, it remains zero and is treated as unavailable by the bandit experiment. 


\section{Supplementary Details for Section \ref{sec6_3}}\label{sec_STS}

\subsection{Background and Problem Statement}

Tensor slices along one mode often share structure. In a hyperspectral image, for example, spectral bands record the same scene at different wavelengths and therefore retain much of the same spatial organization. Figure~\ref{fig_ten1} shows several bands from the KSC data \cite{KSC}. We treat each band as a task and use the similarity between bands as the source of cross-task information.

\begin{figure}[ht]
    \centering
    \includegraphics[width=1\linewidth]{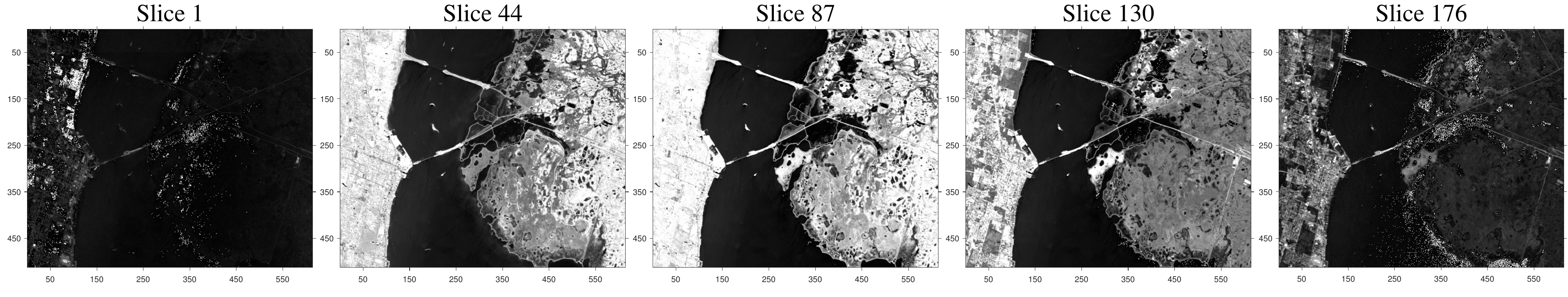}
    \caption{Slice images of the KSC hyperspectral dataset at selected spectral bands.}
    \label{fig_ten1}
\end{figure}

We consider a third-order tensor $\mathcal X \in \mathbb{R}^{N_1 \times N_2 \times N_3}$, where $\mathbf X^{(s)} = \mathcal X(:,:,s)$ is the $s$-th frontal slice. The slices $\{\mathbf X^{(s)}\}_{s=1}^{N_3}$ are assumed to share structural or statistical features along the third mode. We restrict the presentation to third-order tensors. A higher-order extension would require a corresponding definition of slices, feasible samples, and losses.

Set $N=N_1+N_2$ and use context dimension $d=N$. Let $\mathcal I=[N]$ index the available sensors, and let $\mathcal L_s\subset\mathcal I$ be the $L_{\mathrm{samp}}$ sensors selected for slice $s$. Its incidence vector $\mathbf1_{\mathcal L_s}\in\{0,1\}^N$ satisfies $\boldsymbol1_N^T\mathbf1_{\mathcal L_s}=L_{\mathrm{samp}}$. We first compute a rank-$K$ truncated singular value decomposition (SVD):
\[
\mathbf X^{(s)}\approx \mathbf{U}_{1}^{(s)}\mathbf{G}^{(s)}\left( \mathbf{U}_{2}^{(s)} \right) ^T,
\]
where $\mathbf U_1^{(s)}\in\mathbb R^{N_1\times K}$, $\mathbf U_2^{(s)}\in\mathbb R^{N_2\times K}$, and $\mathbf G^{(s)}\in\mathbb R^{K\times K}$.

Partition $\mathcal{L}_{s}$ into row indices for $\mathbf{U}_{1}^{\left( s \right)}$ and $\mathbf{U}_{2}^{\left( s \right)}$:
\[
\mathcal{L}_{s}' = \left\{ x \,\middle|\, x \in \mathcal{L}_{s},\, x \leq N_1 \right\},\quad \mathcal{L}_{s}'' = \left\{ x-N_1 \,\middle|\, x \in \mathcal{L}_{s},\, x > N_1 \right\}.
\]

Define
\[
\begin{aligned}
  &\mathbf{T}_{1}^{\left( s \right)}\left( \mathcal{L} _{s}^{\prime} \right) =\left( \mathbf{U}_{1}^{\left( s \right)}\left( \mathcal{L} _{s}^{\prime} \right) \right) ^T\mathbf{U}_{1}^{\left( s \right)}\left( \mathcal{L} _{s}^{\prime} \right),\\& \mathbf{T}_{2}^{\left( s \right)}\left( \mathcal{L} _{s}^{\prime\prime} \right) =\left( \mathbf{U}_{2}^{\left( s \right)}\left( \mathcal{L} _{s}^{\prime\prime} \right) \right) ^T\mathbf{U}_{2}^{\left( s \right)}\left( \mathcal{L} _{s}^{\prime\prime} \right),
\end{aligned}
\]
where $\mathbf{U}_{1}^{\left( s \right)}\left( \mathcal{L} _{s}^{\prime} \right)$ and $\mathbf{U}_{2}^{\left( s \right)}\left( \mathcal{L} _{s}^{\prime\prime} \right)$ denote the submatrices composed of the rows of $\mathbf{U}_{1}^{(s)}$ and $\mathbf{U}_{2}^{(s)}$ indexed by $\mathcal{L}_{s}'$ and $\mathcal{L}_{s}''$, respectively. The inverse-based MSE below is defined only when both submatrices have full column rank $K$. The cardinality conditions alone do not guarantee this property.

Under additive zero-mean white Gaussian noise with unit variance, the mean squared error (MSE) for slice $\mathbf X^{(s)}$ and sensor set $\mathcal{L}_{s}$ is
\[
\mathrm{MSE}\left( \mathbf X^{(s)},\mathcal{L} _s \right) =\operatorname{tr}\left( \left[ \mathbf{T}_{1}^{\left( s \right)}\left( \mathcal{L} _{s}^{\prime} \right) \right] ^{-1} \right) \cdot \operatorname{tr}\left( \left[ \mathbf{T}_{2}^{\left( s \right)}\left( \mathcal{L} _{s}^{\prime \prime} \right) \right] ^{-1} \right).
\]

The sampling problem for slice $s$ is therefore
\[
\underset{\mathcal L_s\subset\mathcal I}{\min}\ \mathrm{MSE}\left( \mathbf X^{(s)},\mathcal L_s \right) \quad \mathrm{s.t.}\  \left| \mathcal L_s \right|=L_{\mathrm{samp}},\quad \operatorname{rank}\!\left(\mathbf U_1^{(s)}(\mathcal L_s')\right)=\operatorname{rank}\!\left(\mathbf U_2^{(s)}(\mathcal L_s'')\right)=K.
\]
The rank constraints imply $|\mathcal L_s'|,|\mathcal L_s''|\ge K$ and ensure that the two Gram matrices in the MSE are invertible.

\subsection{Algorithm Implementation}
We treat the selected sensor set as the action and the inverse-Gram recovery MSE as bandit feedback. Because this MSE is nonlinear in the incidence vector, the case study uses the ALCB architecture outside the exact assumptions of the linear-loss theorem.

For every slice $s$, compute a rank-$K$ truncated SVD
\[
\mathbf X^{(s)}\approx \mathbf U_1^{(s)}\mathbf G^{(s)}(\mathbf U_2^{(s)})^T.
\]
The candidate distribution is slice specific. From the two factor matrices we compute the FFW derivative vector
\[
\boldsymbol d^{(s)}=
\left(d_1^{(1,s)},\ldots,d_{N_1}^{(1,s)},d_1^{(2,s)},\ldots,d_{N_2}^{(2,s)}\right)^T,
\]
where for $z\in\{1,2\}$ and row $y$,
\[
 d_y^{(z,s)}=
 \frac{N_z\,\boldsymbol p_y^{(z,s)T}
       [\mathbf U_z^{(s)T}\mathbf U_z^{(s)}]
       \boldsymbol p_y^{(z,s)}}
      {\|\mathbf U_z^{(s)T}\mathbf U_z^{(s)}\|_F^2},
\]
and $\boldsymbol p_y^{(z,s)}$ denotes row $y$ of $\mathbf U_z^{(s)}$ written as a column vector. We clip only negligible negative roundoff values and normalize $\boldsymbol d^{(s)}$ to a probability vector $\bar{\boldsymbol d}^{(s)}$.

For every pair $(s,t)$, the $k$ candidate sets $\mathcal L_{s,t,a}$, $a\in[k]$, are sampled without replacement from $\bar{\boldsymbol d}^{(s)}$. A candidate is accepted only if both selected factor submatrices have rank $K$ and the reciprocal condition numbers of the corresponding Gram matrices are at least $10^{-3}$. Candidate generation is repeated until the condition is met or a fixed maximum number of attempts is reached. The entire bank of $m n k$ candidate actions is generated once before the repeated bandit runs and is reused across all 60 repetitions. Hence repeated runs change only the action randomization, not the underlying candidate environment.

Given a selected candidate $\mathcal L_{s,t,A_{s,t}}$, the empirical loss is
\[
 \ell_{s,t,A_{s,t}}
 =\operatorname{tr}\!\left([\mathbf T_1^{(s)}(\mathcal L'_{s,t,A_{s,t}})]^{-1}\right)
  \operatorname{tr}\!\left([\mathbf T_2^{(s)}(\mathcal L''_{s,t,A_{s,t}})]^{-1}\right).
\]
Within each slice, Meta-LinEXP3 uses the absolute-MSE LPE update
\[
 \widehat{\boldsymbol\theta}_{s,t}
 =\frac{\boldsymbol b_{s,t,A_{s,t}}}
        {\|\boldsymbol b_{s,t,A_{s,t}}\|^2}
   \ell_{s,t,A_{s,t}},
\]
where $\boldsymbol b_{s,t,a}=\mathbf1_{\mathcal L_{s,t,a}}$ and $\|\boldsymbol b_{s,t,a}\|^2=L_{\mathrm{samp}}$. This update is used only by the within-task LinEXP3 scores. For the nonlinear KSC objective, the cross-task representation instead uses the centered relative loss summary in \eqref{eq:ksc-meta-summary}. Equivalently,
\[
 \boldsymbol z_s^{\rm meta}
 =\frac1n\left[
 \sum_{t=1}^n\frac{\ell_{s,t,A_{s,t}}\boldsymbol b_{s,t,A_{s,t}}}{L_{\mathrm{samp}}}
 -\bar\ell_s
 \sum_{t=1}^n\frac{\boldsymbol b_{s,t,A_{s,t}}}{L_{\mathrm{samp}}}
 \right].
\]
All candidates have the same cardinality, so subtracting the taskwise MSE offset preserves relative sampling-set information while removing the frequency-like component that can dominate an uncentered absolute-loss summary. Coordinate centering and normalization are performed only after a task is complete. Therefore, no current-task feedback enters that task's fixed prior.

The two comparison baselines are implemented independently for every slice. FFW uses the linearized derivative rule on row-normalized factors, with a rank-safe seed that guarantees $K$ linearly independent rows in each mode before the remaining budget is filled. Greedy-FP uses the published reverse-greedy product-frame-potential rule: it starts from all rows, repeatedly removes the row that gives the smallest product frame potential, and retains at least $K+\alpha$ rows in each mode. We use $\alpha=2$. Baseline MSE is evaluated with the original (not row-normalized) SVD factors.


\begin{algorithm}[t]
\caption{Meta-LinEXP3 for Structured Tensor Sampling}
\begin{algorithmic}[1]
\REQUIRE KSC tensor $\mathcal X$, rank $K$, budget $L_{\mathrm{samp}}$, $k$ candidates, $n$ rounds, $\eta,\gamma$, and the prefix-calibrated $\mu$.
\STATE Decode wrapped samples if present, apply one global positive rescaling, and compute rank-$K$ SVD factors for every slice.
\STATE On slices $1{:}24$, build an independent calibration candidate bank and select $\mu$ from the fixed grid using the first-10-round early-search score derived from the best-observed-MSE trajectory on slices $13{:}24$.
\STATE Build a separate final candidate bank for all slices and verify rank, conditioning, and candidate MSE.
\FOR{each paired evaluation run}
  \FOR{$s=1,\ldots,N_3$}
    \STATE Form the fixed PCRW or Uniform prior from completed centered relative loss summaries. Use $\boldsymbol h_s=\boldsymbol0$ for LinEXP3.
    \FOR{$t=1,\ldots,n$}
      \STATE Select one candidate with the common candidate bank/random variate, observe its recovery MSE, and update the within-task score with absolute-MSE LPE.
    \ENDFOR
    \STATE Store the best-observed MSE trajectory and the completed centered relative loss summary.
  \ENDFOR
\ENDFOR
\STATE Report slices $25{:}176$. Compare PCRW and Uniform with LinEXP3 under the same sequential protocol, and compute one-set FFW and Greedy-FP solutions separately.
\end{algorithmic}
\label{alg3}
\end{algorithm}

\end{document}